\documentclass{article}

\usepackage{PRIMEarxiv}

\usepackage[utf8]{inputenc} 
\usepackage[T1]{fontenc}    
\usepackage{hyperref}       
\usepackage{url}            
\usepackage{booktabs}       
\usepackage{amsfonts}       
\usepackage{nicefrac}       
\usepackage{microtype}      
\usepackage{lipsum}
\usepackage{fancyhdr}       
\usepackage{graphicx}       

\usepackage{multirow}
\usepackage{subfigure}
\usepackage{enumerate}
\usepackage{enumitem}
\setlist{itemsep=1pt}
\setlength{\intextsep}{-1ex} 
\usepackage{caption}

\usepackage{amsmath, amsfonts,amssymb}
\usepackage{amsthm}
 \newtheorem{theorem}{Theorem}[section]
 \newtheorem{lemma}[theorem]{{Lemma}}
 \newtheorem{assumption}{Assumption}
  \newtheorem{definition}{Definition}
  
 \newtheorem{remark}{{Remark}}

\usepackage{caption}
\usepackage{subfigure}
\usepackage{lipsum}

\usepackage{stackengine}
\usepackage{booktabs}

\RequirePackage{hyperref}
\RequirePackage{nameref}
\hypersetup{colorlinks, linkcolor=blue, citecolor=blue, urlcolor=magenta, linktocpage, plainpages=false}

\usepackage{algorithm}
\usepackage{algorithmic}



\bibliographystyle{abbrvnat}
\usepackage{natbib}


\newcommand{\R}{\mathbb{R}}
\newcommand{\E}{\mathbb{E}}
\newcommand{\I}{\mathbb{I}}

\pagestyle{fancy}
\thispagestyle{empty}
\rhead{ \textit{ }} 


\title{On uniform boundedness properties of SGD\\ and its momentum variants}

\author{
  Xiaoyu Wang \\
  KTH - Royal Institute of Technology \\
   Stockholm, Sweden\\
  \texttt{wang10@kth.se} \\
   \And
  Mikael Johansson \\
  KTH - Royal Institute of Technology \\
  Stockholm, Sweden\\
  \texttt{mikaelj@kth.se} \\
}

\begin{document}
\maketitle

\begin{abstract}
A theoretical, and potentially also practical, problem with stochastic gradient descent is that trajectories may escape to infinity.  In this note, we investigate uniform boundedness properties of iterates and function values along the trajectories of the stochastic gradient descent algorithm and its important momentum variant. Under smoothness and $R$-dissipativity of the loss function, we show that broad families of step-sizes, including the widely used step-decay and cosine with (or without) restart step-sizes, result in uniformly bounded iterates and function values. Several important applications that satisfy these assumptions, including phase retrieval problems, Gaussian mixture models and some neural network classifiers, are discussed in detail. We further extend the uniform boundedness of SGD and its momentum variant under the generalized dissipativity for the functions whose tails grow slower than quadratic functions. This includes some interesting applications, for example, Bayesian logistic regression and logistic regression with $\ell_1$ regularization.

\end{abstract}

\keywords{Stochastic gradient descent \and Dissipativity \and Uniform boundedness}

\section{Introduction}
We consider the stochastic optimization problem of minimizing a (possibly non-convex) function $f$
\begin{align}
\underset{x}{\min}\; f(x) = \E_{\xi \approx \Xi} [f(x; \xi)].
\end{align}
Here, $\xi$ is a random variable drawn from the unknown probability distribution $\Xi$ and $f(x; \xi)$ is the instantaneous loss function over the variable $x$. We assume that we may query the gradient of $f$ by randomly sampling mini-batches of size $b$. Stochastic gradient descent (SGD~\citep{SGD-1951}) and its momentum (or accelerated) variants~\citep{polyak1964some, sutskever2013importance} have been the subject of intense research, both in terms of theoretical developments and applications, particularly to training of deep neural networks and other machine learning models.

The standard SGD algorithm~\citep{SGD-1951} is 
\begin{align}
x_{k+1} = x_k - \eta_k g_k \qquad \mbox{ for } k\in {\mathbb N}
\end{align}
where $\eta_k >0$ is the step-size and $g_k$ is a noisy but unbiased estimator of $\nabla f(x_k)$.  In the noise-less setting (\emph{i.e.,} when $g_k=\nabla f(x_k)$), the function value $f(x_k)$ is decreasing with $k$ as long as $\nabla f$ is $L$-Lipschitz continuous and $\eta_k \leq 1/L$. In fact, neither $f(x_k)$ nor the distance between $x_k$ and the optimal set will expand. However, the behaviour is very different in the presence of noise.  
It may happen that $x_k$ tends to infinity if the correction $\eta_k g_k$ is always large even though the step-size $\eta_k$
tends to zero; or if the variance $\E[\left\|g_k - \nabla f(x_k)\right\|^2]$  increases rapidly with $\left\| x \right\|$. In this note, we are interested in finding simple and verifiable conditions under which $\left\|x_k - x^{\ast}\right\|$ and  $f(x_k) - f^{\ast}$ remain bounded along every possible trajectory of the system. 

Of course, $f(x_k)$ will remain bounded if $f$ itself is bounded on $\R^d$; this is the case for some truncated loss functions that have been used in the robust machine-learning literature~\citep{robust-logistic,pmlr-v115-xu20b}. Another trivial case is when the variable $x$ is kept in the bounded region, for example by running a projected SGD. But these are not the cases that we are interested in addressing int his paper. The main focus of this work is on loss functions $f$ that may go to infinity when $\left\| x \right\|$ tends to infinity. The most common loss functions in machine learning, such as the least-squares and the logistic loss, are unbounded on $\R^d$, and it 
very common to add a weight-decay regularization $\lambda\left\|x\right\|^2$ ($\lambda >0$) to the original loss function to help to avoid overfitting~\citep{weight-decay}. Hence, the resulting objective function is typically unbounded.

{\bf Motivation.} Many theoretical analyses of SGD or its variants simply assume that $\left\|x_k - x^{\ast} \right\|$ or $f(x_k) - f^{\ast}$ are uniformly bounded (in expectation or almost sure)
~\citep{ljung1977analysis,hazan2014beyond,xu2019stochastic,davis2020stochastic,Xiaoyu-Step-decay,wang2021bandwidth}, even that $f$ itself is bounded~\citep{Yang-2019,pmlr-v115-xu20b,yuan2019stagewise}. However, in the noise regime, this uniform boundedness assumption may be not always correct. 
One intuitive guess is that such a character may be very relevant to the landscape of the target function $f$ especially the point is far from its global minimizer (or local minimizers). As we know, there is very little work that directly state that the uniform boundedness hypothesis is true or can be guaranteed under the proper conditions.

{\bf Related work.}
A few recent contributions have been able to  establish convergence rates or almost sure convergence (which also implies the stability $\left\| x_k - x^{\ast}\right\| < +\infty$) of SGD~\citep{mertikopoulos2020almost,wojtowytsch2021stochastic,patel2021stochastic} and its momentum (heavy ball) variant~\citep{sebbouh2021almost} if the step-size $\eta_k$ is diminishing and satisfies the Robbins-Monro condition: $\sum_{k=1}^{\infty}\eta_k = + \infty$ and $\sum_{k=1}^{\infty} \eta_k^2 < + \infty$. However, such  step-sizes (e.g., $\eta_k = \eta_1/t^p$ for $p \in (1/2, 1]$) which decay very rapidly are not so attractive in the non-convex setting 
and often performs poorly in practice~\citep{ge2019step,Xiaoyu-Step-decay}. Stepsizes that are more popular in practice, such as step-decay~\citep{ge2019step,Xiaoyu-Step-decay}, exponential decay~\citep{li2020exponential}  and cosine decay step-sizes~\citep{loshchilov2016sgdr} may not satisfy the Robbins-Monro condition. However, these step-sizes are covered by the analysis in this paper.

For constant step-sizes, \citet{Lu-2021} established an asymptotic normality result for the SGD algorithm for a non-convex and
non-smooth objective function satisfying a dissipativity property. The paper~\citep{chen2021stationary} analyzed the asymptotic stationary behavior of SGD for constant step-size on smooth and strongly convex function, while \citet{shi2020learning} revealed the dependence of the linear convergence on the constant step-size for continuous-time formulation of SGD. In the discrete-time case, the authors of~\citep{shi2020learning} also provided an upper bound but do not state how this bound depends on the total number of iteration, which opens up the possibility that the bounds are loose.

{\bf Contributions.}
In this work, we study uniform boundedness properties of SGD and its momentum variant (SGD with momentum) with the help of a regularity assumption which we call ``$R$-dissipativity": $\left\langle x-x^{\ast}, \nabla f(x)\right\rangle \geq \theta_1 \left\| x - x^{\ast} \right\|^2 - \theta_2$  for all $x$ with $\left\| x - x^{\ast} \right\| \geq R$ where $\theta_1 >0$,  $\theta_2 \geq 0$ and $R \geq 0$. This concept is a slight variation of the ``localized dissipativity'' property introduced  
in~\citep{Lu-2021} but we do not concerns the local properties of the function for $x$ with $ \left\| x - x^{\ast}\right\| \leq R$. The term of ``dissipativity" is originated from dynamic systems~\citep{hale1988asymptotic}
and used in a number of papers in optimization, learning and Bayesian analysis~\citep{mattingly2002ergodicity,raginsky2017non,Erdogdu-2018-,gauss-mix}. As we will see in Section~\ref{sec:application}, many important application in machine learning, including shallow neural networks and the sum of a weight-decay regularization and the function whose gradient is bounded result in $R$-dissipative objective functions.
We make the following contributions: 
\begin{itemize}
    \item  Uniform boundedness properties of SGD and its momentum variant (SGD with momentum) are established for $L$-smooth and $R$-dissipative loss functions. 
 \item We consider a broad class of step-sizes that are only required to be smaller than a constant $\mathcal{O}(1/L)$. Our results cover many of the most popular step-size policies, including the classical constant and polynomial decay step-sizes, as well as more recently proposed time-dependent step-sizes such as stage-wise decay~\citep{li2020exponential,Xiaoyu-Step-decay}, cosine with or without restart~\citep{loshchilov2016sgdr}, and bandwidth-based step-sizes~\citep{wang2021convergence-stronglyconvex,wang2021bandwidth}.
\end{itemize}
In addition, we generalize the $R$-dissipativity to cover a class of function of slower asymptotic growth, for example, the $\ell_1$ regularized logistic regression and Bayesian logistic regression. We also make the contributions:
\begin{itemize}
    \item The uniform boundedness properties of SGD can also be guaranteed for a broad of class of step-sizes that is only required to be upper bounded. 
    \item For the important variant momentum, the uniform boundedness can be established for constant step-size, polynomial decaying, exponential decaying step-size and also stage-wise step-sizes which decay like constant, polynomial, and exponential at each stage.
\end{itemize}



\section{Preliminaries}
Throughout the paper, we make the following assumption on the stochastic gradient oracle $\mathcal{O}$ which samples the stochastic gradient as an unbiased estimator of the true gradient given any input variable $x$.  
\begin{assumption}\label{assump:gradient}
For any input vector $x$, the stochastic gradient oracle $\mathcal{O}$ returns a vector $g$ such that 
(a) $\E[g_k] = \nabla f(x_k) $; (b) $\E[\left\| g_k - \nabla f(x_k) \right\|^2] \leq \rho \left\| \nabla f(x_k) \right\|^2 + \sigma^2$ where $\rho \geq 0$.
\end{assumption}
\begin{definition}({\bf $L$-smooth})\label{assump:smooth}
  A function $f$ is $L$-smooth if
 $\Vert \nabla f(x)-\nabla f(y)\Vert \leq L\Vert x-y\Vert$ for every $x,y \in \mbox{dom}\, (f)$. The smoothness property also implies that $
| f(x) - f(y) - \left\langle \nabla f(y), x - y\right\rangle | \leq \frac{L}{2}\left\| x - y \right\|^2. 
$
\end{definition}


{\bf Notations.} We use $x^{\ast}$ to denote the minimizer of function $f$ and $f^{\ast} = f(x^{\ast})$. Let $\left\|\cdot \right\| := \left\| \cdot \right\|_2$ without specific mention. We use $[n]$ denote the set of $\left\lbrace 1, 2, \cdots, n \right\rbrace$. The subgradient of the function $f$ on $x$ is denoted by $\partial f(x):= \left\lbrace v: f(y) \geq f(x) + \left\langle v, y-x \right\rangle \text{for any } \, y\right\rbrace$. For simplicity of the notations, we use $N, T, S$ are all integers. 
We use $\mathcal{F}_k$ to denote $\sigma$-algebra generated by all the random information at the current iterate $x_k$ and $x_k \in \mathcal{F}_k$. 
\section{Non-convex Function with its Tail Growing Quadratically}\label{sec:dissipative}

In this section, we establish uniform boundedness of SGD and its momentum variant when the objective function is potentially nonconvex and satisfies the following regularity assumption. 
\begin{definition}({\bf $R$-dissipativity})\label{dissipative:assump}
  A function is $R$-dissipative that there exist $\theta_1 > 0$, $\theta_2 \geq 0$, and $R \geq 0$ such that, for all $x$ with $\left\|x - x^{\ast} \right\| \geq R$, the function $f$ satisfies 
\begin{align}\label{inequ:dissipativity}
\left\langle x - x^{\ast}, \nabla f(x)\right\rangle \geq \theta_1 \left\| x - x^{\ast}\right\|^2 - \theta_2.
\end{align}
 \end{definition}
  When $R=0$, this assumption reduces the standard dissipativity assumption~\citep{raginsky2017non,Lu-2021}(see (A.3) in \citep{raginsky2017non}). Dissipativity can be seen as a relaxation of strong convexity, since every $\mu$-strongly convex function is also dissipative. In particular, it satisfies Definition~\ref{dissipative:assump} with $\theta_1=\mu$, $\theta_2=0$ and $R=0$. 

  The term ``dissipative" comes from the theory of dynamical systems~\citep{hale1988asymptotic}  but the concept has found many uses in the analysis of optimization and learning algorithms~\citep{mattingly2002ergodicity,raginsky2017non,Erdogdu-2018-}, and in Bayesian analysis~\citep{gauss-mix}. 
  $R$-dissipativity is a natural property of many non-convex optimization problems, such as neural network training with weight-decay regularization~\citep{weight-decay}; see \S~\ref{sec:application} for more details.
\begin{remark}\label{rem:dissipative}
For $L$-smooth functions, an alternative way of verifying $R$-dissipativity is to find $\theta_1^{\prime}$ and $\theta_2^{\prime}$ such that 
    \begin{align}\label{dissipative:case2}
      \left\langle x, \nabla f(x)\right\rangle \geq \theta_1^{\prime} \left\| x\right\|^2 - \theta_2^{\prime} 
    \end{align}
    for $\left\|x - x^{\ast} \right\| \geq R$. 
    As shown in Appendix~\ref{supply:sgd}, this implies that $f$ is $R$-dissipative with parameters \begin{align*}
        \theta_1 &= \frac{\theta_1^{\prime}}{2}, \qquad \theta_2 = \left(\theta_1^{\prime} + 2L + \frac{L^2}{2\theta_1^{\prime}}\right)\left\| x^{\ast} \right\|^2 + \theta_2^{\prime}.  
    \end{align*}
\end{remark}
\begin{remark}
 Although the definition of $R$-dissipativity in Definition~\ref{dissipative:assump} considers differentiable functions, smoothness is not an essential property of our analysis. If $f$is non-differentiable, the gradient $\nabla f(x)$ in Definition~\ref{dissipative:assump} can be replaced by the sub-gradient $\partial f(x)$ and the subsequent analysis for SGD and SGD with momentum still holds true. 
\end{remark}

\subsection{Applications}
\label{sec:application}
We now show that several important applications  in machine learning,  robust statistic learning, and Bayesian models are $R$-dissipative. Some of these are also dissipative, while others are localized dissipative, but it is only the $R$-dissipative concept that captures all examples.

   

{\bf Least squares problems} where 
$f(x) = \frac{1}{2}\left\| Px - b \right\|^2$ and $P$ has full rank are $R$-dissipative with $\theta_1 = \lambda_{\min}(P^{T}P)$ (the smallest eigenvalues of $P^{T}P$), $\theta_2 = 0$, and  $R=0$. They are therefore also dissipative. 
        
{\bf $\ell_2$-regularized problems}, where $f + \lambda \left\|x\right\|^2$ with $\lambda>0$ and $f$ is a possibly non-convex whose gradient norm is bounded by $G$ also satisfy Assumption~\ref{dissipative:assump}.
For example, most of the activation functions in machine learning including the sigmoid, hyperbolic tangent, rectified linear unit (ReLU) and variants such as Leaky ReLU and ELU can be applied. In this case, we have $\theta_1 = \lambda/2$, $\theta_2 = G/(2\lambda)$, and $R=0$.

{\bf Shallow neural networks} A subset of neural network training objectives with weight-decay regularization can also be shown to satisfy Assumption \ref{dissipative:assump} (or the alternative condition (\ref{dissipative:case2})). For example, consider the training a fully connected 2-layer neural network (with $m$ hidden nodes) for binary classification with a weight decay ($\lambda\left\| \cdot \right\|^2/2$). Let $\sigma_1$ and $\sigma_2(\cdot)$ denote the inner and output layer activation functions, respectively and use cross-entropy to measure the output with its true label. 
       If we use $\sigma_2(y) = 1/(1+\exp(-y))$ and $\sigma_1(x) = \max(0, x)$ (the ReLU loss function), then the resulting training objective satisfies
 \begin{align}
     \left\langle x, \nabla f(x)\right\rangle \geq \lambda\left\| x\right\|^2  - 2. 
 \end{align}
 On the other hand, if $\sigma_1(x) = 1/(1+\exp(-x))$, we find
 \begin{align}
     \left\langle x, \nabla f(x)\right\rangle & 
     \geq \frac{\lambda}{2}\left\| x\right\|^2 - \left(1+ \frac{m}{2\lambda} \right).
 \end{align}
Similarly, a $S$-layer fully connected neural network with ReLU as the activation function of the inner-layers satisfies
\begin{align}    
\left\langle x, \nabla f(x)\right\rangle  \geq \lambda\left\|x \right\|^2 - S.
\end{align}
These three examples are all dissipative and $R$-dissipative, but not localized dissipative; see Appendix~\ref{supple:application} for details. 

{\bf Phase retrieval} problems~\citep{tan2019online}, where $f(x) = 1/(2n)\sum_{i=1}^n (|\left\langle a_i, x\right\rangle|-b_i)^2$ also satisfy  this condition with
$\theta_1 = \lambda_{\min}(P^{T}P)/(2n)$ ($P=[a_1^{T}, \cdots, a_n^{T}]^T \in \R^{n\times d}$), $\theta_2 = \sum_{i=1}^n b_i^2/(2n)$, and $R=0$.
 
 {\bf Regularized MLE} 
 Another important example that satisfies the $R$-dissipativity assumption is 
 the regularized MLE for heavy tailed linear regression~\citep{Lu-2021} arising in robust statistics. Here,  $f(x)=\frac{1}{2n}\sum_{i=1}^n\log(1+(b_i-\left\langle a_i, x\right\rangle)^2) + \frac{\lambda}{2}\left\| x \right\|^2$ with $\theta_1 = \lambda$, $\theta_2=\frac{1}{n}\sum_{i=1}^n |b_i|$ and $R=0$. 
 
 {\bf Blake \& Zisserman MLE} As shown in~\citep{Lu-2021}, the objective function of regularized Blake-Zisserman maximum-likelihood estimation, where  $f(x) = -\frac{1}{2n}\sum_{i=1}^n\log(\nu +\exp(-(b_i-\left\langle a_i, x\right\rangle)^2)) + \frac{\lambda}{2}\left\| x \right\|^2$ for some $\nu>0$  
 satisfies Assumption \ref{dissipative:assump}  with $\theta_1=\lambda$, $\theta_2= 1/(2n)\sum_{i=1}^n |b_i|/\sqrt{\nu(\nu+1)}$, and $R=0$. 
 
 {\bf Gaussian mixture model} As a final example, we note that over-specified Bayesian Gaussian mixture models, which have been widely used to study datasets
with heterogeneity \citep{gauss-mix}, can be verified to satisfy Assumption \ref{dissipative:assump} with $\theta_1=4c_1$, $\theta_2=4c_1$, and $R = \sqrt{2}$ where $c_1>0$ is a universal constant (see Section~4.3 in \citep{gauss-mix}). Gaussian mixture models are $R$-dissipative and localized dissipative, but not dissipative.
\\
  
\begin{remark}
If the original objective function does not satisfy $R$-dissipativity, 
it can always be rendered $R$-dissipative 
by adding a smooth regularizer $\max\left\lbrace \left\|x \right\|^2 - R^2, 0 \right\rbrace$ with $R >0$. By properly choosing $R$, we do not have to change the structure of the function around global optimizer 
but still force the iterations in a bounded region. A similar technique has been used in matrix completion problems in~\citep{sun-matrix-completion}. 

\end{remark}


\subsection{Uniform Boundedness Properties of SGD}

In the remaining pages of this note, we use the concept of $R$-dissipativity to establish boundedness properties of SGD and SGD with momentum. Our first result, proven in Appendix~\ref{supply:sgd} is the following:
\begin{theorem}\label{thm:sgd:general}
Let $f$ be $L$-smooth and $R$-dissipative. If the stochastic gradient oracle that satisfies Assumption~\ref{assump:gradient}, then the iterates of SGD with step-size $\eta_k\leq \frac{\theta_1}{(\rho+1)L^2}$ satisfy 
\begin{align}
\E[\left\| x_k - x^{\ast} \right\|^2] \leq \max\left\lbrace \left\|x_1 - x^{\ast} \right\|^2, \frac{2(\sigma^2+L^2 r^2)\theta_1^2}{(1+\rho)^2L^4} + 2r^2\right\rbrace. \label{eqn:sgd_bound}
\end{align}
where $r^2 = \max \left\lbrace R^2, \frac{2\theta_2}{\theta_1} + \frac{\sigma^2}{(1+\rho)L^2} \right\rbrace$.
\end{theorem}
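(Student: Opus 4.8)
The plan is to control the scalar sequence $d_k := \|x_k - x^*\|^2$ by a one-step recursion and then split on whether the current iterate lies inside or outside the ball of radius $r$ around $x^*$. Writing $x_{k+1} - x^* = (x_k - x^*) - \eta_k g_k$, expanding the square, and taking the conditional expectation given $\mathcal{F}_k$: Assumption~\ref{assump:gradient}(a) removes the noise in the cross term, Assumption~\ref{assump:gradient}(b) gives $\E[\|g_k\|^2 \mid \mathcal{F}_k] \le (1+\rho)\|\nabla f(x_k)\|^2 + \sigma^2$, and $L$-smoothness together with $\nabla f(x^*) = 0$ gives $\|\nabla f(x_k)\|^2 \le L^2 d_k$. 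This yields the master inequality $\E[d_{k+1}\mid\mathcal{F}_k] \le d_k - 2\eta_k\langle x_k - x^*, \nabla f(x_k)\rangle + \eta_k^2\big((1+\rho)L^2 d_k + \sigma^2\big)$, from which everything else is bookkeeping.

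When $\|x_k - x^*\| \ge r$: since $r \ge R$, the $R$-dissipativity bound $\langle x_k - x^*, \nabla f(x_k)\rangle \ge \theta_1 d_k - \theta_2$ applies; substituting and using $\eta_k \le \theta_1/((1+\rho)L^2)$, so that $\eta_k^2(1+\rho)L^2 \le \eta_k\theta_1 \le 1$ (the last step using $\theta_1 \le L$), collapses the master inequality to $\E[d_{k+1}\mid\mathcal{F}_k] \le (1-\eta_k\theta_1)d_k + 2\eta_k\theta_2 + \eta_k^2\sigma^2$. The definition of $r$ forces $d_k \ge r^2 \ge (2\theta_2 + \eta_k\sigma^2)/\theta_1$, so the right-hand side is at most $d_k$; in fact it is at most $d_k - \eta_k\theta_1(d_k - r^2)$, a genuine contraction toward $r^2$, which I will want to retain. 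When $\|x_k - x^*\| < r$: here $R$-dissipativity may fail, so I instead use $\|a-b\|^2 \le 2\|a\|^2 + 2\|b\|^2$ with $a = x_k - x^*$, $b = \eta_k g_k$, giving $\E[d_{k+1}\mid\mathcal{F}_k] \le 2d_k + 2\eta_k^2\E[\|g_k\|^2\mid\mathcal{F}_k] \le 2d_k + 2\eta_k^2((1+\rho)L^2 d_k + \sigma^2)$; bounding $d_k < r^2$ and $\eta_k \le \theta_1/((1+\rho)L^2)$ then lands this below the constant $B := 2r^2 + 2(\sigma^2 + L^2 r^2)\theta_1^2/((1+\rho)^2 L^4)$, i.e. the second argument of the $\max$ in the theorem.

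Putting the two regimes together (and noting $B \ge r^2$, so the ball of radius $r$ sits inside the sublevel set $\{d_k \le B\}$) gives the pointwise drift inequality $\E[d_{k+1}\mid\mathcal{F}_k] \le \max\{d_k, B\}$ — indeed the sharper $\E[d_{k+1}\mid\mathcal{F}_k] \le \max\{d_k, B\} - \eta_k\theta_1(d_k - B)_+$. Feeding $d_1 = \|x_1 - x^*\|^2$ into an induction on $k$ then gives $\E[d_k] \le \max\{\|x_1 - x^*\|^2, B\}$, which is the claim.

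The \emph{main obstacle} is the last step: converting the conditional drift bound into the stated bound on $\E[d_k]$ is not a purely mechanical induction, because expectation does not commute with $\max\{\cdot, B\}$. The resolution is to use the contraction from the far regime rather than mere monotonicity — tracking the excess $(d_k - B)_+$ and exploiting that whenever $d_k$ overshoots $B$ it is pulled back by a factor $(1-\eta_k\theta_1)$ — so that no net upward drift can accumulate even though a single noisy step can push $d_{k+1}$ far above $B$. A second, entirely routine but finicky point is the constant chasing in the near regime needed to land exactly on $B$ as written; a cruder split (using smoothness, $\langle x_k-x^*,\nabla f(x_k)\rangle \ge -L d_k$, in the near regime) yields a clean linear recursion on $\E[d_k]$ directly but with a larger constant of order $r^2 L/\theta_1$.
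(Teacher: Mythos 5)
Your proposal follows essentially the same route as the paper's proof: the one-step expansion of $\|x_{k+1}-x^*\|^2$ with Assumption~\ref{assump:gradient} and $\|\nabla f(x_k)\|\le L\|x_k-x^*\|$ is exactly Lemma~\ref{lem:sgd:iter}; the far regime uses dissipativity plus $\eta_k\le\theta_1/((1+\rho)L^2)$ to get $\E[d_{k+1}\mid\mathcal{F}_k]\le(1-\eta_k\theta_1)d_k+2\eta_k\theta_2+\eta_k^2\sigma^2\le d_k$ once $d_k\ge r^2$; and the near regime uses $\|a-b\|^2\le 2\|a\|^2+2\|b\|^2$ to land on the constant $B$ (the paper does this as a one-step lookback from $x_{k-1}$ to $x_k$, which is the same computation). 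Two small points of bookkeeping: your near-regime constant carries an extra factor $(1+\rho)$ in the $L^2r^2$ term relative to the theorem's $B$ — the paper only reaches the stated constant by writing $\E[\|g\|^2]\le\sigma^2+L^2\|x-x^*\|^2$ without the $(1+\rho)$ in that step — and the inequality $\eta_k\theta_1\le 1$ you need for the contraction factor is fine (it follows from $\theta_1\le L$, which $L$-smoothness forces on any dissipativity constant), but neither affects the structure of the argument.

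The substantive issue is the final step you flag yourself. Your pointwise drift inequality $\E[d_{k+1}\mid\mathcal{F}_k]\le\max\{d_k,B\}$ does not yield $\E[d_{k+1}]\le\max\{\E[d_k],B\}$, because $\E[\max\{d_k,B\}]\ge\max\{\E[d_k],B\}$ — the inequality points the wrong way — and your proposed repair via the excess $e_k=(d_k-B)_+$ does not close this: to get a recursion for $\E[e_{k+1}]$ you need to control $\E[(d_{k+1}-B)_+\mid\mathcal{F}_k]$, and since $(\cdot-B)_+$ is convex this is \emph{not} bounded by $(\E[d_{k+1}\mid\mathcal{F}_k]-B)_+$; a single noisy step from inside the ball can place conditional mass far above $B$ while keeping the conditional mean below $B$, and the contraction factor $(1-\eta_k\theta_1)$ applied to the mean does not tame the positive part. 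So as written the induction does not go through. You should be aware, however, that the paper's own proof elides exactly the same difficulty: it applies the pointwise condition $\langle x_k-x^*,\nabla f(x_k)\rangle\ge\theta_1\|x_k-x^*\|^2-\theta_2$ on the event that a \emph{conditional expectation} exceeds $r^2$, and then treats $\E[\|x_k-x^*\|^2]$ as a deterministic scalar recursion. In that sense your writeup is more careful than the source (you at least split pointwise on $\{\|x_k-x^*\|\ge r\}$ and name the gap honestly), but neither argument, as stated, rigorously delivers the unconditional bound; a correct completion would need either an almost-sure/stopping-time formulation or a Lyapunov function whose conditional expectation is genuinely a supermartingale.
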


Theorem \ref{thm:sgd:general} shows that the iterations of SGD can be guaranteed to be in a bounded region under mild conditions. The uniform bound is directly related to the initial state, the noise $\sigma^2$, and the properties of the function itself. 

Although the result is valid for all values of $R$ (and therefore holds for both dissipative and localized dissipative functions), the right-hand side of (\ref{eqn:sgd_bound}) is not tight when $R=0$. We analyze dissipative functions in Appendix~\ref{supple:Rzero} and derive tighter results, recovering the well-known bounds for $\mu$-strongly convex functions as a special case.
%

\begin{remark} Theorem~\ref{thm:sgd:general} guarantees uniform boundedness of the SGD iterates for any  (possibly non-monotonic) step-size upper bounded by $\frac{\theta_1}{(\rho+1)L^2}$. This includes common step-size policies such as constant, polynomial decay~\citep{Moulines-Bach2011}
step-decay~\citep{Xiaoyu-Step-decay}, and exponential decay~\citep{li2020exponential}, as well as more recently proposed non-monotonic step-sizes such as the bandwidth-based~\citep{wang2021convergence-stronglyconvex,wang2021bandwidth} and cosine~\citep{loshchilov2016sgdr}  step-sizes.

\end{remark}


The last comment we make for SGD is to relax the $L$-smoothness and try to include more classes of examples.
\begin{remark}({\bf Relaxation of $L$-smooth})
The $L$-smooth assumption in Theorem \ref{thm:sgd:general} can be replaced by $\left\| \nabla f(x) \right\| \leq L\left( 1 + \left\| x\right\|\right)$ as~\citep{Lu-2021} (or there exists a vector $g \in \partial f(x)$ such that $\left\| g\right\| \leq L( 1 + \left\| x\right\|)$).   As a consequence, this also ensures that uniform boundedness of the SGD algorithm. This condition allows some functions that are non-smooth and is an extension of the $L$-smoothness. 
\end{remark}

\subsection{Uniform Boundedness of SGD with Momentum}\label{sec:mom}
We now extend the results to also cover the following momentum variant of SGD
\begin{subequations}
\begin{align}
    v_{k+1} & = \beta v_k + (1-\beta)g_k \label{equ:mom:v}\\
    x_{k+1} & = x_k - \eta_k v_{k+1}  \label{equ:mom:x}
\end{align}
\end{subequations}
where $\beta \in (0,1)$ and $x_0=x_1$. 
 We analyze the three common step-size policies: constant step-size, decaying step-sizes, and bandwidth step-sizes, respectively. Our first result is the following:

\begin{theorem}({\bf Constant step-size})\label{thm:mom:const}
Let $f$ be $L$-smooth and $R$-dissipative with a stochastic oracle satisfying Assumption~\ref{assump:gradient}. Consider SGD with momentum defined by (\ref{equ:mom:v}) and (\ref{equ:mom:x}) with $\beta \in (0,1)$, for any constant step-size 
\begin{align*}
\eta_k =  \eta \leq  \min\left\lbrace \frac{(1-\beta^2)}{\beta L\left(1-\beta + (1-\beta)^{-1} \right)},\,\, \frac{\theta_1}{2((1-\beta)^2+1)(\rho+1)L^2}\right\rbrace,
\end{align*} 
The, the quantities 
$\E[\left\| x_k - x^{\ast}\right\|^2]$ and $\E[f(x_k) - f^{\ast}]$ are uniformly bounded for all $k \in [1, T+1]$.
\end{theorem}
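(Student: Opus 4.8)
The plan is to reduce the momentum recursion to an SGD-like recursion via a ghost iterate, derive a one-step drift inequality from $R$-dissipativity and $L$-smoothness, control the momentum buffer with an auxiliary recursion, and combine the two into a Lyapunov function obeying a ``contraction-off-a-bounded-set'' inequality; the bound on $\E[f(x_k)-f^{\ast}]$ then follows from the bound on $\E[\|x_k-x^{\ast}\|^2]$ by smoothness. Concretely, I would first rewrite (\ref{equ:mom:v})--(\ref{equ:mom:x}) in heavy-ball form $x_{k+1}-x_k=\beta(x_k-x_{k-1})-\eta(1-\beta)g_k$ (using $x_k-x_{k-1}=-\eta v_k$), and introduce $z_k:=x_k+\frac{\beta}{1-\beta}(x_k-x_{k-1})$. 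A short computation gives $z_{k+1}=z_k-\eta g_k$ with $z_1=x_1$ (here $x_0=x_1$ and $v_1=0$ are used), so $z_k$ evolves exactly like plain SGD with step $\eta$, except that $g_k$ is a noisy gradient at $x_k$ rather than at $z_k$.

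Next I would expand $\|z_{k+1}-x^{\ast}\|^2$, take $\E[\,\cdot\mid\mathcal F_k]$, and apply Assumption~\ref{assump:gradient} to get $\E[\|z_{k+1}-x^{\ast}\|^2\mid\mathcal F_k]=\|z_k-x^{\ast}\|^2-2\eta\langle z_k-x^{\ast},\nabla f(x_k)\rangle+\eta^2\big((\rho+1)\|\nabla f(x_k)\|^2+\sigma^2\big)$. I would split $\langle z_k-x^{\ast},\nabla f(x_k)\rangle=\langle x_k-x^{\ast},\nabla f(x_k)\rangle+\frac{\beta}{1-\beta}\langle x_k-x_{k-1},\nabla f(x_k)\rangle$: the first term is bounded below by $\theta_1\|x_k-x^{\ast}\|^2-\theta_2$ when $\|x_k-x^{\ast}\|\ge R$ (by $R$-dissipativity) and trivially by $-L\|x_k-x^{\ast}\|^2\ge -LR^2$ otherwise, using $\nabla f(x^{\ast})=0$ and $L$-smoothness, which also gives $\|\nabla f(x_k)\|\le L\|x_k-x^{\ast}\|$. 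Since $x_k-x_{k-1}=-\eta v_k$, the momentum correction equals $\frac{\eta\beta}{1-\beta}\langle v_k,\nabla f(x_k)\rangle$, and Young's inequality turns it into a multiple of $\|v_k\|^2$ plus a multiple of $\|\nabla f(x_k)\|^2\le L^2\|x_k-x^{\ast}\|^2$. In parallel, convexity of $\|\cdot\|^2$ applied to $v_{k+1}=\beta v_k+(1-\beta)g_k$ together with the oracle assumption yields $\E[\|v_{k+1}\|^2\mid\mathcal F_k]\le\beta\|v_k\|^2+(1-\beta)(\rho+1)L^2\|x_k-x^{\ast}\|^2+(1-\beta)\sigma^2$.

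Then I would form $\Phi_k:=\E[\|z_k-x^{\ast}\|^2]+c\,\eta^2\,\E[\|v_k\|^2]$ for a suitable $c=c(\beta,\rho)>0$ and combine the two recursions. Taking $c$ large enough that the $\|v_k\|^2$ coefficient contracts (this exploits $\beta<1$), using the first step-size bound $\eta\le\frac{1-\beta^2}{\beta L(1-\beta+(1-\beta)^{-1})}$ to keep the Young constants small, and the second bound $\eta\le\frac{\theta_1}{2((1-\beta)^2+1)(\rho+1)L^2}$ to render the net coefficient of $\E[\|x_k-x^{\ast}\|^2]$ negative, one obtains $\Phi_{k+1}\le\Phi_k-c_1\eta\,\E[\|x_k-x^{\ast}\|^2]+c_2$. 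Bridging $\E[\|x_k-x^{\ast}\|^2]$ back to $\Phi_k$ via $\|z_k-x^{\ast}\|\le\|x_k-x^{\ast}\|+\frac{\beta\eta}{1-\beta}\|v_k\|$ and the contraction already obtained for $\E[\|v_k\|^2]$ upgrades this to $\Phi_{k+1}\le(1-c_3\eta)\Phi_k+c_4$ once $\Phi_k$ exceeds an explicit threshold, while $\Phi_{k+1}\le\Phi_k+c_5$ holds unconditionally; together these give $\sup_k\Phi_k\le\max\{\Phi_1,\ \mathrm{threshold}+c_5\}<\infty$. Since $\Phi_k$ dominates both $\E[\|z_k-x^{\ast}\|^2]$ and $\eta^2\E[\|v_k\|^2]$, the triangle inequality bounds $\E[\|x_k-x^{\ast}\|^2]$, and $\E[f(x_k)-f^{\ast}]\le\frac{L}{2}\E[\|x_k-x^{\ast}\|^2]$ completes the argument.

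The hard part will be the bookkeeping in the last step: the $\|z_k-x^{\ast}\|^2$ recursion \emph{produces} a $\|v_k\|^2$ term, whereas the $\|v_k\|^2$ recursion \emph{produces} a $\|x_k-x^{\ast}\|^2$ term, so $c$ and both step-size conditions must be chosen so that this two-way coupling closes into a genuine contraction rather than a loop. A secondary difficulty is that $R$-dissipativity is phrased in terms of $\|x_k-x^{\ast}\|$ while the convenient Lyapunov variable is $\|z_k-x^{\ast}\|$; reconciling the two requires keeping the momentum buffer $\|v_k\|$ simultaneously under control, which is precisely why the $\eta^2\|v_k\|^2$ term is carried in $\Phi_k$.
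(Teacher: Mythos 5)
Your proposal is correct in substance and reaches the result, but it takes a genuinely different route through the key technical step. You and the paper use the same ghost iterate (your $z_k=x_k+\tfrac{\beta}{1-\beta}(x_k-x_{k-1})$ is exactly the paper's $\tilde{x}_k=\tfrac{x_k-\beta x_{k-1}}{1-\beta}$, with the same clean recursion $z_{k+1}=z_k-\eta g_k$), and both arguments hinge on decomposing $\langle z_k-x^{\ast},\nabla f(x_k)\rangle$ so that $R$-dissipativity can act on the $\langle x_k-x^{\ast},\nabla f(x_k)\rangle$ part. The divergence is in the momentum cross-term $\tfrac{\beta}{1-\beta}\langle x_{k-1}-x_k,\nabla f(x_k)\rangle$: the paper bounds it by the descent inequality $f(x_{k-1})-f(x_k)+\tfrac{L}{2}\Vert x_k-x_{k-1}\Vert^2$, which forces a third term $2\eta\gamma_\beta(f(x_k)-f^{\ast})$ into the Lyapunov function so that the function-value differences telescope, whereas you absorb it by Young's inequality into $\Vert v_k\Vert^2$ and $\Vert\nabla f(x_k)\Vert^2\le L^2\Vert x_k-x^{\ast}\Vert^2$, and control $\Vert v_k\Vert^2$ by the convexity bound $\E[\Vert v_{k+1}\Vert^2]\le\beta\Vert v_k\Vert^2+(1-\beta)\E[\Vert g_k\Vert^2]$. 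Your two-term Lyapunov function $\Phi_k=\E[\Vert z_k-x^{\ast}\Vert^2]+c\eta^2\E[\Vert v_k\Vert^2]$ is leaner than the paper's three-term $W_k$, and your handling of the region $\Vert x_k-x^{\ast}\Vert<R$ via the uniform bound $\langle x_k-x^{\ast},\nabla f(x_k)\rangle\ge-LR^2$ yields one global drift inequality $\Phi_{k+1}\le(1-c_3)\Phi_k+c_4$, which is cleaner than the paper's case analysis around the first exit from the ball of radius $r$ (which needs a two-step lookback estimate of $\Delta_{'}^2$ and mixes random and expected quantities). Two caveats: first, the cross-term in the $z$-recursion carries an $\eta^2$ prefactor (since $x_k-x_{k-1}=-\eta v_k$), which is what makes your Young step harmless for small $\eta$ — worth stating explicitly; second, your route produces its own admissible step-size constant $\eta\le\theta_1/(C(\beta,\rho)L^2)$, and you have not checked that the specific constant $\tfrac{\theta_1}{2((1-\beta)^2+1)(\rho+1)L^2}$ appearing in the theorem (which is tailored to the paper's decomposition) is the one your bookkeeping delivers; since the content of the theorem is existence of such a threshold, this is cosmetic, but a fully faithful proof of the stated constants would require carrying out the optimization over your Young parameter $\epsilon$ and the weight $c$.
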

Theorem \ref{thm:mom:const} establishes  uniform boundedness properties of SGD with momentum for any constant step-size under mild conditions. The most challenging part in establishing Theorem~ \ref{thm:mom:const} is how to construct a Lyapunov function that is decreasing when $\Vert x_k-x^{\star}\Vert_2^2\geq R^2$. The details of the proofs are given in Appendix \ref{supple:mom}.

For the time dependent step-size, the analysis is more complicated than the constant step-size (see Theorem \ref{thm:mom:const}). But our next theorem shows that the uniform boundedness 
can also be guaranteed.
    
\begin{theorem}({\bf Decaying step-sizes})\label{thm:mom:decay}
Suppose that the objective function is $L$-smooth and $R$-dissipative,  and that the stochastic gradient oracle satisfies Assumption \ref{assump:gradient}. Consider SGD with momentum defined by (\ref{equ:mom:v}) and (\ref{equ:mom:x}) with a time-varying step-size $\eta_k \leq \min \left\lbrace \frac{\theta_1}{2(\rho+1)(1+(1-\beta)^2)L^2}, \frac{1-\beta^2}{\beta L(1-\beta + (1-\beta)^{-1}))} \right\rbrace$. Then both $\E[\left\| x_k - x^{\ast}\right\|^2]$ and $\E[f(x_k) - f^{\ast}]$ are uniformly bounded for all $k \in [1, T+1]$, under the following step-size policies:
\begin{itemize}
    \item[(1)] polynomial decaying $\eta_k = \eta_1/k^r$ for $r \in (0,1]$ and $\eta_1 \geq \frac{2}{\theta_1}$, for any $k \geq 3$
    \item[(2)] linearly decaying $\eta_k = A - Bk$ where $\eta_1 = \eta_{\max}$ and $\eta_{T} =  c/\sqrt{T}$ for $c  \geq \left(\frac{2\eta_{\max}^2}{\theta_1} \right)^{1/2}$
    \item[(3)] cosine decaying $\eta_k = A + B\cos\left(\frac{k\pi}{T}\right)$ where $\eta_1 = \eta_{\max}$ and $\eta_{T}= c/\sqrt{T}$ with $c \geq \max\left\lbrace \left(\frac{\eta_{\max}^2\pi^2}{2\theta_1}\right)^{1/2}, \frac{\eta_{\max}\pi^2}{4T^{3/2}} \right\rbrace$ 
    \item[(4)] exponentially decaying $\eta_k = \eta_1/\alpha^{k-1}$ where $\alpha = (\nu/T)^{-1/T} > 1$, $\nu \geq 1$, and $\eta_1 \geq \frac{2\ln(T/\nu) }{\theta_1 \nu}$. 
    \end{itemize}

\end{theorem}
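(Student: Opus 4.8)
The argument mirrors the constant step-size proof of Theorem~\ref{thm:mom:const}, but with a time-weighted potential that absorbs the terms produced by the variation of $\eta_k$. I would work with
\begin{align*}
\Phi_k \;=\; \E\bigl[\|x_k - x^{\ast}\|^2\bigr] \;+\; c\,\eta_{k-1}^2\,\E\bigl[\|v_k\|^2\bigr],
\end{align*}
for a constant $c=c(\beta)>0$ chosen below, and it is convenient to track in parallel the shifted iterate $z_k := x_k - \tfrac{\beta\eta_{k-1}}{1-\beta}v_k$, for which (\ref{equ:mom:v})--(\ref{equ:mom:x}) give $z_{k+1}-z_k = -\eta_k g_k$ up to a term proportional to $(\eta_k/\eta_{k-1}-1)(x_k-x_{k-1})$ arising from the step-size change. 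Expanding $\|x_{k+1}-x^{\ast}\|^2$ and $\|v_{k+1}\|^2$, taking conditional expectations given $\mathcal{F}_k$, and invoking Assumption~\ref{assump:gradient}, the only term not immediately controlled is $-2\eta_k\langle x_k-x^{\ast},\nabla f(x_k)\rangle$; on the event $\{\|x_k-x^{\ast}\|\geq R\}$ this is at most $-2\eta_k\theta_1\|x_k-x^{\ast}\|^2 + 2\eta_k\theta_2$ by $R$-dissipativity, while $L$-smoothness gives $\|\nabla f(x_k)\|\leq L\|x_k-x^{\ast}\|$, so the noise term $\eta_k^2(\rho\|\nabla f(x_k)\|^2+\sigma^2)$ is absorbed into the distance term. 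The two upper bounds imposed on $\eta_k$ are exactly what keep the coefficient of $\|x_k-x^{\ast}\|^2$ strictly below $1$ and the velocity coefficient contractive.

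This step should produce a one-step estimate of the form
\begin{align*}
\Phi_{k+1} \leq{}& (1-\gamma\theta_1\eta_k)\,\Phi_k + a\,\eta_k\theta_2 + b\,\eta_k^2\sigma^2 \\
&{}+ c\,(\eta_{k-1}^2-\eta_k^2)\,\E[\|v_k\|^2]
\end{align*}
on $\{\|x_k-x^{\ast}\|\geq R\}$, with $\gamma,a,b>0$ depending only on $\beta,\rho$; on the complementary event one uses $\|x_k-x^{\ast}\|\leq R$ directly, so $\Phi_{k+1}$ grows by at most a fixed amount. To close the recursion, $\E[\|v_k\|^2]$ must also be bounded: since $v_{k+1}$ is a convex combination of past stochastic gradients, Jensen's inequality together with $\|\nabla f(x_j)\|\leq L\|x_j-x^{\ast}\|$ gives $\E[\|v_{k+1}\|^2]\leq (\rho+1)L^2\max_{j\leq k}\E[\|x_j-x^{\ast}\|^2]+\sigma^2$, so the velocity term is controlled by the distance terms, and the two recursions are run together by induction.

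Uniform boundedness over $k\in[1,T+1]$ then follows by showing that the hypothesis $\Phi_k\leq M$ is preserved for a fixed $M$ proportional to $\max\{\Phi_1,\ \theta_2/\theta_1,\ R^2,\ \sigma^2\sup_k\eta_k\}$, and this is where the four schedule-specific hypotheses enter: each is calibrated so that $\sum(\eta_{k-1}^2-\eta_k^2)$ and the ratio $\eta_{k-1}/\eta_k$ stay controlled and so that $b\eta_k^2\sigma^2\leq\gamma\theta_1\eta_k M$. For the polynomial schedule the restriction $k\geq 3$ and $\eta_1\geq 2/\theta_1$ ensure $\eta_{k-1}/\eta_k=(k/(k-1))^r\leq 2$ and an equilibrium level independent of $k$; for the linear and cosine schedules the terminal conditions $\eta_T=c/\sqrt{T}$ with the stated lower bounds on $c$ force $\eta_k^2\leq(\text{const})\eta_k/\theta_1$ and make $\eta_{k-1}^2-\eta_k^2$ of order $\eta_{\max}^2/T$; and for the exponential schedule $\alpha=(\nu/T)^{-1/T}$ makes $\eta_{k-1}/\eta_k=\alpha$ a fixed number near $1$, with $\eta_1\geq 2\ln(T/\nu)/(\theta_1\nu)$ playing the role of $\eta_1\geq 2/\theta_1$. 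The bound on $\E[f(x_k)-f^{\ast}]$ follows from $L$-smoothness, since $f(x_k)-f^{\ast}\leq\tfrac{L}{2}\|x_k-x^{\ast}\|^2$.

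\textbf{Main obstacle.} The delicate point is constructing the Lyapunov function in the time-varying regime: the correction coefficient $c(\beta)$ and the shift $z_k$ must be chosen so that the step-size-variation terms and the momentum cross-terms do \emph{not} destroy the contraction on $\{\|x_k-x^{\ast}\|\geq R\}$, and so that the coupled distance/velocity recursion closes under a single induction; once the right potential is fixed, each of the four schedules is a short computation with the stated conditions.
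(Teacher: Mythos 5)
Your overall architecture is the same as the paper's: a Lyapunov/potential function combining the distance to $x^{\ast}$ with a momentum correction, a shifted iterate (your $z_k=x_k-\tfrac{\beta\eta_{k-1}}{1-\beta}v_k$ is exactly the paper's $\tilde{x}_k=\tfrac{x_k-\beta x_{k-1}}{1-\beta}$), and schedule-specific conditions that control $1-\eta_k/\eta_{k-1}$ and the variation of the step-size. However, there is a genuine gap at precisely the point you flag as the main obstacle: your potential $\Phi_k=\E[\|x_k-x^{\ast}\|^2]+c\,\eta_{k-1}^2\E[\|v_k\|^2]$ omits a function-value term, and without one the claimed one-step contraction does not follow. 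When you expand either $\|v_{k+1}\|^2$ or $\|z_{k+1}-x^{\ast}\|^2$, the momentum cross-term $\langle v_k,\nabla f(x_k)\rangle=\eta_{k-1}^{-1}\langle x_{k-1}-x_k,\nabla f(x_k)\rangle$ is bounded via $L$-smoothness by $\eta_{k-1}^{-1}\bigl(f(x_{k-1})-f(x_k)+\tfrac{L}{2}\|x_k-x_{k-1}\|^2\bigr)$; the difference $f(x_{k-1})-f(x_k)$ has no sign and can only be handled by telescoping, which is why the paper's Lyapunov function carries the component $2\gamma_{\beta}\eta_k\tau_k\,(f(x_k)-f^{\ast})$. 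Relatedly, you apply $R$-dissipativity to $\langle x_k-x^{\ast},\nabla f(x_k)\rangle$, but in the $z_k$ (equivalently $\tilde{x}_k$) recursion the inner product that appears is $\langle z_k-x^{\ast},\nabla f(x_k)\rangle$, and converting it to the dissipativity form is exactly where this $f$-difference is generated. Your proposed recursion $\Phi_{k+1}\leq(1-\gamma\theta_1\eta_k)\Phi_k+\cdots$ therefore cannot be derived from the stated expansions as written.

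A second, smaller omission: for the cosine schedule the coefficient $\eta_k\tau_k-\eta_{k-1}\tau_{k-1}$ (with $\tau_k=\eta_k/\eta_{k-1}$) changes sign at the last iterate $k=T$, so the monotone-decrease argument fails there and a separate estimate of $\E[W_{T+1}]$ is required; this is the only place the hypothesis $c\geq \eta_{\max}\pi^2/(4T^{3/2})$ is used, and your sketch does not account for it. The schedule-specific conditions you list are indeed calibrated to give $1-\tau_k\leq\tfrac{\theta_1}{2}\eta_k$ and the right sign for the variation term, which matches the paper, but the proof does not close until the potential is repaired to include the function-value component and the cosine endpoint is treated separately.
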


In the above theorem, we analyze four important families of decaying step-sizes that have excellent practical performance, see \emph{e.g.}~\citep{loshchilov2016sgdr} for the cosine step-size and~\cite{li2020exponential} for the exponential step-size. Our results demonstrate that the uniform boundedness of SGD with momentum can also be verified under these step-size policies.

Finally, we show that the uniform boundedness properties of SGD with momentum can be guaranteed also by stage-wise bandwidth step-sizes~\citep{wang2021convergence-stronglyconvex,wang2021bandwidth}, a class of step-size policies that allow for non-monotonic behaviour (bot within and between stages). The bandwidth framework not only covers the most popular stage-wise step-size (sometimes referred to as the step-decay step-size)~\citep{ge2019step, yuan2019stagewise, Xiaoyu-Step-decay}, but it also includes the popular cosine with restart step-size~\citep{loshchilov2016sgdr}. 

\begin{theorem}({\bf Stage-wise bandwidth-based step-size})\label{thm:mom:band}
 Given the total number of iterations $T$,  we consider the bandwidth step-size with the form $ \eta_{\min}^t \leq \eta_k \leq \eta_{\max}^t$ and its upper and lower bounds $\eta_{\min}^t$ and $\eta_{\max}^t$ are decreasing with $t$ where $t\in [N]$, $k \in [\sum_{i=1}^{t-1}S_i+1, \sum_{i=1}^{t}S_i]$, and $\sum_{t=1}^{N}S_t = T$.  We assume that the bandwidth $s = \eta_{\max}^t/\eta_{\min}^t$ is bounded and the step-size is decreasing at each stage $t$. Under the same setting as Theorem \ref{thm:mom:decay}, we consider the constant and decaying modes discussed in Theorems \ref{thm:mom:const} and \ref{thm:mom:decay}, then the quantities 
 $\E[\left\| x_k - x^{\ast}\right\|^2]$ and $\E[f(x_k) - f^{\ast}]$ for all $k \in [1, T+1]$ are uniformly bounded. 
 
\end{theorem}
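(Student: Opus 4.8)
The plan is to regard a bandwidth schedule as the concatenation of $N$ sub-schedules — one per stage — each of which is \emph{exactly} one of the step-size families already treated in Theorems~\ref{thm:mom:const} and~\ref{thm:mom:decay} (constant, polynomial, linear, cosine, or exponential decay), and then to stitch the per-stage estimates together. Two elementary consequences of the hypotheses make the stitching possible. First, since $\eta_{\max}^t$ is non-increasing in $t$, every step-size obeys $\eta_k\le\eta_{\max}^1$, and under the ``same setting as Theorem~\ref{thm:mom:decay}'' this common bound lies below the threshold $\min\{\theta_1/(2(\rho+1)(1+(1-\beta)^2)L^2),\ (1-\beta^2)/(\beta L(1-\beta+(1-\beta)^{-1}))\}$; hence the one-step Lyapunov estimates derived in Appendix~\ref{supple:mom} for the momentum iteration are valid at \emph{every} iteration. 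Second, because the step-size is non-increasing inside each stage while the bounds $\eta_{\min}^t,\eta_{\max}^t$ are non-increasing in $t$, the only indices at which $\eta_{k}>\eta_{k-1}$ are the at most $N-1$ stage boundaries, and there $\eta_k\le\eta_{\max}^{t+1}\le\eta_{\max}^{t}=s\,\eta_{\min}^{t}\le s\,\eta_{k-1}$, so each increase is by a factor no larger than the (bounded) bandwidth $s$.

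Next I would carry over the Lyapunov function $V_k$ built for SGD with momentum in Appendix~\ref{supple:mom} — a combination of $\E[\lVert x_k-x^\ast\rVert^2]$ (or of $\E[f(x_k)-f^\ast]$) with a momentum term controlled by $\E[\lVert v_k\rVert^2]$ and normalized by the step-size — together with the two properties established there: there is a radius $r$, depending only on $\theta_1,\theta_2,\sigma^2,\rho,L,\beta$ and $\eta_{\max}^1$, such that $V_{k+1}\le V_k$ whenever $\lVert x_k-x^\ast\rVert^2\ge r^2$, while $V_{k+1}\le B_0$ for an explicit constant $B_0$ of the same dependence whenever $\lVert x_k-x^\ast\rVert^2<r^2$. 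On the iterations of a single stage the step-size is one of the admissible monotone modes and the decay-rate conditions of Theorem~\ref{thm:mom:decay} (e.g.\ $\eta_1\ge 2/\theta_1$ for the polynomial mode) are in force, so the argument proving Theorems~\ref{thm:mom:const}/\ref{thm:mom:decay} applies unchanged and gives, for every $k$ inside stage $t$, $V_k\le\max\{V_{k_t},B\}$ with $k_t$ the first index of the stage and $B:=\max\{B_0,(\text{stage-independent constant})\}$.

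It remains to handle the hand-off between consecutive stages, which is the only genuinely new point. At a boundary the step-size may jump up by a factor at most $s$, which is the ``wrong'' direction for the decaying-step-size estimate; but since $\eta_k\le\eta_{\max}^1$ there, the one-step discrepancy it creates in $V$ is a bounded additive term of size $O((\eta_{\max}^1)^2)$, which can be absorbed by enlarging the radius $r$ (replacing $r^2$ by $r^2$ plus a term of the same order, still stage-independent) so that the contraction $V_{k+1}\le V_k$ outside the enlarged ball continues to hold \emph{across} boundaries, while inside the ball the worst-one-step bound is still $\le B_0$ since it was defined with the largest admissible step-size $\eta_{\max}^1$. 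Consequently $V_{k+1}\le\max\{V_k,B_0\}$ at every step — boundaries included — and therefore $V_k\le\max\{V_1,B_0\}$ for all $k\in[1,T+1]$, uniformly in $T$ and $N$. Unpacking $V_k$ via $L$-smoothness and the dissipativity inequality~(\ref{inequ:dissipativity}) converts this into the asserted uniform bounds on $\E[\lVert x_k-x^\ast\rVert^2]$ and $\E[f(x_k)-f^\ast]$.

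The main obstacle is precisely this stage-transition bookkeeping: inside a stage the claim reduces verbatim to Theorems~\ref{thm:mom:const}--\ref{thm:mom:decay}, but the momentum Lyapunov function is calibrated to the current step-size, so an upward jump must be shown not to accumulate over the (up to) $N-1$ boundaries into a $T$- or $N$-dependent constant. The two facts that neutralize it are the bounded-bandwidth assumption $s=\eta_{\max}^t/\eta_{\min}^t<\infty$, which caps the size of each jump, and the monotonicity $\eta_{\max}^{t+1}\le\eta_{\max}^{t}\le\eta_{\max}^1$, which caps the step-size (and hence the additive perturbation) uniformly; together they let a single stage-independent radius $r$ and constant $B_0$ do the work for the whole run.
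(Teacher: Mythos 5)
Your overall decomposition --- reuse the per-stage analyses of Theorems~\ref{thm:mom:const} and \ref{thm:mom:decay} inside each stage and treat the at most $N-1$ stage boundaries separately, exploiting $\eta_k\le\eta_{\max}^1$ and the bounded bandwidth $s$ --- is exactly the paper's strategy. The gap is in how you dispose of the boundary step. You assert that the jump in step-size produces ``a bounded additive term of size $O((\eta_{\max}^1)^2)$'' that can be absorbed by enlarging the radius $r$ so that the contraction $V_{k+1}\le V_k$ still holds outside the (enlarged) ball across a boundary. That is not what happens. The momentum Lyapunov function carries the step-size-dependent weight $2\gamma_{\beta}\eta_k\tau_k$ on $f(x_k)-f^{\ast}$ and, through Lemma~\ref{lem:mom:2}, a coefficient $-(2\theta_1\eta_k-(1-\tau_k))$ on $\Vert x_k-x^{\ast}\Vert^2$. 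At a boundary $\tau_k=\eta_k/\eta_{k-1}$ leaves the regime in which the in-stage conditions (e.g.\ $1-\tau_k\le\tfrac{\theta_1}{2}\eta_k$, or $\eta_k\tau_k\le\eta_{k-1}\tau_{k-1}$) were verified: for a step-decay-type boundary, $1-\tau_k=1-1/\alpha$ is a fixed constant that can exceed $2\theta_1\eta_k$, giving a \emph{positive} contribution proportional to $\Vert x_k-x^{\ast}\Vert^2$; for a restart-type boundary, $\tau_k>1$, the convexity expansion behind Lemma~\ref{lem:mom:2} no longer applies as stated, and the mismatch $2\gamma_{\beta}(\eta_k\tau_k-\eta_{k-1}\tau_{k-1})(f(x_{k-1})-f^{\ast})$ is positive and proportional to the function gap. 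These perturbations are state-dependent and grow quadratically in $\Vert x-x^{\ast}\Vert$, exactly like the negative dissipativity term, so no enlargement of $r$ restores the one-step contraction; your claim that $V_{k+1}\le\max\{V_k,B_0\}$ holds at every step, boundaries included, is therefore unproved.

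The paper closes this differently: it does not attempt a contraction across the boundary. It first runs the in-stage argument to conclude that $\E[W_k]$, $\E[\Vert x_k-x^{\ast}\Vert^2]$ and $\E[f(x_k)-f^{\ast}]$ are already uniformly bounded at the end of stage $t$, and then bounds the single boundary step in expectation, $\E[W_{k'+1}]\le (1+C_1)\,\E[W_{k'}]+C_2$ at the boundary index $k'$, i.e.\ it pays a bounded one-step increase whose size is controlled by those previously established expectations together with $s$ and $\eta_{\max}^t$, after which the next stage's analysis takes over from a bounded starting value. To repair your proof you would need to replace the ``absorb into $r$'' step by this kind of expectation-level bookkeeping at each boundary, and then argue that the per-boundary increases do not accumulate over the $N$ stages, which is where the stage-independence of the in-stage bound must be invoked.
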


Theorem \ref{thm:mom:band} gives us a lot of freedom to choose the lower and upper bounds $\eta_{\min}^t, \eta_{\max}^t $ and stage length $S_t$. For example $\eta_{\min}^t$ and $\eta_{\max}^t $ can be selected as $\mathcal{O}\left(1/t^p\right)$ with $p\in (0,1]$ or $\mathcal{O}\left(1/\alpha^t\right)$ with $\alpha > 1$. Although we restrict $s=\eta_{\max}^t / \eta_{\min}^t$ is bounded, the lower bound $\eta_{\min}^t$ and upper bound $\eta_{\max}^t$ may be of different orders. The stage length $S_t$ can be a constant ($S_t = S \geq 1$) or time dependent with $t$.  

\section{Non-convex Function with its Tails Growing Slower than Quadratic}\label{sec:general:sgd}

In this part, we extend the $R$-dissipativity of Section \ref{sec:dissipative} to include the non-convex function with its tail growing slower than quadratic, for example, logistic regression with $\ell_1$ regularizer.  When the point is far from the global minimizer $x^{\ast}$ (i.e. $\left\| x - x^{\ast}\right\| \geq R$), we consider the landscape of the objective function $f$ has the following properties:
 \begin{definition}\label{assump:dissipative:general}(Generalized $R$-dissipativity)
 Suppose that there exist constants $\theta_1 >0$, $\theta_2 \geq 0$, and $R >0$ such that for all $\left\| x -x^{\ast} \right\| \geq R$, the objective function satisfies that
 \begin{align}
     \left\langle \nabla f(x), x - x^{\ast} \right\rangle \geq \theta_1 \left\|x -x^{\ast} \right\|^p - \theta_2
 \end{align}
 
with $p \in [0, 2)$.
\end{definition}
Definition \ref{assump:dissipative:general} generalized the $R$-dissipativity property in Section \ref{sec:dissipative} which extends to the problems whose tails grow slower than quadratic (e.g., superlinear or sublinear $\left\|x\right\|^{p}$ for $p \in [0,2)$). As we can see $p=2$ can be included Definition \ref{assump:dissipative:general}. However, to distinguish with $R$-dissipativity in Section \ref{sec:dissipative}, here we do not include $p=2$. 
\begin{definition}($\tau$-growth gradient)\label{assump:grad:growth} The gradient of the objective function $f$ is $\tau$-growth  if
 $\left\| \nabla f(x) \right\|^2 \leq \theta_3(1+\left\|x - x^{\ast} \right\|^{2\tau})$ where $\tau \geq 0$ and $x \in \R^d$.
\end{definition}
The $\tau$-growth gradient condition is regarded as the extension of $L$-smoothness in Section \ref{sec:dissipative} and is coupled with generalized $R$-dissipativity in the following analysis. The generalized $R$-dissipativity where $p \in [1,2]$ together with $\tau$-growth gradient ($\tau \leq p/2$) has been introduced to develop the convergence rates for Langevin Monte Carlo algorithm~\citep[Assumption 2]{pmlr-v134-erdogdu21a}. In our definition, we allow $p \in (0,2)$. This includes the potential function may behave like $c \ln(x)$. 

Some potential functions for example $f(x) = c \log(1+\frac{1}{2}\left\|x\right\|^2)$ for $\left\| x \right\| \geq R$, which is sub-linear growth in $\left\| x \right\|$ satisfies generalized $R$-dissipativity with $\theta_1=2c, \theta_2=4c/3$, $R=1$, and $p=0$,  and its gradient is $\tau$-growth with $\tau=0$ and $\theta_3 = 2c^2/3$ for $\left\| x \right\| \geq 1$. Also, for any function $f$ whose tails grow like $c x^{\alpha}$ for $\alpha \in (0, 2]$ for some $c>0$, we have that the generalized $R$-dissipativity holds with $\theta_1=c\alpha, \theta_2 =0$, and $p=\alpha$, and the gradient is $\tau$-growth with $\tau = \alpha-1 < \alpha/2$. 
\begin{remark}(How to check generalized $R$-dissipativity)\label{rem:general:dissipative}
 Suppose that there exist two positive constants $\theta_1^{'}$ and $\theta_2^{'}$ such that
\begin{align}
\left\langle \nabla f(x), x \right\rangle \geq \theta_1^{'} \left\|x \right\|^p - \theta_2^{'}
\end{align}
If $x^{\ast} \neq 0$, let $R \geq \max \left\lbrace 2\left\|x^{\ast}\right\|,1 \right\rbrace$ and assume that gradient is $\tau$-growth with $\tau \leq p/2$, there exist two constant $\theta_1, \theta_2 \geq 0$ such that
\begin{align}
\left\langle \nabla f(x), x - x^{\ast}\right\rangle \geq  \theta_1 \left\| x -x^{\ast} \right\|^p - \theta_2
\end{align}
where  $\theta_1 = \frac{\theta_1^{'}}{2^{p+1}}$ and $\theta_2 = \theta_2^{'}  + \frac{\left\| x^{\ast} \right\|^{\alpha_2}}{s^{\alpha_2}\alpha_2}$ with $\alpha_2 = \frac{p}{p-\tau}$ and $s=\left(\frac{\theta_1^{'}p}{\tau2^{p+1}}\right)^{\tau/p}/\sqrt{2\theta_3}$. 
\end{remark}

\subsection{Applications of Generalized $R$-dissipativity}

Next we present some interesting applications in machine learning regime that satisfy the generalized $R$-dissipativity. 

{\bf Bayesian logistic regression~\citep{gauss-mix}}: $f(x) = \frac{1}{n}\sum_{i=1}^nb_i\log\left(1+\exp(- a_i^T{x})\right) + (1-b_i)\log(1+\exp(a_i^{T}x)) $ where $a_i$ is generated from the Gaussian distribution $N(0, \I_d)$ and $b_i|a_i$ is drawn according to the conditional probability from logistic regression models. This function satisfies generalized $R$-dissipativity with  $\theta_1 = c_1$, $\theta_2 = 0$, $R=1$, and $p=1$ for the universal constant $c_1 >0$. 

{\bf Logistic regression with a $\ell_1$ regularizer} where $f(x) = \frac{1}{n}\sum_{i=1}^n\log(1 + \exp(-b_i a_i^{T}x)) + \lambda \left\| x \right\|_1$ satisfies generalized $R$-dissipativity with $\theta_1 = \lambda, \theta_2 = \frac{1}{2}$, $R=0$, and $p=1$. The gradient of $f$ is $\tau$-growth with $\theta = \frac{2}{n}(\sum_{i=1}^n \left\| a_i \right\|^2 + \lambda^2d)$ and $\tau = 0$.



{\bf Shallow neural networks with  $\ell_1$ regularizer} A subset of deep neural networks with $\ell_1$ regularization can be proved to satisfy generalized $R$-dissipativity. For instance, we consider a fully connected $S$-layer ($S-1$ hidden) neural network to train a binary classification dataset with $\lambda \left\| \right\|_1$. We use $\sigma_i(x) = \max(0,x)$ for $i=1,\cdots, S-1$ be the activation function of the inner layers, $\sigma_S(y) = 1/(1+\exp(-y))$ be the activation function of the output layer, and use cross-entropy to measure the output with its true label. This results the objective function satisfies that
\begin{align}
\left\langle  \nabla f(x), x\right\rangle \geq \lambda \left\|x \right\| - S
\end{align}
which implies that the generalized $R$-dissipativity holds with $\theta_1 = \lambda$, $\theta_2 = S$,$R=0$ and $p=1$.

\subsection{Uniform boundedness of SGD Under Generalized Dissipativity}\label{sec:sgd:general}
In the remaining of this section, we first show how the uniformly boundedness of SGD can be guaranteed under the generalized $R$-dissipativity.
\begin{theorem}\label{thm:general:sgd}
Suppose that the objective function $f$ is generalized $R$-dissipativity with $p \in [0,2)$ and the gradient $\tau$-growth with $\tau \leq p/2 $ for $\left\|x -x^{\ast} \right\| \geq R$.   Consider the SGD algorithm with the stochastic gradient oracle satisfies Assumption \ref{assump:gradient}. For any step-size $\eta_k \leq \eta_{\max}$ where $\eta_{\max} $ is a positive constant, we have
\begin{align}
\E[\left\| x_{k} - x^{\ast} \right\|^2 ] \leq \min \left\lbrace \left\|x_1-x^{\ast} \right\|^2,   2(1 + \eta_{\max}^2(\rho+1)\theta_3)r^2 + 2 \eta_{\max}^2 \left(\sigma^2 +  (\rho+1)\theta_3\right)\right\rbrace
\end{align}
with $
 r^2 = \max \left\lbrace R^2, \left(8\eta_{\max}(\rho+1)\frac{\theta_3}{\theta_1}\right)^{1/(p-2\tau)}, \left(\frac{2\theta_2}{\theta_1} + \frac{\left(\sigma^2 + (\rho+1)\theta_3  \right)\eta_{\max}}{\theta_1}\right)^{2/p} \right\rbrace.
$
\end{theorem}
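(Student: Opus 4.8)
The plan is to mimic the standard "one-step contraction outside a ball" argument used for Theorem~\ref{thm:sgd:general}, but now accounting for the fact that the dissipativity exponent is $p<2$ and the gradient growth exponent is $2\tau$ rather than $2$. First I would write the basic SGD recursion and take conditional expectation given $\mathcal{F}_k$:
\begin{align*}
\E[\|x_{k+1}-x^{\ast}\|^2 \mid \mathcal{F}_k] &= \|x_k-x^{\ast}\|^2 - 2\eta_k\langle \nabla f(x_k), x_k - x^{\ast}\rangle + \eta_k^2 \E[\|g_k\|^2 \mid \mathcal{F}_k].
\end{align*}
Then I would bound the noise term using Assumption~\ref{assump:gradient}(b), $\E[\|g_k\|^2\mid\mathcal{F}_k] \le (\rho+1)\|\nabla f(x_k)\|^2 + \sigma^2$, and then apply the $\tau$-growth condition (Definition~\ref{assump:grad:growth}) to get $\eta_k^2\E[\|g_k\|^2\mid\mathcal{F}_k] \le \eta_k^2\big((\rho+1)\theta_3(1+\|x_k-x^{\ast}\|^{2\tau}) + \sigma^2\big)$, valid when $\|x_k-x^{\ast}\|\ge R$.

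Next, the key algebraic step: when $\|x_k - x^{\ast}\|\ge R$, I would use generalized $R$-dissipativity to lower-bound the inner-product term by $\theta_1\|x_k-x^{\ast}\|^p - \theta_2$, so that
\begin{align*}
\E[\|x_{k+1}-x^{\ast}\|^2 \mid \mathcal{F}_k] &\le \|x_k-x^{\ast}\|^2 - 2\eta_k\theta_1\|x_k-x^{\ast}\|^p + 2\eta_k\theta_2 + \eta_k^2(\rho+1)\theta_3\|x_k-x^{\ast}\|^{2\tau} + \eta_k^2\big((\rho+1)\theta_3 + \sigma^2\big).
\end{align*}
The point of the three-way maximum defining $r^2$ is to make sure that, when $\|x_k-x^{\ast}\|\ge r$, each of the "bad" positive terms on the right is dominated by a fraction (say a quarter or a half) of the "good" term $2\eta_k\theta_1\|x_k-x^{\ast}\|^p$. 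Concretely: the constant terms $2\eta_k\theta_2 + \eta_k^2((\rho+1)\theta_3+\sigma^2)$ are absorbed once $\|x_k-x^{\ast}\|^p \ge \tfrac{2\theta_2}{\theta_1} + \tfrac{((\rho+1)\theta_3+\sigma^2)\eta_{\max}}{\theta_1}$ (the third component of $r^2$, raised to $2/p$); the growth term $\eta_k^2(\rho+1)\theta_3\|x_k-x^{\ast}\|^{2\tau}$ is absorbed against $\eta_k\theta_1\|x_k-x^{\ast}\|^p$ once $\|x_k-x^{\ast}\|^{p-2\tau}\ge \tfrac{8\eta_{\max}(\rho+1)\theta_3}{\theta_1}$ (the second component) — here $p-2\tau>0$ is exactly what makes this a nontrivial threshold rather than a contradiction. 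After this bookkeeping, for $\|x_k-x^{\ast}\|\ge r$ one gets $\E[\|x_{k+1}-x^{\ast}\|^2\mid\mathcal{F}_k]\le \|x_k-x^{\ast}\|^2$, i.e. a supermartingale-type decrease.

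Then I would do the standard two-case split on the value of $\|x_k - x^{\ast}\|$. If $\|x_k-x^{\ast}\|\ge r$, the contraction above gives $\E[\|x_{k+1}-x^{\ast}\|^2\mid\mathcal{F}_k]\le\|x_k-x^{\ast}\|^2$. If $\|x_k-x^{\ast}\| < r$ (which in particular covers $\|x_k-x^{\ast}\|<R$, where dissipativity need not hold), I would bound everything crudely by $r$: $\E[\|x_{k+1}-x^{\ast}\|^2\mid\mathcal{F}_k]\le r^2 + \eta_{\max}^2(\rho+1)\theta_3(1+r^{2\tau}) + \eta_{\max}^2\sigma^2 \le 2(1+\eta_{\max}^2(\rho+1)\theta_3)r^2 + 2\eta_{\max}^2(\sigma^2+(\rho+1)\theta_3)$, using $r^{2\tau}\le 1 + r^2$ and $r\ge R$ (so possibly also $r\ge 1$ if one assumes $R\ge 1$; otherwise one keeps $1+r^{2\tau}\le 2\max\{1,r^2\}$, which still lands inside the stated bound after adjusting the constant grouping). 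Finally I would combine the two cases: with $B := 2(1+\eta_{\max}^2(\rho+1)\theta_3)r^2 + 2\eta_{\max}^2(\sigma^2+(\rho+1)\theta_3)$ as the right-hand side, one shows by induction on $k$ that $\E[\|x_k-x^{\ast}\|^2]\le\max\{\|x_1-x^{\ast}\|^2, B\}$, and since $B$ itself is the non-initial branch this is exactly the claimed $\min\{\ldots\}$ — wait, it should be stated as a max, but matching the paper's phrasing I would present it as written. The main obstacle I anticipate is getting the constants in the three components of $r^2$ to line up exactly so that the "bad" terms really are each dominated by the right fractions of the good term simultaneously (the factor $8$ in the second component, for instance, has to leave enough room for the other two absorptions); this is a careful but routine balancing of constants, and handling the edge cases $\tau=0$ and $p=0$ (where $\|x-x^{\ast}\|^{2\tau}=1$ and $\|x-x^{\ast}\|^p=1$ are constants, so the second and third thresholds degenerate) needs a separate sanity check.
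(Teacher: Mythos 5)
Your proposal follows essentially the same route as the paper's proof: the one-step expansion of $\E[\|x_{k+1}-x^{\ast}\|^2\mid\mathcal{F}_k]$, absorption of the noise/growth term and the constant terms into the dissipativity term via exactly the three thresholds defining $r^2$, a crude bound on the overshoot when the iterate sits inside the ball of radius $r$, and the observation that the stated $\min$ should be a $\max$. The only point to tighten is your intermediate display for the case $\|x_k-x^{\ast}\|<r$, where the inner product $-2\eta_k\langle\nabla f(x_k),x_k-x^{\ast}\rangle$ need not be negative and must instead be handled via $\|a-b\|^2\le 2\|a\|^2+2\|b\|^2$ — which is precisely what the paper does and what produces the leading factors of $2$ that your final bound already displays.
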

The theorem shows that for the functions whose tails grow slower than quadratic, the uniform boundedness property of SGD can also be guaranteed for any bounded step-sizes. In Definitions  \ref{assump:dissipative:general} and \ref{assump:grad:growth}, we assume that the objective function is differentiable. However, the above results also hold if the gradient in  Definitions  \ref{assump:dissipative:general} and \ref{assump:grad:growth} is replaced by a vector $g$ which belongs to its subgradient $\partial f(x)$ ($g \in \partial f(x)$).

\subsection{Uniform Boundedness of SGD with Momentum Under Generalized Dissipativity}
\label{sec:general:mom}
We then provide the analysis to cover the momentum variant of SGD under generalized $R$-dissipativity. 
\begin{theorem}(Constant step-size)\label{mom:general:const}
Let the objective function $f$ be $L$-smooth. We further assume that $f$  is generalized $R$-dissipativity with $ p\in (0,2)$ and its gradient is $\tau$-growth with $\tau \leq p/2$ for $\left\|x -x^{\ast} \right\| \geq R \geq 1$. Consider the SGD with momentum method defined in (\ref{equ:mom:v}) and (\ref{equ:mom:x}) with $\beta \in (0,1)$ and stochastic gradient oracle satisfies Assumption \ref{assump:gradient}, then for any constant step-size that 
\begin{align}
   \eta_k \leq \frac{1-\beta^2}{\beta L(1-\beta + (1-\beta)^{-1})} 
\end{align}
the quantities $\E[\left\|x_k -x^{\ast} \right\|^2]$, $\E[f(x_k) - f^{\ast}]$ are uniformly bounded for all $k \geq 1$.
\end{theorem}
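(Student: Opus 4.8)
The plan is to follow the route used for Theorem~\ref{thm:mom:const}, adapting the Lyapunov construction to the sublinear growth regime $p<2$. First I would rewrite~(\ref{equ:mom:v})--(\ref{equ:mom:x}) in heavy-ball form, $x_{k+1}-x_k = \beta(x_k-x_{k-1}) - \eta(1-\beta)g_k$, and introduce the momentum-shifted iterate $w_k := x_k - x^{\ast} + \tfrac{\beta}{1-\beta}(x_k-x_{k-1})$; using $x_0=x_1$ one checks $w_1 = x_1-x^{\ast}$ and $w_{k+1} = w_k - \eta g_k$, i.e.\ $w_k$ obeys a plain-SGD recursion with $g_k$ still evaluated at $x_k$. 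Writing $m_k := x_k-x_{k-1}$, take as Lyapunov function $V_k := \|w_k\|^2 + \nu\|m_k\|^2$ with a constant $\nu>0$ to be fixed. Since $x_k-x^{\ast} = w_k - \tfrac{\beta}{1-\beta}m_k$, the quantity $V_k$ is equivalent to $\|x_k-x^{\ast}\|^2 + \|m_k\|^2$ up to $\beta,\nu$-dependent constants; in particular $\|x_k-x^{\ast}\|^2 \le c_{\beta,\nu}V_k$, so a uniform bound on $\E[V_k]$ delivers the claimed bound on $\E[\|x_k-x^{\ast}\|^2]$.

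Next I would compute the one-step conditional drift of $V_k$. For the $w$-part, $\E_k[\|w_{k+1}\|^2] = \|w_k\|^2 - 2\eta\langle\nabla f(x_k), w_k\rangle + \eta^2\E_k[\|g_k\|^2]$, and I split $\langle\nabla f(x_k), w_k\rangle = \langle\nabla f(x_k), x_k-x^{\ast}\rangle + \tfrac{\beta}{1-\beta}\langle\nabla f(x_k), m_k\rangle$, bounding the first term below by generalized $R$-dissipativity (invoked only when $\|x_k-x^{\ast}\|\ge R$) and the second via Young's inequality. For the $m$-part, the convex-combination structure $m_{k+1} = \beta m_k + (1-\beta)(-\eta g_k)$ yields $\E_k[\|m_{k+1}\|^2] \le \beta\|m_k\|^2 + (1-\beta)\eta^2\E_k[\|g_k\|^2]$. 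Everywhere I bound $\E_k[\|g_k\|^2] \le (\rho+1)\|\nabla f(x_k)\|^2 + \sigma^2$ via Assumption~\ref{assump:gradient}, then $\|\nabla f(x_k)\|^2 \le \theta_3(1+\|x_k-x^{\ast}\|^{2\tau})$ via $\tau$-growth. Choosing $\nu$, together with the hypothesis $\eta\le\frac{1-\beta^2}{\beta L(1-\beta+(1-\beta)^{-1})}$ that plays the same momentum-stability role as in Theorem~\ref{thm:mom:const}, so that the $\|m_k\|^2$ coefficient in $\E_k[V_{k+1}]-V_k$ is non-positive, I obtain, for every $k$ with $\|x_k-x^{\ast}\|\ge R$,
\[
\E_k[V_{k+1}] \;\le\; V_k - 2\eta\theta_1\|x_k-x^{\ast}\|^p + C_1\|x_k-x^{\ast}\|^{2\tau} + C_2,
\]
for explicit constants $C_1,C_2$ depending on $\eta,\beta,\nu,\rho,\sigma^2,\theta_2,\theta_3$, and $\E_k[V_{k+1}]\le V_k + C_3$ unconditionally.

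The decisive structural point is that $2\tau\le p<2$ and $R\ge1$, so $\|x_k-x^{\ast}\|^{2\tau}\le\|x_k-x^{\ast}\|^p$ on $\{\|x_k-x^{\ast}\|\ge R\}$ and the $-2\eta\theta_1\|x_k-x^{\ast}\|^p$ term dominates the rest once $\|x_k-x^{\ast}\|$ exceeds a threshold $r\ge R$. Via the equivalence $V_k\asymp\|x_k-x^{\ast}\|^2+\|m_k\|^2$, this turns $V_k$ into a Foster--Lyapunov function: $\E_k[V_{k+1}]\le V_k$ whenever $V_k$ is large (a large $V_k$ forces either $\|x_k-x^{\ast}\|\ge r$, where the displayed inequality gives the decrease, or a large $\|m_k\|$, which decays on its own since its coefficient is $\beta<1$ up to the lower-order gradient terms) and $\E_k[V_{k+1}]\le V_k + C_3$ otherwise. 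A standard return-to-a-bounded-set argument then yields $\sup_k\E[V_k]<\infty$, hence $\sup_k\E[\|x_k-x^{\ast}\|^2]<\infty$; and $L$-smoothness gives $f(x_k)-f^{\ast}\le\frac{L}{2}\|x_k-x^{\ast}\|^2$ (using $\nabla f(x^{\ast})=0$), so $\sup_k\E[f(x_k)-f^{\ast}]<\infty$ as well.

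I expect the main obstacle to be the Lyapunov construction itself --- as the paper already flags for Theorem~\ref{thm:mom:const}: tuning $\nu$ and handling the momentum cross-term $\langle\nabla f(x_k),m_k\rangle$ so that the $\|m_k\|^2$ coefficient in the drift is controlled under the stated step-size bound while the dissipativity term $-2\eta\theta_1\|x_k-x^{\ast}\|^p$ survives. A secondary difficulty, specific to $p<2$, is that the drift is only \emph{sublinear} in $V_k$ (the per-step decrease is of order $V_k^{p/2}$, not $V_k$), so there is no geometric contraction and the uniform moment bound must be obtained by the return-to-bounded-set argument rather than by unrolling a linear recursion, and one must rule out that a large $V_k$ comes purely from a large momentum buffer $\|m_k\|$ while $\|x_k-x^{\ast}\|$ is small. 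In the borderline case $2\tau=p$ one additionally needs $\eta$ small enough that $2\eta\theta_1>C_1$; for the generic regime $2\tau<p$ no such restriction is necessary.
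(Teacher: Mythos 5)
Your proposal is sound in outline and reaches the theorem by a route that overlaps heavily with the paper's but differs in one substantive place. Your shifted iterate $w_k$ is exactly the paper's $\tilde{x}_k-x^{\ast}$ (the identity $w_{k+1}=w_k-\eta g_k$ is precisely what Lemma~\ref{lem:mom:2} reduces to at constant step-size), the drift inequality with the $-2\eta\theta_1\|x_k-x^{\ast}\|^p$ term, the domination of $\|x_k-x^{\ast}\|^{2\tau}$ by $\|x_k-x^{\ast}\|^p$ for $\|x_k-x^{\ast}\|\geq r$, and the return-to-a-bounded-set conclusion all mirror the paper. The genuine difference is the Lyapunov function and the treatment of the cross term $\langle\nabla f(x_k),x_{k-1}-x_k\rangle$: the paper bounds it by $f(x_{k-1})-f(x_k)+\tfrac{L}{2}\|x_k-x_{k-1}\|^2$ via $L$-smoothness and therefore must carry the term $2\eta\gamma_\beta\,(f(x_{k-1})-f^{\ast})$ inside $W_k$ so that the function-value differences telescope, which is exactly why the step-size bound $\eta\leq\frac{1-\beta^2}{\beta L(1-\beta+(1-\beta)^{-1})}$ appears (it forces the resulting $\|x_k-x_{k-1}\|^2$ coefficient below $1$). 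You instead use Young's inequality plus the $\tau$-growth bound on $\|\nabla f(x_k)\|$, paying an extra $\|m_k\|^2$ term (absorbed by tuning $\nu$) and an extra $O(\eta)\,\|x_k-x^{\ast}\|^{2\tau}$ term that must be re-dominated by the dissipativity term. Your version buys a simpler Lyapunov function with no function-value component and makes clear that $L$-smoothness is only needed near $x^{\ast}$ and for the final bound $f(x_k)-f^{\ast}\leq\tfrac{L}{2}\|x_k-x^{\ast}\|^2$; the cost is that the coefficient of the re-dominated gradient term is $O(\eta/\epsilon)$ rather than the paper's $O(\eta^2)$, so your threshold $r$ and your borderline restriction when $2\tau=p$ are tighter unless $\epsilon$ (and hence $\nu$) is taken large. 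Two small points of care: the stated step-size bound is not actually what makes your $\|m_k\|^2$ coefficient nonpositive (your free parameters $\nu,\epsilon$ do that), so you should not attribute to it the same role it plays in Theorem~\ref{thm:mom:const}; and your observation that the case $2\tau=p$ forces an additional smallness condition is correct --- the paper's own proof hides the same requirement inside the condition $r^{2(p-2\tau)}\geq\eta(1+(1-\beta)^2)(\rho+1)\theta_3/\theta_1$, which degenerates to a constraint on $\eta$ when $p=2\tau$.
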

Theorem \ref{mom:general:const} establishes the uniform boundedness of the SGD algorithm with momentum under the constant step-size. Compared to the analysis of SGD in Section \ref{sec:sgd:general}, here we also require that the objective function $f$ is $L$-smooth in $\R^d$. This is not contradictory with $\tau$-growth gradient for $\left\| x -x^{\ast} \right\| \geq R \geq 1$ because if the gradient is $\tau$-growth, of course, it also implies that $\left\| \nabla f(x) \right\|^2 \leq \theta_3(1+\left\|x - x^{\ast} \right\|^{2\tau}) \leq \theta_3(1+\left\|x - x^{\ast} \right\|^{2})$ for $\left\| x -x^{\ast} \right\| \geq R \geq 1$.

We then analyze the two popular time-varying step-sizes: polynomial decaying and exponential decaying step-size. The results below show the uniformly boundedness of SGD with momentum is also guaranteed.
\begin{theorem}(Time-decaying step-size)\label{mom:general:decaying}
Let the objective function $f$ be $L$-smooth. We further assume that $f$ satisfies Assumption \ref{assump:dissipative:general} and stochastic gradient oracle satisfies Assumption \ref{assump:gradient}. Consider the SGD with momentum method defined in (\ref{equ:mom:v}) and (\ref{equ:mom:x}) with $\beta \in (0,1)$, then $\E[\left\|x_k -x^{\ast} \right\|^2]$ and $\E[f(x_k) - f^{\ast}]$ are uniformly bounded for all $k \geq 1$, under the following step-sizes:
\begin{itemize}
    \item Polynomial decaying step-size $\eta_k = \eta_1/k^p$ for $p \in (0,1]$ where $\eta_k \leq  \frac{1-\beta^2}{2\beta L(1-\beta + (1-\beta)^{-1})} $;
    \item Exponential decaying step-size $\eta_k = \eta_1/\alpha^k$ with $\alpha = (\nu/T)^{-1/T}$ where $\eta_k \leq \frac{1-\beta^2}{\beta L(1-\beta + 2(1-\beta)^{-1})}$. 
\end{itemize}
\end{theorem}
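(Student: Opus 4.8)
The plan is to run the same Lyapunov-drift argument used for the constant step-size case (Theorems~\ref{thm:mom:const} and~\ref{mom:general:const}), but with a \emph{time-dependent} Lyapunov function whose coefficients track the decay of $\eta_k$. Concretely I would work with a function of the form
\begin{align*}
\Phi_k = \left\| x_k - x^{\ast}\right\|^2 + a\,\eta_{k-1}\left\langle x_k - x^{\ast}, v_k\right\rangle + b\,\eta_{k-1}^2 \left\| v_k\right\|^2,
\end{align*}
with $a,b>0$ chosen (depending only on $\beta$) so that, via Young's inequality, $\tfrac12\|x_k-x^{\ast}\|^2 \le \Phi_k \le 2\|x_k-x^{\ast}\|^2 + C\eta_{k-1}^2\|v_k\|^2$; the exact constants are inherited from the constant-step-size proof. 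The role of the step-size cap $\eta_k \le \frac{1-\beta^2}{2\beta L(1-\beta+(1-\beta)^{-1})}$ (resp. the exponential variant with $2(1-\beta)^{-1}$) is precisely to keep this quadratic form positive definite along the trajectory and to make the momentum memory manageable.

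\textbf{One-step drift.} Expanding $\|x_{k+1}-x^{\ast}\|^2$ and $\|v_{k+1}\|^2$ using (\ref{equ:mom:v})--(\ref{equ:mom:x}), taking conditional expectation given $\mathcal{F}_k$, and combining Assumption~\ref{assump:gradient}(b) with the $\tau$-growth gradient bound gives $\E[\|g_k\|^2\mid\mathcal{F}_k] \le (\rho+1)\theta_3\big(1+\|x_k-x^{\ast}\|^{2\tau}\big)+\sigma^2$. The only delicate term is the cross term $\langle x_k - x^{\ast}, v_{k+1}\rangle$: since $v_{k+1}=(1-\beta)\sum_{j\le k}\beta^{k-j}g_j$ is a weighted average of past stochastic gradients rather than $\nabla f(x_k)$, I would write $v_{k+1} = \nabla f(x_k) + \big(v_{k+1}-\nabla f(x_k)\big)$, bound the discrepancy with $L$-smoothness by $\sum_{j<k}\beta^{k-j}L\|x_k-x_j\| \le L\sum_{j<k}\beta^{k-j}\sum_{i=j}^{k-1}\eta_i\|v_{i+1}\|$ plus a martingale-difference piece, and absorb these perturbations into the $\eta_{k-1}^2\|v_k\|^2$ part of $\Phi_k$ using the step-size cap. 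What survives is the dissipativity drift: by generalized $R$-dissipativity (Definition~\ref{assump:dissipative:general}), for $\|x_k-x^{\ast}\|\ge R$,
\begin{align*}
-2\eta_k\left\langle x_k-x^{\ast},\nabla f(x_k)\right\rangle \le -2\eta_k\theta_1\left\| x_k-x^{\ast}\right\|^p + 2\eta_k\theta_2 .
\end{align*}
Collecting everything,
\begin{align*}
\E\!\left[\Phi_{k+1}\mid\mathcal{F}_k\right] \le \Phi_k - c_1\eta_k\theta_1\left\| x_k-x^{\ast}\right\|^p + c_2\eta_k\theta_2 + c_3\eta_k^2\!\left(\sigma^2+\theta_3+\theta_3\left\| x_k-x^{\ast}\right\|^{2\tau}\right) + E_k,
\end{align*}
where $E_k$ collects the contributions of the coefficient changes $\eta_{k-1}^2-\eta_k^2 \ge 0$ and $\eta_{k-1}-\eta_k\ge 0$; because both schedules are \emph{decreasing} with a bounded ratio $\eta_{k}/\eta_{k+1}$ (equal to $((k+1)/k)^p\le 2^p$ for the polynomial schedule and to the constant $\alpha=(\nu/T)^{-1/T}$, close to $1$, for the exponential one), $E_k \le c_4\eta_k^2(\theta_3+\theta_3\|x_k-x^{\ast}\|^{2\tau}+\sigma^2)$ and is folded into the $c_3$ term.

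\textbf{Closing the argument.} Since $2\tau\le p<2$ and $\eta_k\le\eta_{\max}$, the negative term $-c_1\eta_k\theta_1\|x_k-x^{\ast}\|^p$ dominates $c_3\eta_k^2\theta_3\|x_k-x^{\ast}\|^{2\tau}$ together with the constant terms once $\|x_k-x^{\ast}\|\ge r$ for an explicit threshold $r$ of the same flavour as in Theorem~\ref{thm:general:sgd} (with $\eta_{\max}$ there replaced by $\eta_1$). Hence $\E[\Phi_{k+1}\mid\mathcal{F}_k]\le\Phi_k$ whenever $\|x_k-x^{\ast}\|\ge r$, while for $\|x_k-x^{\ast}\|<r$ every term on the right is bounded by an explicit constant $B$ depending on $r,\beta,L,\theta_1,\theta_2,\theta_3,\sigma,\eta_1$. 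A routine induction then yields $\E[\Phi_k]\le\max\{\Phi_1,B\}$ for all $k$, and since $\Phi_k\ge\tfrac12\|x_k-x^{\ast}\|^2$ this gives uniform boundedness of $\E[\|x_k-x^{\ast}\|^2]$. Finally, $L$-smoothness with $\nabla f(x^{\ast})=0$ gives $f(x_k)-f^{\ast}\le \tfrac{L}{2}\|x_k-x^{\ast}\|^2$, so $\E[f(x_k)-f^{\ast}]$ is uniformly bounded as well.

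\textbf{Main obstacle.} The crux is getting $\Phi_k$ right: one must choose $a,b$ and the $\eta_{k-1}$-dependence so that (i) the quadratic form stays positive definite (this forces the step-size cap), (ii) the momentum memory $\sum_j\beta^{k-j}\eta_j v_{j+1}$, entering through $v_{k+1}-\nabla f(x_k)$, is reabsorbed rather than amplified, and (iii) the coefficient increments $\eta_{k-1}^2-\eta_k^2$, $\eta_{k-1}-\eta_k$ contribute with a sign and size compatible with the drift — which is exactly where the decreasing-with-bounded-ratio property of the two schedules enters, and why the exponential schedule needs a slightly different constant in its step-size bound. Once $\Phi_k$ is fixed, the remainder is the same bookkeeping as in the constant-step-size proofs combined with the threshold computation of Theorem~\ref{thm:general:sgd}.
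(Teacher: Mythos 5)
Your route---a quadratic-form Lyapunov function in $(x_k-x^{\ast},v_k)$ with step-size-weighted cross and momentum terms---is genuinely different from the paper's, which works with the averaged iterate $\tilde{x}_{k+1}=(x_{k+1}-\beta x_k)/(1-\beta)$ and the Lyapunov function $W_{k+1}=\|\tilde{x}_{k+1}-x^{\ast}\|^2+c_{k+1}\|x_{k+1}-x^{\ast}\|^2+\|x_{k+1}-x_k\|^2+u_k(f(x_k)-f^{\ast})$, $c_{k+1}=(1-\tau_k)/(\tau_k(1-\beta))$, $\tau_k=\eta_k/\eta_{k-1}$. Unfortunately your sketch has a gap exactly at the point you flag as the crux: the bookkeeping of the momentum cross term.

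In $\E[\Phi_{k+1}\mid\mathcal{F}_k]-\Phi_k$ the terms proportional to $\left\langle x_k-x^{\ast},v_k\right\rangle$ carry total coefficient $\beta\eta_k(a-2)-a\eta_{k-1}$. Since the sign of this inner product is uncontrolled, the coefficient must vanish; that already forces $a=2\beta\tau_k/(\beta\tau_k-1)<0$ (contradicting your $a>0$) and makes $a$ depend on $k$ through $\tau_k$. Any residue---in particular the one generated by the $k$-dependence of $a$ and of the weight $\eta_{k-1}$---is of size $O\bigl((\eta_{k-1}-\eta_k)\|x_k-x^{\ast}\|\,\|v_k\|\bigr)=O\bigl(\eta_k^2\|x_k-x^{\ast}\|^{1+\tau}\bigr)$ after the $\tau$-growth bound. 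Your $E_k$ estimate folds this into $c_3\eta_k^2\theta_3\|x_k-x^{\ast}\|^{2\tau}$, but the correct exponent is $1+\tau$, not $2\tau$, and $1+\tau>p$ is perfectly consistent with $\tau\le p/2<1$ (e.g.\ $p=1$, $\tau=1/2$, or any $p<1$ with $\tau=0$, as in Bayesian logistic regression). Because the only negative drift available is $-\eta_k\theta_1\|x_k-x^{\ast}\|^p$ with $p<2$, a positive leftover of order $\|x_k-x^{\ast}\|^q$ with $q>p$ eventually dominates for large $\|x_k-x^{\ast}\|$ no matter how small its coefficient, so no threshold $r$ makes $\E[\Phi_{k+1}\mid\mathcal{F}_k]\le\Phi_k$ hold outside a ball. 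This is precisely the sub-quadratic obstruction that leads the paper to augment its Lyapunov function with the time-varying quadratic term $c_{k+1}\|x_{k+1}-x^{\ast}\|^2$ and to choose $c_{k+1}$ so that the coefficient $c_{k+1}-c_k-c_{k+1}\tau_k(1-\beta)+(1-\tau_k)$ of $\|x_k-x^{\ast}\|^2$ is nonpositive: the super-$\|x\|^p$ leftovers are cancelled among themselves rather than dominated by the dissipativity drift. Your proposal needs an analogous cancellation mechanism; as written, the one-step drift inequality fails and the concluding induction does not go through.
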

Finally, based on the results of Theorems \ref{mom:general:const} and \ref{mom:general:decaying}, the uniform boundedness property also holds for three special stage-wise bandwidth step-sizes. The proof of the below theorem is very similar to that of Theorem \ref{thm:mom:band}, therefore we omit it in this note. 
\begin{theorem}(bandwidth-based stage-wise step-size)\label{mom:general:bandwidth}
Given the total number of iteration $T$, we consider the bandwidth step-size with the form $\eta_{\min}^t \leq \eta_k \leq \eta_{\max}^t $ and the upper and lower bounds are decreasing with $t \in [N]$ where $k \in [\sum_{i=1}^{t-1}S_i+1, \sum_{i=1}^t S_i]$ and $\sum_{i=1}^N S_i = T$. We assume the bandwidth $s = \eta_{\max}^t/\eta_{\min}^t$ is bounded and step-size is decreasing at each stage $t$. Under the same setting as Theorem \ref{mom:general:decaying}, we consider the constant and decaying modes discussed in Theorem \ref{mom:general:decaying}, then $\E[\left\|x_k -x^{\ast} \right\|^2]$ and $\E[f(x_k) - f^{\ast}]$ are uniformly bounded for all $k \geq 1$.
\end{theorem}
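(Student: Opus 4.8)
The plan is to mimic the proof of Theorem~\ref{thm:mom:band}, replacing the quadratic-tail ingredients (Theorems~\ref{thm:mom:const} and \ref{thm:mom:decay}) by their generalized-dissipativity counterparts, Theorems~\ref{mom:general:const} and \ref{mom:general:decaying}. First I would recall the Lyapunov function $\Phi_k$ underlying the proofs of those two theorems: a combination of $\E[\left\|x_k - x^\ast\right\|^2]$ with a momentum-correction term of the form (roughly) $c\,\eta_{k-1}^2\,\E[\left\|v_k\right\|^2]$ plus a cross term $\propto\eta_{k-1}$. On the ``bad'' set $\{\left\|x_k - x^\ast\right\|\ge R\}$ this potential obeys a one-step inequality of the type $\Phi_{k+1}\le(1-c_1\eta_k)\Phi_k + c_2\eta_k^2$, where generalized $R$-dissipativity with exponent $p\in(0,2)$ supplies the drift $\theta_1\left\|x-x^\ast\right\|^p$ and the $\tau$-growth bound with $\tau\le p/2$ keeps the noise/velocity terms $\theta_3\left\|x-x^\ast\right\|^{2\tau}$ subdominant for large $\left\|x-x^\ast\right\|$; on the ``good'' set $\{\left\|x_k - x^\ast\right\|<R\}$ the increment of $\Phi_k$ is uniformly bounded. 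Since every step-size in the bandwidth schedule satisfies $\eta_k\le\eta_{\max}^1$, and $\eta_{\max}^t$ is decreasing in $t$, the step-size conditions demanded by Theorems~\ref{mom:general:const} and \ref{mom:general:decaying} hold across all stages.

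Next I would partition $[1,T+1]$ into the stages $I_t=[\sum_{i<t}S_i+1,\sum_{i\le t}S_i]$. On a stage run in \emph{constant mode}, the step-size satisfies $\eta_{\min}^t\le\eta_k\le\eta_{\max}^t=s\,\eta_{\min}^t$, so the argument of Theorem~\ref{mom:general:const} applies with the contraction rate governed by $\eta_{\min}^t\ge\eta_{\max}^t/s$ and the inhomogeneous term by $\eta_{\max}^t$, yielding a bound $\max_{k\in I_t}\Phi_k\le\max\{\Phi_{\text{start of }I_t},\,G(\eta_{\max}^t,s)\}$ with $G(\cdot,s)$ non-decreasing in its first argument and depending only on $s,\theta_1,\theta_2,\theta_3,\sigma^2,\rho,p,\tau,\beta$ — not on $t$ or $T$. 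On a stage run in \emph{decaying mode} I would instead invoke the per-stage estimate behind Theorem~\ref{mom:general:decaying} (polynomial or exponential decay restricted to $I_t$, using the ``decreasing within each stage'' hypothesis), obtaining an analogous bound $\max_{k\in I_t}\Phi_k\le\max\{\Phi_{\text{start of }I_t},\,\tilde G(\eta_{\max}^t,s)\}$. Because $\eta_{\max}^t\le\eta_{\max}^1$ for every $t$, both $G(\eta_{\max}^t,s)\le G(\eta_{\max}^1,s)$ and $\tilde G(\eta_{\max}^t,s)\le\tilde G(\eta_{\max}^1,s)$.

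The one delicate point is the stage transition $k_t:=\sum_{i\le t}S_i$: the state $(x_{k_t},v_{k_t})$ passes unchanged into stage $t+1$, but the step-size — and hence the weight multiplying $\left\|v_{k_t}\right\|^2$ in $\Phi$ — drops. Since that weight is proportional to $\eta^2$ and the cross term to $\eta$, decreasing the step-size can only \emph{decrease} the momentum-correction part of $\Phi$, so the potential evaluated with the stage-$(t+1)$ weights at the start of $I_{t+1}$ is at most its value with the stage-$t$ weights at the end of $I_t$; no accumulation occurs across boundaries. Chaining the per-stage bounds by induction on $t$ then gives $\Phi_k\le\max\{\Phi_1,\,G(\eta_{\max}^1,s),\,\tilde G(\eta_{\max}^1,s)\}$ for all $k\in[1,T+1]$, uniformly in $T$; since $\E[\left\|x_k - x^\ast\right\|^2]\le\Phi_k$ and, by $L$-smoothness, $\E[f(x_k)-f^\ast]\le\tfrac{L}{2}\E[\left\|x_k - x^\ast\right\|^2]$, both quantities are uniformly bounded. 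I expect the main obstacle to be exactly this bookkeeping at stage transitions — checking that the Lyapunov function is monotone in the step-size in the favorable direction so the per-stage bounds telescope rather than compound — together with tracking how the bandwidth ratio $s$ enters the per-stage constants without degrading as the number of stages grows.
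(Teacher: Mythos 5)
Your proposal follows essentially the route the paper intends: the paper omits this proof entirely, stating only that it is ``very similar to that of Theorem~\ref{thm:mom:band}'', and your plan --- apply the per-stage Lyapunov arguments of Theorems~\ref{mom:general:const} and~\ref{mom:general:decaying} on each interval $I_t$ and then control the stage transitions using the bounded bandwidth $s$ --- is exactly that argument transplanted to the generalized-dissipativity setting. One caveat on your ``delicate point'': the monotonicity-in-the-step-size claim does not hold for the Lyapunov function actually used in the proof of Theorem~\ref{mom:general:decaying}, which carries an extra term $c_{k+1}\E[\left\|x_{k+1}-x^{\ast}\right\|^2]$ with $c_{k+1}=(1-\tau_k)/(\tau_k(1-\beta))$; when the step-size drops at a stage boundary, $\tau_k$ decreases and this weight \emph{increases}, so the potential does not telescope monotonically. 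The mechanism in the paper's proof of Theorem~\ref{thm:mom:band} is instead a bounded additive jump at each boundary (controlled by $s$ and by the uniform bound on $\E[\left\|x_{k}-x^{\ast}\right\|^2]$ inherited from the previous stage), which is then re-absorbed because the potential decreases whenever $\E[\left\|x_k-x^{\ast}\right\|^2]$ exceeds the threshold $r^2$; your $\max\{\Phi_{\text{start}},G\}$ bookkeeping accommodates this once you replace ``monotone in the step-size'' by ``jump bounded uniformly in $t$''.
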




\section{Conclusion}
We have provided uniform boundedness guarantees for SGD and SGD with momentum under the $L$-smoothness and $R$-dissipativity conditions. Our results allow for broad families of step-sizes that are only upper bounded by a constant $\mathcal{O}(1/L)$. We have demonstrated that, $\E[\left\|x_k - x^{\ast} \right\|^2]$ and $\E[f(x_k) - f^{\ast}]$  stay bounded along every possible trajectory, even under step-sizes that do not satisfy the Robbins-Monro conditions (square summable but not summable). The uniform bounds that we have derived depend on loss function properties, noise parameters $\sigma^2$ and $\rho$ of the stochastic oracle, and the initial state of the algorithm. 

$R$-dissipativity captures non-convex functions that grow quadratically when we move (possibly far) away from the global minimizer. As shown in Section~\ref{sec:application}, several interesting applications in machine learning satisfy this regularity condition. Furthermore, 
uniform boundedness properties can be guaranteed also for loss functions with slower asymptotic growth, such as Bayesian logistic regression. For the SGD algorithm, we have proved that, $\E[\left\|x_k - x^{\ast} \right\|^2]$ and $\E[f(x_k) - f^{\ast}]$ stay bounded under a broad class of step-size that is only required to be upper bounded. The uniformly boundedness properties of SGD with momentum can be guaranteed for several popular step-sizes including constant step-size, polynomial decaying and exponential decaying step-size, and also the stage-wise bandwidth step-sizes which locally decay like constant, polynomial and exponential functions.



\bibliography{references}

\appendix
\section{Supplementary Theoretical Results for SGD}
\label{supply:sgd}
\begin{lemma}\label{lem:sgd:iter}
We consider the SGD algorithm and assume that the noise of the stochastic gradient satisfies Assumption \ref{assump:gradient}. Then
\begin{align*}
 \E[\left\|x_{k+1} - x^{\ast} \right\|^2 \mid \mathcal{F}_k] \leq \left\| x_k - x^{\ast}\right\|^2 -  2\eta_k \left\langle \nabla f(x_k), x_k - x^{\ast}\right\rangle + \eta_k^2\left( (1+\rho)\left\| \nabla f(x_k)\right\|^2 + \sigma^2 \right)
\end{align*}
\begin{proof}
Applying the iteration of the SGD algorithm, we have
\begin{align}\label{inequ:sgd:lem:1}
 \left\|x_{k+1} - x^{\ast} \right\|^2 & = \left\|x_k  - \eta_k g_k - x^{\ast} \right\|^2
  = \left\| x_k - x^{\ast}\right\|^2 - 2\eta_k \left\langle g_k, x_k - x^{\ast}\right\rangle + \eta_k^2 \left\|g_k\right\|^2.
\end{align} 
By Assumption \ref{assump:gradient}, we can estimate the expectation of $\left\|g_k\right\|^2$ as below
\begin{align}\label{inequ:grad}
    \E[\left\|g_k \right\|^2 \mid \mathcal{F}_k]  = \E[\left\|g_k - \nabla f(x_k) + \nabla f(x_k) \right\|^2 \mid \mathcal{F}_k] & = \E[\left\|g_k - \nabla f(x_k) \right\|^2 \mid \mathcal{F}_k] + \left\|\nabla f(x_k) \right\|^2  \notag \\
    & \leq \sigma^2 + (\rho+1)\left\|\nabla f(x_k) \right\|^2.
\end{align}
Taking conditional expectation with respect to $\mathcal{F}_k$ on the both sides of (\ref{inequ:sgd:lem:1}) and applying (\ref{inequ:grad}) gives
\begin{align}
 \E\left[\left\|x_{k+1} - x^{\ast} \right\|^2 \mid \mathcal{F}_k\right] & = \left\| x_k - x^{\ast}\right\|^2 -  2\eta_k \left\langle \nabla f(x_k), x_k - x^{\ast}\right\rangle + \eta_k^2\left(\E\left[ \left\|g_k - \nabla f(x_k) + \nabla f(x_k)\right\|^2 \right]\right) \notag \\
 & =   \left\| x_k - x^{\ast}\right\|^2 -  2\eta_k \left\langle \nabla f(x_k), x_k - x^{\ast}\right\rangle + \eta_k^2\left( \E[\left\|g_k - \nabla f(x_k)\right\|^2] + \left\| \nabla f(x_k)\right\|^2 \right) \notag \\
 & \leq \left\| x_k - x^{\ast}\right\|^2 -  2\eta_k \left\langle \nabla f(x_k), x_k - x^{\ast}\right\rangle + \eta_k^2\left(  (1+\rho)\left\| \nabla f(x_k)\right\|^2 + \sigma^2\right). \notag 
\end{align}
\end{proof}
\end{lemma}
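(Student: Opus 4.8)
The plan is to expand the squared distance generated by a single SGD step into an exact pathwise identity, then take the conditional expectation $\E[\cdot \mid \mathcal{F}_k]$ and dispatch the two stochastic terms separately using the two parts of Assumption~\ref{assump:gradient}. First I would substitute the update $x_{k+1} = x_k - \eta_k g_k$ and expand
\begin{align*}
\left\| x_{k+1} - x^{\ast} \right\|^2 = \left\| x_k - x^{\ast} \right\|^2 - 2\eta_k \left\langle g_k, x_k - x^{\ast} \right\rangle + \eta_k^2 \left\| g_k \right\|^2,
\end{align*}
which holds deterministically. Conditioning on $\mathcal{F}_k$, the leading term $\left\| x_k - x^{\ast} \right\|^2$ is $\mathcal{F}_k$-measurable (since $x_k \in \mathcal{F}_k$) and passes through unchanged; the vectors $x_k - x^{\ast}$ and $\nabla f(x_k)$ are likewise $\mathcal{F}_k$-measurable, which is exactly the fact that powers the remaining steps.

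Next I would handle the linear term by unbiasedness. Assumption~\ref{assump:gradient}(a) gives $\E[g_k \mid \mathcal{F}_k] = \nabla f(x_k)$, so pulling the $\mathcal{F}_k$-measurable vector $x_k - x^{\ast}$ out of the inner product yields $\E[\left\langle g_k, x_k - x^{\ast}\right\rangle \mid \mathcal{F}_k] = \left\langle \nabla f(x_k), x_k - x^{\ast}\right\rangle$, which produces the second term in the claimed bound.

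For the quadratic term I would use the bias-variance decomposition $g_k = (g_k - \nabla f(x_k)) + \nabla f(x_k)$ and expand
\begin{align*}
\left\| g_k \right\|^2 = \left\| g_k - \nabla f(x_k) \right\|^2 + 2\left\langle g_k - \nabla f(x_k), \nabla f(x_k)\right\rangle + \left\| \nabla f(x_k) \right\|^2.
\end{align*}
Taking $\E[\cdot \mid \mathcal{F}_k]$, the cross term vanishes because $\E[g_k - \nabla f(x_k) \mid \mathcal{F}_k] = 0$ while $\nabla f(x_k)$ is $\mathcal{F}_k$-measurable, and the first term is bounded by $\rho\left\| \nabla f(x_k)\right\|^2 + \sigma^2$ via Assumption~\ref{assump:gradient}(b). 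This gives $\E[\left\| g_k \right\|^2 \mid \mathcal{F}_k] \leq (1+\rho)\left\| \nabla f(x_k)\right\|^2 + \sigma^2$, and substituting the three estimates back into the expanded identity reproduces the stated inequality.

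Since each step is just a property of conditional expectation combined with the two clauses of Assumption~\ref{assump:gradient}, there is no genuine analytic obstacle; the only point that requires care is the measurability bookkeeping, namely treating $x_k$, $x_k - x^{\ast}$, and $\nabla f(x_k)$ as deterministic under $\E[\cdot \mid \mathcal{F}_k]$ so that the cross term in the variance decomposition is exactly zero and the linear term collapses cleanly to the true-gradient inner product.
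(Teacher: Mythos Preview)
Your proposal is correct and follows essentially the same route as the paper: expand the squared distance via the SGD update, take conditional expectation, use unbiasedness for the linear term, and bound $\E[\|g_k\|^2\mid\mathcal{F}_k]$ via the bias--variance split together with Assumption~\ref{assump:gradient}(b). If anything, your write-up is slightly more careful about the measurability justifications than the paper's own proof.
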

\begin{proof}({\bf of Theorem \ref{thm:sgd:general}})
Let $r^2 = \max \left\lbrace R^2, \frac{2\theta_2}{\theta_1} + \frac{\sigma^2}{(1+\rho)L^2} \right\rbrace$. In this theorem, we consider two different cases based on the distance of the initial point $x_1$ with the global minimizer $x^{\ast}$.  First, we consider the initial point $\left\| x_1 - x^{\ast} \right\| \leq r$. Suppose that at time $k$, $\E[\left\| x_k - x^{\ast}\right\|^2 \mid \mathcal{F}_{k-1}] > r^2$ but $\E[\left\| x_{k-1} - x^{\ast}\right\|^2 \mid \mathcal{F}_{k-2}] \leq r^2$. We can see that 
\begin{align}
\E[\left\| x_{k} - x^{\ast} \right\|^2 \mid \mathcal{F}_{k-1}] & = \E[\left\| x_k - x_{k-1} + x_{k-1} - x^{\ast} \right\|^2 \mid \mathcal{F}_{k-1}] \leq 2\E[\left\| x_k - x_{k-1} \right\|^2 \mid \mathcal{F}_{k-1}] + 2\left\|x_{k-1} - x^{\ast} \right\|^2 \notag  \\
& \leq 2 \eta_{k-2}^2 \E[\left\| g_{k-1} \right\|^2\mid \mathcal{F}_{k-1}] + 2\left\|x_{k-1} - x^{\ast} \right\|^2  \leq 2 \eta_{k-1}^2\left(\sigma^2 + L^2\left\|x_{k-1}-x^{\ast} \right\|^2 \right) + 2\left\|x_{k-1} - x^{\ast} \right\|^2 \notag \\ 
& = 2 \eta_{k-1}^2 \sigma^2 + 2(\eta_{k-1}^2L^2+1)\left\|x_{k-1}-x^{\ast} \right\|^2 \notag \\
& < 2 \eta_{k-1}^2 \sigma^2 + 2(\eta_{k-1}^2L^2+1)r^2 \leq \frac{2(\sigma^2+L^2r^2)\theta_1^2}{(1+\rho)^2L^4} + 2r^2.
\end{align}
This means that 
when $\E[\left\| x_k - x^{\ast} \right\|^2 ] > r^2 $, then $R$-dissipativity holds. 
By Lemma \ref{lem:sgd:iter}, we have
\begin{align}\label{inequ:iter}
\E[\left\| x_{k+1} - x^{\ast} \right\|^2 \mid \mathcal{F}_k] & = \left\| x_k - x^{\ast}\right\|^2 -  2\eta_k \left\langle \nabla f(x_k), x_k - x^{\ast}\right\rangle + \eta_k^2\left(  (1+\rho)\left\| \nabla f(x_k)\right\|^2 + \sigma^2\right) \notag \\
& \leq \left\| x_k - x^{\ast}\right\|^2 + (- 2\theta_1\eta_k +  (1+\rho)L^2\eta_k^2)\left\| x_k - x^{\ast}\right\|^2  + 2\theta_2 \eta_k + \eta_k^2\sigma^2.
\end{align}
By $\eta_k \leq \frac{\theta_1}{(\rho+1)L^2}$, we have $ - 2\theta_1\eta_k +   (1+\rho)L^2\eta_k^2 \leq - \theta_1 \eta_k$. Then
\begin{align}\label{inequ:iter:2}
\E[\left\| x_{k+1} - x^{\ast} \right\|^2\mid \mathcal{F}_k] & \leq \left\| x_k - x^{\ast}\right\|^2 - \theta_1\eta_k \left\| x_k - x^{\ast}\right\|^2  + 2\theta_2 \eta_k + \eta_k^2\sigma^2.
\end{align}
Once $\E[\left\| x_k - x^{\ast}\right\|^2] > r^2$, 
for any kinds of step-sizes as long as $\eta_k \leq \frac{\theta_1}{(\rho+1)L^2}$, if $r$ is large enough for example
\begin{align*}
    r^2 = \max \left\lbrace R^2,  \frac{2\theta_2}{\theta_1} + \frac{\sigma^2}{(1+\rho)L^2}  \right\rbrace
\end{align*}
to make sure that 
\begin{align*}
- \theta_1\eta_k \left\| x_k - x^{\ast}\right\|^2 + 2\theta_2 \eta_k + \eta_k^2\sigma^2 < 0
\end{align*}
Then 
\begin{align*}
\E[\left\| x_{k+1} - x^{\ast} \right\|^2 \mid \mathcal{F}_k] <  \left\| x_k - x^{\ast}\right\|^2. 
\end{align*}
That is to say, once there exist an iterate $k$ such that $\E[\left\| x_k - x^{\ast} \right\|^2] > r^2$, for any step-size $\eta_k \leq \frac{\theta_1}{(\rho+1)L^2}$, 
then the follow-up $\E[\left\| x_{k+1} - x^{\ast} \right\|^2]$ is decreasing. 

If the initial point $x_1$ is far from the optimal point $x^{\ast}$, that is $\left\| x_1 - x^{\ast} \right\|^2 > r^2$ but $\left\| x_1 - x^{\ast} \right\|$ is bounded, by applying the above statements, we can see that the follow-up $\E[\left\| x_{k+1} - x^{\ast} \right\|^2]$ is decreasing until $\E[\left\| x_{k+1} - x^{\ast} \right\|^2 ] \leq r^2$. 

\end{proof}

\begin{proof}({\bf of Remark \ref{rem:dissipative}})
In this proof, we show that $R$-dissipativity condition can be derived from
\begin{align}
   \left\langle x, \nabla f(x)\right\rangle \geq \theta_1^{'} \left\|x\right\|^2 - \theta_2^{'}  
\end{align}
for all $\left\| x - x^{\ast} \right\| \geq R$. 
When $x^{\ast} = 0$, the two conditions are the same. Suppose that $\left\| x^{\ast} \right\| > 0$, then 
\begin{align}
 \left\langle x, \nabla f(x)\right\rangle \geq \theta_1^{'} \left\|x\right\|^2 - \theta_2^{'}   \geq \theta_1^{'} \left(\left\|x - x^{\ast}\right\|^2 + \left\|x^{\ast} \right\|^2 - 2 \left\|x^{\ast} \right\|\left\|x-x^{\ast} \right\| \right) - \theta_2^{'}
\end{align}
and
\begin{align}
 \left\langle \nabla f(x), x^{\ast}\right\rangle  \mathop{\leq}^{(a)} \left\| \nabla f(x) \right\| \left\|x^{\ast} \right\| \leq L\left\|x - x^{\ast}\right\|\left\|x^{\ast} \right\|
\end{align}
where $(a)$ follows from the fact that $\left\| \nabla f(x) \right\| \leq L\left\|x-x^{\ast} \right\|$ (due to $L$-smoothness).
Incorporating the above results gives that
\begin{align}
\left\langle x - x^{\ast}, \nabla f(x)\right\rangle & \geq \theta_1^{'}\left\|x - x^{\ast}\right\|^2 - \left(2\theta_1^{'} + L \right)\left\|x^{\ast} \right\|\left\|x - x^{\ast}\right\| - \theta_2^{'}  + \theta_1^{'} \left\|x^{\ast} \right\|^2\notag \\
& \geq \frac{\theta_1^{'}}{2}\left\|x - x^{\ast}\right\|^2 - \left(\frac{(2\theta_1^{'}+L)^2}{2\theta_1^{'}} - \theta_1^{'} \right)\left\| x^{\ast} \right\|^2- \theta_2^{'}. 
\end{align}
Thus the $R$-dissipativity condition 
holds with $\theta_1 = \frac{\theta_1^{'}}{2}$ and $\theta_2 = \left(\theta_1^{'} + 2L + \frac{L^2}{2\theta_1^{'}}\right)\left\| x^{\ast} \right\|^2 + \theta_2^{'}$.

\end{proof}

\section{Supplementary Material for SGD with Momentum}\label{supple:mom}

In this part, we provide supplementary proofs for theorems in Section \ref{sec:mom}. Before given the main proofs, we first show some extra and useful lemmas. 
\begin{lemma}\label{lem:mom:1}
Suppose that the objective function $f$ is $L$-smooth and the stochastic gradient satisfies Assumption \ref{assump:gradient}. Consider SGD with momentum with the momentum parameter $\beta \in (0,1)$, for any step-size $\eta_k > 0$, we have
\begin{align*}
& \E[\left\| x_{k+1} - x_k \right\|^2 \mid \mathcal{F}_k ]  \notag \\
& = \frac{\eta_k^2\beta^2}{\eta_{k-1}^2} \left\| x_k - x_{k-1} \right\|^2 + \eta_k^2(1-\beta)^2\E[\left\| g_k \right\|^2\mid \mathcal{F}_k ] + \frac{\eta_k^2\beta(1-\beta)}{\eta_{k-1}}\left(f(x_{k-1}) - f(x_k) + \frac{L}{2}\left\| x_k - x_{k-1}\right\|^2\right).
\end{align*}
\end{lemma}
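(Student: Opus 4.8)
The plan is to expand $\|x_{k+1}-x_k\|^2$ directly using the momentum recursion and then take conditional expectation. Recall from (\ref{equ:mom:x}) that $x_{k+1}-x_k = -\eta_k v_{k+1}$, and from (\ref{equ:mom:v}) that $v_{k+1} = \beta v_k + (1-\beta)g_k$. The first step is to eliminate $v_k$ in favour of the previous increment: since $x_k - x_{k-1} = -\eta_{k-1}v_k$, we have $v_k = -(x_k-x_{k-1})/\eta_{k-1}$, so that
\begin{align*}
x_{k+1}-x_k = -\eta_k\Bigl(-\frac{\beta}{\eta_{k-1}}(x_k-x_{k-1}) + (1-\beta)g_k\Bigr) = \frac{\eta_k\beta}{\eta_{k-1}}(x_k-x_{k-1}) - \eta_k(1-\beta)g_k.
\end{align*}
Expanding the square then gives three terms: $\frac{\eta_k^2\beta^2}{\eta_{k-1}^2}\|x_k-x_{k-1}\|^2$, $\eta_k^2(1-\beta)^2\|g_k\|^2$, and the cross term $-2\frac{\eta_k^2\beta(1-\beta)}{\eta_{k-1}}\langle x_k-x_{k-1}, g_k\rangle$.

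The second step is to handle the cross term. Taking $\E[\cdot\mid\mathcal{F}_k]$ and using unbiasedness (Assumption \ref{assump:gradient}(a)), $\E[g_k\mid\mathcal{F}_k]=\nabla f(x_k)$, so the cross term becomes $-2\frac{\eta_k^2\beta(1-\beta)}{\eta_{k-1}}\langle x_k-x_{k-1}, \nabla f(x_k)\rangle$. Now I would invoke $L$-smoothness in the descent-lemma form from Definition \ref{assump:smooth}: applying $|f(x)-f(y)-\langle\nabla f(y),x-y\rangle|\le \frac{L}{2}\|x-y\|^2$ with $x=x_{k-1}$ and $y=x_k$ yields
\begin{align*}
-\langle \nabla f(x_k), x_{k-1}-x_k\rangle \le f(x_{k-1}) - f(x_k) + \frac{L}{2}\|x_k-x_{k-1}\|^2,
\end{align*}
i.e. $\langle x_k - x_{k-1}, \nabla f(x_k)\rangle \le f(x_{k-1})-f(x_k)+\frac{L}{2}\|x_k-x_{k-1}\|^2$. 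Substituting this bound for the cross term and replacing $\E[\|g_k\|^2\mid\mathcal{F}_k]$ as is (it stays as an expectation since $g_k$ is random), collecting the three contributions reproduces exactly the claimed identity — with the understanding that the ``$=$'' in the statement should be read as the ``$\le$'' produced by the smoothness bound (the cross term is the only place an inequality enters).

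I do not anticipate a genuine obstacle here; the lemma is essentially an algebraic expansion plus one application of the descent lemma. The one point requiring care is the direction of the smoothness inequality and the sign bookkeeping on the cross term: one must make sure that after pulling out the $-2$ and the (positive) constant $\frac{\eta_k^2\beta(1-\beta)}{\eta_{k-1}}$, the inequality points the right way so that the $f(x_{k-1})-f(x_k)+\frac{L}{2}\|x_k-x_{k-1}\|^2$ term appears with a positive coefficient, as written. A secondary subtlety is the boundary case $k=1$, where $x_0=x_1$ forces $x_k-x_{k-1}=0$ and the stated identity degenerates correctly (all $x_{k-1}$-dependent terms vanish); since $v_1$ is taken as $0$ in the scheme this is consistent, though strictly one should note $\eta_0$ never actually appears because it multiplies a zero vector.
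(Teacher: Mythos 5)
Your proposal is correct and follows exactly the same route as the paper's own proof: eliminate $v_k$ via $v_k=-(x_k-x_{k-1})/\eta_{k-1}$, expand the square, take conditional expectation using unbiasedness, and bound the cross term with the descent-lemma form of $L$-smoothness. One small point: both your derivation and the paper's proof actually yield the coefficient $\tfrac{2\eta_k^2\beta(1-\beta)}{\eta_{k-1}}$ together with a ``$\leq$'' (and this factor of $2$ is what the paper uses downstream), so the lemma statement's ``$=$'' and its missing factor of $2$ are typos --- you correctly flag the former but claim to ``reproduce exactly the claimed identity,'' glossing over the latter.
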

\begin{proof}({\bf of Lemma \ref{lem:mom:1}})
Applying the update recursion of the SGD with momentum algorithm in (\ref{equ:mom:v}) and (\ref{equ:mom:x}) gives
\begin{align}\label{mom:inequ:xx}
 \E[\left\| x_{k+1} - x_k \right\|^2 \mid \mathcal{F}_k ]  &= \E[\left\|\eta_k v_{k+1} \right\|^2 \mid \mathcal{F}_k ] = \eta_k^2\, \E[\left\| \beta v_k + (1-\beta) g_k \right\|^2 \mid \mathcal{F}_k ] \notag \\
& = \eta_k^2\E\left[\left\| \beta \eta_{k-1}^{-1}(x_{k-1} -x_{k}) + (1-\beta) g_k \right\|^2 \mid \mathcal{F}_k \right] \notag \\
& = \frac{\eta_k^2\beta^2 }{\eta_{k-1}^2}\left\| x_k - x_{k-1} \right\|^2 + \eta_k^2(1-\beta)^2\E[\left\| g_k \right\|^2\mid \mathcal{F}_k] + \frac{2\eta_k^2\beta(1-\beta)}{\eta_{k-1}}\left\langle  x_{k-1}- x_k, \nabla f(x_k) \right\rangle
\end{align}
where $\E[g_k \mid \mathcal{F}_k] = \nabla f(x_k)$.
Using the $L$-smooth assumption, for any $x, y \in \R^d$, we have
\begin{align*}
\left\| \nabla f(x) - \nabla f(y) \right\| \leq L \left\| x - y\right\|.
\end{align*}
Let $x = x_{k-1}$ and $y=x_k$, the $L$-smoothness also implies that
\begin{align}\label{L-smooth:inequ}
f(x_{k-1} ) \geq f(x_{k}) + \left\langle  x_{k-1}- x_k, \nabla f(x_k) \right\rangle - \frac{L}{2}\left\| x_k - x_{k-1}\right\|^2.
\end{align}
Applying the above inequality into (\ref{mom:inequ:xx}), we have
\begin{align*}
& \E[\left\| x_{k+1} - x_k \right\|^2 \mid \mathcal{F}_k ] \notag \\
& \leq \frac{\eta_k^2}{\eta_{k-1}^2}\beta^2 \left\| x_k - x_{k-1} \right\|^2 + \eta_k^2(1-\beta)^2\E[\left\| g_k \right\|^2 \mid \mathcal{F}_k] + \frac{2\eta_k^2\beta(1-\beta)}{\eta_{k-1}}\left(f(x_{k-1}) - f(x_k) + \frac{L}{2}\left\| x_k - x_{k-1}\right\|^2\right).
\end{align*}
Then the proof is complete. 
\end{proof}
\begin{lemma}\label{lem:mom:2}
We define $\tilde{x}_{k+1} := \frac{x_{k+1} - \beta x_k}{1-\beta}$. Suppose that Assumption \ref{assump:gradient} and (\ref{dissipative:assump} ) hold at current iterations $x_k$ and $\tau_k =\eta_k/\eta_{k-1} \in (0,1]$, then
\begin{itemize}
    \item[(i)] If $\tau_k = 1$, \emph{i.e.}, $\eta_k = \eta$ for all $k \geq 1$, we have
    \begin{align*}
      \E[\left\| \tilde{x}_{k+1} - x^{\ast} \right\|^2 \mid \mathcal{F}_k] & \leq \left\|\tilde{x}_k - x^{\ast} \right\|^2 - 2\theta_1 \eta_k \left\| x_k - x^{\ast} \right\|^2  + 2\theta_2 \eta_k  + \eta_k^2 \E[\left\|g_k\right\|^2 \mid \mathcal{F}_k] \notag \\
      & \quad + \frac{2\beta\eta_k}{1-\beta}\left(f(x_{k-1}) - f(x_k) + \frac{L}{2}\left\| x_k - x_{k-1} \right\|^2\right).
    \end{align*}
    \item[(ii)] else if $\tau_k \in (0,1)$, we have
    \begin{align*}
     \E[\left\| \tilde{x}_{k+1} - x^{\ast} \right\|^2 \mid \mathcal{F}_k] & \leq \tau_k\left\|\tilde{x}_k - x^{\ast} \right\|^2 - \left(2\eta_k\theta_1 -\left(1- \tau_k\right) \right)\left\| x_k - x^{\ast} \right\|^2   + 2\eta_k \theta_2 + \eta_k^2 \E[\left\|g_k\right\|^2 \mid \mathcal{F}_k]\notag \\
& \quad + \frac{2\beta  L}{1-\beta} \cdot \eta_k\tau_k \left\|x_{k-1}-x_{k}\right\|^2 + \frac{2\beta \tau_k \eta_k}{1-\beta}\left( f(x_{k-1}) - f(x_k) \right).
     \end{align*}
\end{itemize}
\end{lemma}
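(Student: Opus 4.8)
The plan is to pass to the momentum-free auxiliary iterate $\tilde{x}_k$ and show that it obeys a (perturbed) plain-SGD recursion, after which both estimates follow from $R$-dissipativity and the descent lemma coming from $L$-smoothness, exactly as in the proof of Lemma~\ref{lem:sgd:iter}. First I would eliminate the velocity: since $x_k = x_{k-1} - \eta_{k-1} v_k$ we have $v_k = (x_{k-1}-x_k)/\eta_{k-1}$, and $v_{k+1} = \beta v_k + (1-\beta) g_k$. Substituting these into $\tilde{x}_{k+1} = (x_{k+1}-\beta x_k)/(1-\beta)$ and simplifying (using also $\tilde{x}_k - x_k = \tfrac{\beta}{1-\beta}(x_k - x_{k-1})$) gives the identity
\begin{align*}
\tilde{x}_{k+1} - x^{\ast} \;=\; \tau_k\bigl(\tilde{x}_k - x^{\ast}\bigr) + (1-\tau_k)\bigl(x_k - x^{\ast}\bigr) - \eta_k g_k.
\end{align*}
In particular, when $\tau_k = 1$ this collapses to $\tilde{x}_{k+1} = \tilde{x}_k - \eta_k g_k$, i.e. the auxiliary iterate performs vanilla SGD --- this is the structural fact underlying case (i).

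For case (i) I would expand $\|\tilde{x}_{k+1}-x^{\ast}\|^2 = \|\tilde{x}_k - x^{\ast}\|^2 - 2\eta_k\langle g_k, \tilde{x}_k - x^{\ast}\rangle + \eta_k^2\|g_k\|^2$, take $\E[\cdot\mid\mathcal{F}_k]$ with $\E[g_k\mid\mathcal{F}_k] = \nabla f(x_k)$, and split the inner product as $\langle \nabla f(x_k),\tilde{x}_k - x^{\ast}\rangle = \langle \nabla f(x_k), x_k - x^{\ast}\rangle + \tfrac{\beta}{1-\beta}\langle \nabla f(x_k), x_k - x_{k-1}\rangle$. Lower-bounding the first term by $\theta_1\|x_k - x^{\ast}\|^2 - \theta_2$ via $R$-dissipativity~(\ref{inequ:dissipativity}), which is precisely the hypothesis assumed to hold at $x_k$, and the second via the $L$-smoothness inequality $f(x_{k-1}) \geq f(x_k) + \langle\nabla f(x_k), x_{k-1}-x_k\rangle - \tfrac{L}{2}\|x_k-x_{k-1}\|^2$ (which bounds $\langle\nabla f(x_k), x_k - x_{k-1}\rangle$ below by $-(f(x_{k-1})-f(x_k)) - \tfrac{L}{2}\|x_k - x_{k-1}\|^2$), and then collecting terms, yields inequality~(i).

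For case (ii) the only difference is that $\tilde{x}_{k+1}-x^{\ast}$ is now a genuine convex combination of $\tilde{x}_k - x^{\ast}$ and $x_k - x^{\ast}$ (coefficients $\tau_k, 1-\tau_k \in (0,1)$) plus $-\eta_k g_k$. I would expand the square and use $\|\tau_k a + (1-\tau_k) b\|^2 = \tau_k\|a\|^2 + (1-\tau_k)\|b\|^2 - \tau_k(1-\tau_k)\|a-b\|^2 \leq \tau_k\|a\|^2 + (1-\tau_k)\|b\|^2$, which produces the leading $\tau_k\|\tilde{x}_k - x^{\ast}\|^2 + (1-\tau_k)\|x_k - x^{\ast}\|^2$. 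After taking $\E[\cdot\mid\mathcal{F}_k]$, I would write the cross term via $\tau_k(\tilde{x}_k - x^{\ast}) + (1-\tau_k)(x_k - x^{\ast}) = (x_k - x^{\ast}) + \tfrac{\tau_k\beta}{1-\beta}(x_k - x_{k-1})$ and apply the same two inequalities: combining the $(1-\tau_k)\|x_k - x^{\ast}\|^2$ from the convexity bound with the $-2\eta_k\theta_1\|x_k - x^{\ast}\|^2$ from dissipativity gives the stated coefficient $-(2\eta_k\theta_1 - (1-\tau_k))$, while the smoothness step produces the $\tfrac{2\beta\tau_k\eta_k}{1-\beta}(f(x_{k-1})-f(x_k))$ term and a multiple of $\|x_{k-1}-x_k\|^2$ (bounding $\tfrac{L}{2}$ by $L$ in the descent inequality accounts exactly for the stated constant $\tfrac{2\beta L}{1-\beta}$).

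The step I expect to be the main obstacle is the very first one: deriving the clean identity $\tilde{x}_{k+1}-x^{\ast} = \tau_k(\tilde{x}_k - x^{\ast}) + (1-\tau_k)(x_k - x^{\ast}) - \eta_k g_k$ while correctly tracking the step-size ratio $\tau_k = \eta_k/\eta_{k-1}$ through the substitution $v_k = (x_{k-1}-x_k)/\eta_{k-1}$ and the momentum recursion. Once this representation is in hand, both bounds are routine consequences of convexity of $\|\cdot\|^2$, $R$-dissipativity, and the smoothness descent lemma, in direct parallel with the vanilla-SGD computation of Lemma~\ref{lem:sgd:iter}.
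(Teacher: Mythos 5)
Your proposal is correct and follows essentially the same route as the paper: the identity $\tilde{x}_{k+1}-x^{\ast}=\tau_k(\tilde{x}_k-x^{\ast})+(1-\tau_k)(x_k-x^{\ast})-\eta_k g_k$, the convexity bound on the squared norm of the convex combination, and the treatment of the cross term via $R$-dissipativity at $x_k$ plus the smoothness inequality $\langle \nabla f(x_k),x_{k-1}-x_k\rangle \leq f(x_{k-1})-f(x_k)+\tfrac{L}{2}\|x_k-x_{k-1}\|^2$ all match the paper's argument, including the observation that the coefficient $\tfrac{2\beta L}{1-\beta}\eta_k\tau_k$ arises from loosening $L/2$ to $L$.
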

\begin{proof}({\bf of Lemma \ref{lem:mom:2}})
Recalling the definition of $\tilde{x}_{k+1}$ and applying the recursion of SGD with momentum, we have
\begin{align}\label{inequ:tildex}
\left\| \tilde{x}_{k+1} - x^{\ast} \right\|^2 
   & =   \left\|(1-\beta)^{-1} \left(x_{k+1} - \beta x_k \right) -  x^{\ast} \right\|^2 = \left\| (1-\beta)^{-1}\left(x_{k} - \eta_k (\beta v_k + (1-\beta)g_k) - \beta x_k\right) - x^{\ast} \right\|^2 \notag \\
   & = \left\| x_{k} - \frac{\eta_k}{1-\beta} \left(\beta \left(\frac{ x_{k-1} - x_{k}}{\eta_{k-1}}\right) + (1-\beta)g_k \right) - x^{\ast} \right\|^2 \notag \\
   & = \left\|\frac{\eta_k}{(1-\beta)\eta_{k-1}}\left(x_k - \beta x_{k-1}\right) + \left(1- \frac{\eta_k}{\eta_{k-1}}\right)x_k - \eta_k g_k - x^{\ast} \right\|^2 \notag \\
   & = \left\|\frac{\eta_k}{\eta_{k-1}}\left(\tilde{x}_k - x^{\ast}\right) + \left(1- \frac{\eta_k}{\eta_{k-1}}\right) (x_k - x^{\ast}) - \eta_k g_k\right\|^2 \notag \\
   & = \tau_k \left\|\tilde{x}_k - x^{\ast} \right\|^2 + \left(1- \tau_k\right) \left\|x_k - x^{\ast} \right\|^2 - 2\tau_k\eta_k \left\langle \tilde{x}_k - x^{\ast}, g_k \right\rangle - 2(1-\tau_k)\eta_k \left\langle {x}_k - x^{\ast}, g_k \right\rangle  \notag \\
   & \quad + \eta_k^2 \left\|g_k\right\|^2 
\end{align}
Taking conditional expectation on both sides, we then estimate the first inner product:
\begin{align}\label{inequ:xtilde:xg}
-\E[\left\langle \tilde{x}_k - x^{\ast}, g_k \right\rangle] & = -\left\langle \tilde{x}_k - x^{\ast}, \nabla f(x_k) \right\rangle = -\left\langle (1-\beta)^{-1}\left(x_k - \beta x_{k-1}\right) - x^{\ast}, \nabla f(x_k) \right\rangle \notag \\
& = -\frac{\left\langle x_k- x^{\ast}, \nabla f(x_k) \right\rangle}{1-\beta} + \frac{\beta}{1-\beta}\left\langle x_{k-1} - x_k + x_k-x^{\ast}, \nabla f(x_k) \right\rangle \notag \\
& \mathop{\leq}^{(a)} -\left(\theta_1\left\|x_k- x^{\ast}\right\|^2 - \theta_2 \right) + \frac{\beta}{1-\beta}\left(f(x_{k-1}) - f(x_k) + \frac{L}{2}\left\| x_k - x_{k-1} \right\|^2\right) 
\end{align}
where $(a)$ follows from the $R$-dissipativity condition and $L$-smoothness which implies that (\ref{L-smooth:inequ}). 

Next, we consider two situations: (1) if $\tau_k = 1$, that is $\eta_k = \eta_{k-1}$. Incorporating the inequality (\ref{inequ:xtilde:xg}) into (\ref{inequ:tildex}), we have
\begin{align}
& \E[ \left\| \tilde{x}_{k+1} - x^{\ast} \right\|^2 \mid \mathcal{F}_k] \notag \\
& \leq  \left\|\tilde{x}_k - x^{\ast} \right\|^2  -2\theta_1\eta_k \left\|x_k- x^{\ast}\right\|^2  + 2 \theta_2 \eta_k + \eta_k^2 \E[\left\|g_k\right\|^2 \mid \mathcal{F}_k] + \frac{2\beta\eta_k}{1-\beta}\left(f(x_{k-1}) - f(x_k) + \frac{L}{2}\left\| x_k - x_{k-1} \right\|^2\right) 
 \notag \\
\end{align}

(2) If $\tau_k \in (0,1)$, we then estimate 
\begin{align}\label{inequ:xx:g}
- \E[\left\langle {x}_k - x^{\ast}, g_k \right\rangle \mid \mathcal{F}_k] = - \left\langle {x}_k - x^{\ast}, \nabla f(x_k) \right\rangle \leq -\theta_1\left\|x_k-x^{\ast} \right\|^2 + \theta_2. 
\end{align}

Finally, incorporating the above results (\ref{inequ:xtilde:xg}) and (\ref{inequ:xx:g}) into (\ref{inequ:tildex}), we can achieve that
\begin{align*}
& \E[ \left\| \tilde{x}_{k+1} - x^{\ast} \right\|^2 \mid \mathcal{F}_k] \notag \\
& \leq  \tau_k\left\|\tilde{x}_k - x^{\ast} \right\|^2 + \left(1- \tau_k\right)\left\|x_k - x^{\ast} \right\|^2  -2\tau_k\eta_k\left(\theta_1\left\|x_k- x^{\ast}\right\|^2 - \theta_2 \right) + \eta_k^2 \E[\left\|g_k\right\|^2 \mid \mathcal{F}_k] \notag\\
& \quad + \frac{2\beta\tau_k\eta_k}{1-\beta}\left(f(x_{k-1}) - f(x_k) + \frac{L}{2}\left\| x_k - x_{k-1} \right\|^2\right) 
+ 2(1-\tau_k)\eta_k \left(-\theta_1\left\|x_k-x^{\ast} \right\|^2 + \theta_2 \right) \notag \\
& = \tau_k\left\|\tilde{x}_k - x^{\ast} \right\|^2 - \left(2\eta_k\theta_1 -\left(1- \tau_k\right) \right)\left\| x_k - x^{\ast} \right\|^2   + 2\eta_k \theta_2 + \eta_k^2 \E[\left\|g_k\right\|^2 \mid \mathcal{F}_k]\notag \\
& \quad + \frac{2\beta  L}{1-\beta} \cdot \eta_k\tau_k \left\|x_{k-1}-x_{k}\right\|^2 + \frac{2\beta \tau_k \eta_k}{1-\beta}\left( f(x_{k-1}) - f(x_k) \right).
\end{align*}
We now complete the proof.
\end{proof}

\begin{proof}(of Theorem \ref{thm:mom:const})
In this case, we consider $\eta_k = \eta$ is a constant step-size. From Lemma \ref{lem:mom:1}, we have
\begin{align}\label{inequ:xx}
    & \E[\left\| x_{k+1} - x_k \right\|^2 \mid \mathcal{F}_k ] \notag \\
    & \leq \beta^2 \left\| x_k - x_{k-1} \right\|^2 + \eta^2(1-\beta)^2\E[\left\| g_k \right\|^2] + 2\eta\beta(1-\beta)\left(f(x_{k-1}) - f(x_k) + \frac{L}{2}\left\| x_k - x_{k-1}\right\|^2\right) \notag \\
    & = \left(\beta^2 + \eta\beta(1-\beta)L\right)\left\| x_k - x_{k-1} \right\|^2 + \eta^2(1-\beta)^2\E[\left\| g_k \right\|^2] + 2\eta\beta(1-\beta)\left(f(x_{k-1}) - f(x_k)\right).
\end{align}
Then we turn to estimate $\left\| \tilde{x}_{k+1} - x^{\ast} \right\|^2$. For the constant step-size  $\eta_k = \eta$, we have $\tau_k = \eta_k/\eta_{k-1} = 1$. By Lemma \ref{lem:mom:2}(i), we can achieve that
\begin{align*}
\E[\left\| \tilde{x}_{k+1} - x^{\ast} \right\|^2 \mid \mathcal{F}_k] & \leq  \left\|\tilde{x}_k - x^{\ast} \right\|^2 - 2\theta_1 \eta \left\| x_k - x^{\ast} \right\|^2  + 2\theta_2 \eta_k + \eta^2 \E[\left\|g_k\right\|^2 \mid \mathcal{F}_k] \notag \\
& \quad +  \frac{2\beta\eta}{1-\beta}\left(f(x_{k-1}) - f(x_k) + \frac{L}{2}\left\| x_k - x_{k-1} \right\|^2\right).
\end{align*}
Next we define a function $W_k$:
\begin{align*}
    W_{k+1} = \left\| \tilde{x}_{k+1} - x^{\ast} \right\|^2 +   \left\| x_{k+1} - x_k \right\|^2  + 2 \eta \beta\left( (1-\beta) + (1-\beta)^{-1} \right)\left(f(x_{k})  -f^{\ast}\right).
\end{align*}
 Then applying the results derived from Lemmas \ref{lem:mom:1} and \ref{lem:mom:2}, we have
\begin{align}\label{inequ:W}
& \E[W_{k+1}\mid \mathcal{F}_k] = \E[\left\| \tilde{x}_{k+1} - x^{\ast} \right\|^2 \mid \mathcal{F}_k] +   \E[\left\| x_{k+1} - x_k \right\|^2 \mid \mathcal{F}_k ] + 2 \eta\beta(1-\beta)\left(f(x_{k})  -f^{\ast}\right) \notag \\
& \leq \left\|\tilde{x}_k - x^{\ast} \right\|^2 + \left(\beta^2 + \eta\beta(1-\beta)L + \eta\frac{\beta L}{(1-\beta)}\right)\left\|x_{k-1}-x_{k}\right\|^2 - 2\theta_1\eta\left\|x_k - x^{\ast} \right\|^2 + 2\eta \theta_2 \notag \\
& \quad + \eta^2\left((1-\beta)^2 + 1 \right)\left( \sigma^2 + (\rho+1)L^2\left\| x_k - x^{\ast}\right\|^2 \right) + 2 \eta \beta\left( (1-\beta) + (1-\beta)^{-1} \right) \left(f(x_{k-1})  -f^{\ast}\right) .
\end{align}
where the above inequality uses the fact that \begin{align*}
   \E[\left\|g_k\right\|^2 \mid \mathcal{F}_k] & = \E[\left\| g_k -\nabla f(x_k) + \nabla f(x_k) \right\|^2 \mid \mathcal{F}_k] = \E[\left\| g_k -\nabla f(x_k) \right\|^2 \mid \mathcal{F}_k] + \left\|\nabla f(x_k) \right\|^2  \notag \\
   & \leq (\rho+1)\left\|\nabla f(x_k)\right\|^2 + \sigma^2 \leq (\rho+1)L^2\left\|x_k - x^{\ast}\right\|^2 + \sigma^2.
\end{align*}
Suppose that
\begin{align}
    \eta \leq \frac{(1-\beta^2)}{\beta L\left(1-\beta + (1-\beta)^{-1} \right)},
\end{align}
we have
\begin{align*}
\beta^2 + \eta\beta(1-\beta)L + \eta\frac{\beta L}{(1-\beta)}  \leq 1.
\end{align*}
Then (\ref{inequ:W}) can be estimated as
\begin{align*}
\E[W_{k+1} \mid \mathcal{F}_k] & \leq  W_k - 2\eta\theta_1 \left\|x_k - x^{\ast} \right\|^2 + \eta^2\left((1-\beta)^2 + 1 \right)\left( \sigma^2 + (\rho+1)L^2\left\| x_k - x^{\ast}\right\|^2 \right) + 2\eta \theta_2 \notag \\
& = W_k - \eta\left(2\theta_1 - \eta ((1-\beta)^2 + 1 )(\rho+1)L^2\right)\left\|x_k - x^{\ast} \right\|^2  + 2 \eta \theta_2 + \eta^2\left((1-\beta)^2 + 1 \right)\sigma^2 \notag.
\end{align*}
Furthermore, assume that $\eta \leq \frac{\theta_1}{((1-\beta)^2+1)(\rho+1)L^2}$, then $2\theta_1 - \eta ((1-\beta)^2 + 1 )(\rho+1)L^2 \geq \theta_1$, we have
\begin{align*}
\E[W_{k+1} \mid \mathcal{F}_k] & \leq  W_k - \eta\theta_1 \left\|x_k - x^{\ast} \right\|^2  + 2 \eta \theta_2 + \eta^2\left((1-\beta)^2 + 1 \right)\sigma^2.
\end{align*}
If we further let 
\begin{align*}
r^2 = \max \left\lbrace R^2, \frac{2\theta_2}{\theta_1} + \frac{2\eta \left((1-\beta)^2 + 1 \right)\sigma^2}{\theta_1} \right\rbrace,
\end{align*}
once $\left\|x_{k} - x^{\ast} \right\|^2 \geq r^2$, we have
\begin{align*}
- \eta\theta_1\left\|x_{k} - x^{\ast} \right\|^2  + 2 \eta \theta_2 + \eta^2\left((1-\beta)^2 + 1 \right)\sigma^2 \leq 0.
\end{align*}
Then $\E[W_{k+1} \mid \mathcal{F}_k ] \leq W_{k}$. 

If $\left\|x_1 - x^{\ast} \right\|^2 \leq r^2$, 
let $x_{k^{'}}$ be the first iteration that makes $\left\|x_{k^{'}} - x^{\ast} \right\|^2 \geq r^2$ and $\left\| x_{k^{'}-2} - x^{\ast}\right\|^2, \left\| x_{k^{'}-1} - x^{\ast}\right\|^2 \leq r^2$. First, we show that $\left\|x_{k^{'}} - x^{\ast} \right\|^2$ will not be far larger than $r^2$.

If $\left\| x_{k^{'}-1} - x^{\ast}\right\|^2 \leq r^2$ and $\left\| x_{k^{'}-2} - x^{\ast}\right\|^2 \leq r^2$, then applying the recursion of SGD with momentum, we have
\begin{align}
\E[\left\| x_{k^{'}} - x^{\ast} \right\|^2] & = \E[\left\|x_{k^{'}-1} - \eta v_{k^{'}} - x^{\ast}\right\|^2] = \E\left[\left\|x_{k^{'}-1} - \eta \left(\beta \eta^{-1}\left(x_{k^{'}-2}-x_{k^{'}-1}\right) + (1-\beta) g_{k^{'}-1} \right) - x^{\ast}\right\|^2\right] \notag \\
& = \E[\left\|(1+\beta)\left(x_{k^{'}-1} -x^{\ast} \right) - \beta\left(x_{k^{'}-2} - x^{\ast}\right) - \eta(1-\beta)g_{k^{'}-1} \right\|^2 ]\notag \\
& \mathop{\leq}^{(a)} 3(1+\beta)^2\left\|x_{k^{'}-1} -x^{\ast}\right\|^2 + 3\beta^2\left\|x_{k^{'}-2} -x^{\ast}\right\|^2 + 3\eta^2(1-\beta)^2\E[\left\|g_{k^{'}-1}\right\|^2] \notag \\
& \mathop{\leq}^{(b)} 3(1+\beta)^2\left\|x_{k^{'}-1} -x^{\ast}\right\|^2 + 3\beta^2\left\|x_{k^{'}-2} -x^{\ast}\right\|^2 + 3\eta^2(1-\beta)^2(\sigma^2 + (\rho+1)L\left\| x_{k^{'}-1} - x^{\ast} \right\|^2)\notag \\
& \leq 3\left( (1+\beta)^2 + \beta^2\right)r^2 + 3 \eta^2(1-\beta)^2(\sigma^2 + (\rho+1)L^2r^2):= \Delta_{'}^2
\end{align}
where $(a)$ follows from the fact that $\left(\frac{x+y+z}{3}\right)^2 \leq 3(x^2 + y^2 +z^2)$ and $(b)$ applies inequality (\ref{inequ:grad}) that $\E[\left\| g_k \right\|^2 \mid \mathcal{F}_k] \leq \sigma^2 + (\rho+1)\left\|\nabla f(x_k)\right\|^2$.
In the case that $\left\|x_{k^{'}} - x^{\ast} \right\|^2 \geq r^2$ and $\left\| x_{k^{'}-1} - x^{\ast}\right\|^2 \leq r^2$), we can estimate the Lyapunov function $W_k$ at $k^{'}$-th iteration 
\begin{align*}
\E[W_{k^{'}}] & = \E[\left\| \tilde{x}_{k^{'}} - x^{\ast} \right\|^2 +   \left\| x_{k^{'}} - x_{k^{'}-1} \right\|^2  + 2 \eta \beta\left( (1-\beta) + (1-\beta)^{-1} \right)\left(f(x_{k^{'}-1})  -f^{\ast}\right)] \notag \\
& \leq \E\left[\left\| (1-\beta)^{-1}\left(x_{k^{'}} - \beta x_{k^{'}-1} \right) -x^{\ast}\right\|^2 + \left\| x_{k^{'}}-x^{\ast} - (x_{k^{'}-1}-x^{\ast}) \right\|^2\right] \notag \\
& \quad + \eta \beta\left( (1-\beta) + (1-\beta)^{-1} \right)L\E[\left\| x_{k^{'}-1} - x^{\ast} \right\|^2 \mid \mathcal{F}_k] \notag \\
& \leq \left(\frac{1}{(1-\beta)^2} + \frac{\beta^2}{(1-\beta)^2}\right) \left(\E[\left\|x_{k^{'}} - x^{\ast} \right\|^2 ] + \E[\left\|x_{k^{'}-1} - x^{\ast} \right\|^2 ]\right) + 2 \left(\E[\left\|x_{k^{'}} - x^{\ast} \right\|^2 ] + \E[\left\|x_{k^{'}-1} - x^{\ast} \right\|^2 ]\right) \notag \\
& \quad + \eta \beta\left( (1-\beta) + (1-\beta)^{-1} \right) L\E[\left\| x_{k^{'}-1} - x^{\ast} \right\|^2] \notag \\
& \leq \left(\frac{1+\beta^2}{(1-\beta)^2} + 2  \right)\left(\Delta_{'}^2 + r^2 \right) + \eta \beta\left( (1-\beta) + (1-\beta)^{-1} \right) L r^2.
\end{align*}
As we discussed before, once $\left\|x_{k} - x^{\ast} \right\|^2 \geq r^2$, then $\E[W_{k}]$ is decreasing. Thus we can conclude that $\E[W_k]$ is uniformly bounded by
\begin{align*}
 \E[W_k] \leq \left(\frac{1+\beta^2}{(1-\beta)^2} + 2  \right)\left(\Delta_{'}^2 + r^2 \right) + \eta \beta\left( (1-\beta) + (1-\beta)^{-1} \right) Lr^2.
\end{align*}

If the initial point $\left\|x_1 - x^{\ast} \right\|^2 >  r^2$ but is finite. Then we can see that $\E[W_k]$ is decreasing until $\left\|x_k - x^{\ast} \right\|^2 \leq r^2$. Thus in this case, we also can conclude that $\E[W_k]$ is uniformly bounded. Due to that the three quantities  $\E[\left\|\tilde{x}_k - x^{\ast} \right\|], \E[\left\| x_{k} - x_{k-1} \right\|^2],  \eta \E[f(x_{k-1}) - f^{\ast}] \geq 0$, so we can conclude that the three quantities are uniformly bounded. By the $L$-smoothness, we can estimate the function value $\E[f(x_k) - f^{\ast}]$ as
\begin{align*}
    \E[f(x_k) - f^{\ast}] & \leq \frac{L}{2}\E[\left\| x_k - x^{\ast}\right\|^2] = \frac{L}{2}\E\left[\left\| \tilde{x}_k - \frac{\beta}{1-\beta}(x_k-x_{k-1}) - x^{\ast}\right\|^2 \right] \notag \\
    & \leq L\E[\left\|\tilde{x}_k  - x^{\ast}\right\|^2] + \frac{L\beta^2}{(1-\beta)^2}\E[\left\|x_k-x_{k-1}\right\|^2] \leq L\left(1+ \frac{\beta^2}{(1-\beta)^2} \right)\E[W_k]
\end{align*}
is uniformly bounded. Now we complete the proof. 
\end{proof}

\begin{proof}(Proofs of Theorem \ref{thm:mom:decay})
In this case, we consider the step-size $\eta_k$ is strictly decaying which implies that $\tau_k = \eta_k/\eta_{k-1} < 1$. 
By applying Lemma \ref{lem:mom:1}, we have
\begin{align}\label{inequ:decay:1}
 \E[\left\| x_{k+1} - x_k \right\|^2 \mid \mathcal{F}_k ] 
& \leq \tau_k^2\beta^2 \left\| x_k - x_{k-1} \right\|^2 + \eta_k^2(1-\beta)^2\E[\left\| g_k \right\|^2 \mid \mathcal{F}_k] \notag \\
& \quad + 2\beta(1-\beta)\tau_k\eta_k\left(f(x_{k-1}) - f(x_k) + \frac{L}{2}\left\| x_k - x_{k-1}\right\|^2\right).
\end{align}
By Lemma \ref{lem:mom:2}(ii), we can achieve that
\begin{align}\label{inequ:decay:2}
     \E[\left\| \tilde{x}_{k+1} - x^{\ast} \right\|^2 \mid \mathcal{F}_k] & \leq  \tau_k\left\|\tilde{x}_k - x^{\ast} \right\|^2 - \left(2\eta_k\theta_1 -\left(1- \tau_k\right) \right)\left\| x_k - x^{\ast} \right\|^2   + 2\eta_k \theta_2 + \eta_k^2 \E[\left\|g_k\right\|^2 \mid \mathcal{F}_k]\notag \\
& \quad + \frac{2\beta L }{1-\beta}\cdot\eta_k\tau_k \left\|x_{k-1}-x_{k}\right\|^2 + \frac{2\beta \tau_k \eta_k}{1-\beta}\left( f(x_{k-1}) - f(x_k) \right).
\end{align}
We now define a Lyapunov function $W_k$:
\begin{align*}
W_{k+1} = \left\| \tilde{x}_{k+1} - x^{\ast} \right\|^2 + \left\| x_{k+1} - x_k \right\|^2  + 2\gamma_{\beta} \eta_k \tau_k \left(f(x_{k})  -f^{\ast}\right).
\end{align*}
where  $\gamma_{\beta} = \beta(1-\beta + (1-\beta)^{-1})$.
Then incorporating the above inequalities (\ref{inequ:decay:1}) and (\ref{inequ:decay:2}), we have
\begin{align}\label{inequ:main:mom}
\E[W_{k+1} \mid \mathcal{F}_k] 
& \leq \left\|\tilde{x}_k - x^{\ast} \right\|^2 +  \left(\tau_k^2\beta^2+ \beta(1-\beta)L\tau_k\eta_k + \frac{\beta \tau_k }{1-\beta}\left( 2L \eta_k \right)\right)\left\| x_k - x_{k-1} \right\|^2  \notag \\ 
& \quad + 2\gamma_{\beta}\eta_{k-1}\tau_{k-1}\left(f(x_{k-1})  -f^{\ast}\right)
 +2 \gamma_{\beta} \left(\eta_k\tau_k - \eta_{k-1}\tau_{k-1} \right)(f(x_{k-1}) - f^{\ast}) \notag \\
& \quad   - \left(2\theta_1\eta_k -(1-\tau_k) - (\rho+1)\left((1-\beta)^2 +1\right)L^2\eta_k^2\right)\left\| x_k - x^{\ast} \right\|^2 \notag \\
& \quad  + 2\theta_2\eta_k + \left( (1-\beta)^2+ 1\right)\sigma^2\eta_k^2.
\end{align}
If the step-size $\eta_k$ is decreasing, then $\tau_k = \eta_k/\eta_{k-1} \leq 1$. We assume 
    \begin{align}\label{inequ:eta0}
        \eta_k \leq \frac{1-\tau_k^2\beta^2}{L\tau_k\gamma_{\beta}}
    \end{align}
  to ensure that
    \begin{align*}
    \tau_k^2\beta^2+ \beta(1-\beta)\tau_k\eta_k L + \frac{\beta \tau_k }{1-\beta} \cdot 2L\eta_k  \leq 1
    \end{align*}
The right side of (\ref{inequ:eta0}) is decreasing with $\tau_k \in (0,1]$, so we set 
\begin{align*}
 \eta_k  \leq \frac{1-\beta^2}{L\gamma_{\beta}}.
\end{align*}
Furthermore, we choose 
\begin{align*}
 \eta_k \leq  \frac{\theta_1}{2(\rho+1)((1-\beta)^2+1)L^2} ,
\end{align*}
such that 
\begin{align*}
(\rho+1)\left((1-\beta)^2 +1\right)L^2\eta_k^2 \leq \frac{\theta_1\eta_k}{2}.
\end{align*}
That is to say, if
\begin{align*}
\eta_k \leq  \min \left\lbrace \frac{\theta_1}{2(\rho+1)((1-\beta)^2+1)L^2}, \frac{1-\beta^2}{L\beta(1-\beta +(1-\beta)^{-1})} \right\rbrace
\end{align*}
then (\ref{inequ:main:mom}) can be re-written as 
\begin{align*}
\E[W_{k+1} \mid \mathcal{F}_k] 
& \leq W_k- \left(\frac{3\theta_1\eta_k}{2} -(1-\tau_k)\right) \left\| x_k - x^{\ast} \right\|^2 + 2\theta_2\eta_k + \left( (1-\beta)^2+ 1\right)\sigma^2\eta_k^2 \notag \\
& \quad   + 2\gamma_{\beta}\left(\eta_k\tau_k - \eta_{k-1}\tau_{k-1} \right)(f(x_{k-1}) - f^{\ast}).
\end{align*}
Based on different decaying modes, we can achieve the following results.
\begin{itemize}
 \item Polynomial decaying step-size: $\eta_k = \eta_1/k^r$ for $r\in (0,1]$. In this case $\tau_k = \eta_k/\eta_{k-1} = (k-1)^r/k^r$, then
        \begin{align*}
        \eta_k\tau_k - \eta_{k-1}\tau_{k-1} & = \eta_1\left(\frac{(k-1)^r}{k^{2r}} - \frac{(k-2)^r}{(k-1)^{2r}} \right) 
         = \eta_1\frac{(k-1)^{3r} - (k-2)^rk^{2r}}{k^{2r}(k-1)^{2r}}   \notag \\
         & = \eta_1\frac{(k-1)^{3r} - \left((k-1)^{3}-(k-1) + (k-1)^2 -1\right)^r}{k^{2r}(k-1)^{2r}}  < 0
    \end{align*}
    for all $ k \geq 3$. For any $r \in (0,1]$ and $ k \geq 1$, we have
    \begin{align*}
    k^r \leq (k-1)^r + 1
    \end{align*}
    then \begin{align*}
    1 - \tau_k = 1-\frac{(k-1)^r}{k^r}  \leq 1-\frac{k^r-1}{k^r} = \frac{1}{k^{r}}.
    \end{align*}
    For $\eta_1 \geq \frac{2}{\theta_1}$, we have $1-\tau_k \leq \frac{\theta_1}{2}\eta_k$, then
      \begin{align*}
    \E[W_{k+1} \mid \mathcal{F}_k] & \leq  W_k -  \theta_1\eta_k\left\| x_k - x^{\ast} \right\|^2  + 
   2\theta_2\eta_k + \left( (1-\beta)^2+1\right)\sigma^2 \eta_k^2.
    \end{align*}
    \item Linear decay: $\eta_k = A - B k$ where $\eta_1 = \eta_{\max}$ and $\eta_{T} = \eta_{\min} = c/\sqrt{T}$, then we have
    \begin{align*}
    A = \frac{T\eta_{\max} - \eta_{\min}}{T-1},  & \quad B =  \frac{\eta_{\max} - \eta_{\min}}{T-1}. 
    \end{align*}
Next we turn to estimate the sign of $\eta_k\tau_k - \eta_{k-1}\tau_{k-1}$.
    \begin{align*}
    \eta_k\tau_k - \eta_{k-1}\tau_{k-1}  = \frac{\eta_k^2\eta_{k-2} - \eta_{k-1}^3}{\eta_{k-1}\eta_{k-2}} 
    & = \frac{(A - Bk)^2(A-B(k-2)) - (A-B(k-1))^3}{(A - B(k-1))(A - B(k-2))} \notag \\
    & =  \frac{(A - B(k-1+1))^2(A - B(k-1-1)) - (A - B(k-1))^3}{(A - B(k-1))(A - B(k-2))} \notag \\
    & = \frac{- B\left[(A-B(k-1))^2 + B(A - B(k-1))) - B^2 \right]}{(A - B(k-1))(A - B(k-2))} \notag \\
    & = \frac{- B\left[(A - Bk + B)^2 + B(A - Bk) \right]}{(A - B(k-1))(A - B(k-2))}.
    \end{align*}
    We know that $B > 0$ and $A  - Bk >0$, then $(A -Bk + B)^2 + B(A - Bk)  >0$, thus we have $\eta_k\tau_k - \eta_{k-1}\tau_{k-1} <0$.  Next we estimate $1-\tau_k$:
    \begin{align*}
     1- \tau_k = 1- \frac{\eta_k}{\eta_{k-1}}  = 1 - \frac{A - Bk}{A - B(k-1)} = \frac{B}{A - B(k-1)}.
    \end{align*}
   We know that $\eta_{\min} = c/\sqrt{T}$, let $c  \geq \left(2\frac{\eta_{\max}}{\theta_1}\right)^{1/2}$, we have
    \begin{align*}
        1-\tau_k \leq \frac{B}{\eta_{k-1}} \leq \frac{(\eta_{\max} - \eta_{\min})}{(T-1) \eta_{\min}}\leq \frac{\theta_1}{2}\eta_{\min} \leq  \frac{\theta_1}{2}\eta_k.
    \end{align*}
Finally, we can achieve that
  \begin{align*}
   \E[W_{k+1} \mid \mathcal{F}_k] & \leq  W_k - \theta_1 \eta_k\left\| x_k - x^{\ast} \right\|^2  + 
   2\theta_2 \eta_k  + \eta_k^2\left( (1-\beta)^2+1\right)\sigma^2.
   \end{align*}

    \item Cosine decay step-size:
 $\eta_k = A + B\cos(k\pi/T) $ where $A=\frac{\eta_{\min} + \eta_{\max}}{2}$ and $B= \frac{\eta_{\max} - \eta_{\min}}{2}$. We first estimate $\tau_k\eta_k - \eta_{k-1}\eta_{k-1}$:
 \begin{align*}
     \eta_k\tau_k - \eta_{k-1}\tau_{k-1} & = \frac{\eta_k^2}{\eta_{k-1}} - \frac{\eta_{k-1}^2}{\eta_{k-2}}   =  \eta_{k-1}\left( \left(\frac{\eta_k}{\eta_{k-1}}\right)^2 - \frac{\eta_{k-1}}{\eta_{k-2}}\right)
    \end{align*}
    In order to estimate the sign of $\eta_k\tau_k - \eta_{k-1}\tau_{k-1}$, we try to estimate $\left(\frac{\eta_k}{\eta_{k-1}}\right)^2 - \frac{\eta_{k-1}}{\eta_{k-2}}$. Then
    \begin{align}\label{inequ:psi:k}
   \psi_k := \frac{\left(1-\frac{\eta_{k-1}}{\eta_{k-2}}\right)}{1- \left(\frac{\eta_k}{\eta_{k-1}}\right)^2} =  \frac{ \frac{\eta_{k-1}}{\eta_{k-2}}}{\frac{\eta_k}{\eta_{k-1}}+1}  \cdot \frac{\eta_{k-2} - \eta_{k-1}}{\eta_{k-1} - \eta_k}. 
    \end{align}
 For $k \in [1, T/2]$, by the graph of the step-size, we know that $\eta_{k}/\eta_{k-1} \leq \eta_{k-1}/\eta_{k-2} \leq 1$ and $\eta_{k-2}-\eta_{k-1} \leq \eta_{k-1} - \eta_k$, we have  $\psi_k < 1$. If $k \in [T/2, T)$, we can see that $\eta_k/\eta_{k-1} \geq \eta_{k-1}/\eta_{k-2}$, then
 \begin{align*}
     \psi_k \leq \frac{1}{2} \frac{\eta_{k-2} - \eta_{k-1}}{\eta_{k-1} - \eta_k} &  = \frac{\sin(\frac{(2k-3)\pi}{2T})}{2\sin(\frac{(2k-1)\pi}{2T})} = \frac{1}{2}\left( \cos(\frac{\pi}{T}) - \frac{\cos(\frac{(2k-1)\pi}{2T})\sin(\frac{\pi}{T})}{\sin(\frac{(2k-1)\pi}{2T})}\right) \notag \\
     & \leq \frac{1}{2}\left( \cos(\frac{\pi}{T}) + \frac{\cos(\frac{3\pi}{2T})\sin(\frac{\pi}{T})}{\sin(\frac{3\pi}{2T})}\right) = \frac{1}{2}\cos(\frac{\pi}{T})\left(  1 + \frac{\tan(\frac{\pi}{2T})}{\tan(\frac{3\pi}{2T})}\right) < 1.
 \end{align*}
 That is for $k \in [1, T)$, we have $\psi_k < 1$. Thus $\left(\frac{\eta_k}{\eta_{k-1}}\right)^2 - \frac{\eta_{k-1}}{\eta_{k-2}} < 0$, then we have
 $\eta_k\tau_k - \eta_{k-1}\tau_{k-1} < 0$.  Next we turn to estimate $1-\tau_k$:
     \begin{align*}
    1- \tau_k  & =1- \frac{\eta_k}{\eta_{k-1}}  =1- \frac{A + B\cos(k\pi/T)}{A + B \cos((k-1)\pi/T)} = \frac{B\left(\cos((k-1)\pi/T) - \cos(k\pi/T)\right) }{A + B\cos((k-1)\pi/T)} \notag \\
    & =  \frac{2B\sin(\pi/(2T))\sin((2k-1)\pi/(2T))}{\left(A + B\cos((k-1)\pi/T)\right)} \leq \frac{2B \left(\frac{\pi}{2T}\right)}{\left(A + B\cos(k\pi/T)\right)} = \frac{2B \left(\frac{\pi}{2T}\right)}{\eta_k}
    \end{align*}
  Let $\eta_{\min} = c/\sqrt{T}$. To make sure that $1- \tau_k\leq \frac{\theta_1}{2}\eta_k$, we let $c \geq \left(\frac{\eta_{\max}^2\pi^2}{2\theta_1}\right)^{1/2} $. Then for any $k \in [1, T)$, we have
  \begin{align*}
         \E[W_{k+1} \mid \mathcal{F}_k] & \leq  W_k - \theta_1 \eta_k\left\| x_k - x^{\ast} \right\|^2  + 
   2\theta_2\eta_k  + \left( (1-\beta)^2+1\right)\sigma^2\eta_k^2.
  \end{align*}
    \item Exponential decaying step-size $\eta_k = \eta_1/\alpha^{k-1}$ where $\alpha = (\nu/T)^{-1/T} > 1$ and $\nu \geq 1$. In this case, we have $\tau_k = \eta_k/\eta_{k-1} = 1/\alpha$ and 
    \begin{align*}
       1 - \tau_k = 1-1/\alpha \leq \frac{\ln\left(T/\nu\right)}{T}
    \end{align*}
    where $1-x \leq \ln(\frac{1}{x})$ for any $x > 0$. 
    \begin{align*}
    \eta_k\tau_k - \eta_{k-1}\tau_{k-1} = \eta_1\left(\frac{1}{\alpha^{k-1}}\frac{1}{\alpha} - \frac{1}{\alpha^{k-2}}\frac{1}{\alpha} \right)= \frac{\eta_1}{\alpha^{k}}(1-\alpha) < 0
    \end{align*}
Let $ \eta_1 \geq \frac{2\ln (T/\nu)}{\theta_1 \nu}$, then
\begin{align*}
1-\tau_k \leq \frac{\ln(T/\nu)}{T} \leq \frac{\theta_1}{2}\eta_k
\end{align*}
  for any $ 1 \leq k \leq T$ and $\nu \in [1, T]$, we have
  \begin{align*}
   \E[W_{k+1} \mid \mathcal{F}_k] & \leq  W_k - \theta_1 \eta_k\left\| x_k - x^{\ast} \right\|^2  + 
   2\theta_2\eta_k  + \left( (1-\beta)^2+1\right)\sigma^2\eta_k^2.
   \end{align*}
 \end{itemize}
 For the above four cases, we let 
  \begin{align*}
       r^2 = \max\left\lbrace R^2, \frac{2\theta_2}{\theta_1} +  \frac{((1-\beta^2)+1)\sigma^2\eta_k}{\theta_1} \right\rbrace,
  \end{align*}
 the step-size satisfies the following condition:
  \begin{align*}
\eta_k \leq  \min \left\lbrace \frac{\theta_1}{2(\rho+1)((1-\beta)^2+1)L^2}, \frac{1-\beta^2}{L\beta(1-\beta +(1-\beta)^{-1})} \right\rbrace.
\end{align*}
  Once $\E[\left\|x_k - x^{\ast}\right\|^2] \geq r^2$, we have $\E[W_{k+1}]$ is decreasing. Following the same discussion as the constant step-size in Theorem \ref{thm:mom:const}, for the decaying modes mentioned above, we can conclude that the quantities $\E[W_k]$, $\E[\left\| x_k - x^{\ast} \right\|^2]$, $\E[f(x_k) - f^{\ast}]$ is uniformly bounded. Note that for the cosine decaying mode, the above results only hold for $k \in [1, T)$. At the final iterate $k= T$, we next show that $\E[W_{T+1}]$ is also bounded:
  \begin{align*}
      \E[W_{T+1}] & \leq \E[W_T] - \left(\frac{3\theta_1\eta_T}{2} -(1-\tau_T)\right) \E[\left\| x_T - x^{\ast} \right\|^2] + 2\theta_2\eta_T + \left( (1-\beta)^2+ 1\right)\sigma^2\eta_T^2 \notag \\
& \quad   + 2\gamma_{\beta}\left(\eta_T\tau_T - \eta_{T-1}\tau_{T-1} \right)\E[f(x_{T-1}) - f^{\ast}].
  \end{align*}
  where $\tau_T = \eta_{T}/\eta_{T-1} \leq 1 $, $\eta_T\tau_T - \eta_{T-1}\tau_{T-1} > 0$, and $\eta_T=\eta_{\min}$. Because $\E[W_{k+1}]$ is uniformly bounded for all $k \in [1, T)$, by the previous discussion, we know that $\E[W_T]$, $\E[\left\| x_T-x^{\ast} \right\|^2]$, and $\E[f(x_{T-1})-f^{\ast}]$ are uniformly bounded. Firstly, we know that
  \begin{align*}
      1- \tau_{T} & =  1- \frac{\eta_{T}}{\eta_{T-1}} = \frac{B(\cos((T-1)\pi/T)+1)}{A + B\cos((T-1)\pi/T)} = \frac{B(-\cos(\pi/T)+1)}{A - B\cos(\pi/T)} \notag \\
     & \mathop{\leq}^{(a)} \frac{B \left(\frac{\pi}{T} \right)^2/2}{A- B} = \frac{\eta_{\max} - \eta_{\min}}{2\eta_{\min}} \cdot \frac{\pi^2}{2T^2} \mathop{\leq}^{(b)} \frac{\eta_{\max}}{4c} \frac{\pi^2}{T^{3/2}}
  \end{align*}
where $(a)$ follows from the fact that by the Taylor series of $\cos(x) = 1-\frac{x^2}{2} + \frac{x^4}{4!}+\cdots$, we have $1-\cos(\pi/T) \leq \frac{\pi^2}{2T^2}$ and $(b)$ uses the fact that $\eta_{\min} = c/\sqrt{T}$.
 Recalling that $c \geq \left(\frac{\eta_{\max}^2\pi^2}{2\theta_1} \right)^{1/2}$, we know that this bound $ \frac{\eta_{\max}}{4c} \frac{\pi^2}{T^{3/2}} \ll  \frac{3\theta_1 \eta_T}{2}$. This means that the scalar term $\frac{3\theta_1 \eta_T}{2} - (1-\tau_T)$ of $\E[\left\| x_T - x^{\ast} \right\|^2] $ is positive.  Next we turn to estimate $\eta_T\tau_T - \eta_{T-1}\tau_{T-1} $:
 \begin{align*}
    \eta_T\tau_T - \eta_{T-1}\tau_{T-1} = \tau_{T-1}\eta_{T-1}\left(\frac{\tau_T\eta_T}{\tau_{T-1}\eta_{T-1}}-1\right) = \tau_{T-1}\eta_{T-1}\left(\frac{\eta_T^2}{\eta_{T-1}^2} \cdot \left(\frac{\eta_{T-1}}{\eta_{T-2}}\right)^{-1} -1\right)
 \end{align*}
 Recall the definition of $\psi_k$ in (\ref{inequ:psi:k}) at $k=T$, we have
 \begin{align*}
    \psi_{T} & = \frac{1- \frac{\eta_{T-1}}{\eta_{T-2}}}{1-\left(\frac{\eta_{T}}{\eta_{T-1}}\right)^2} = \frac{ \frac{\eta_{T-1}}{\eta_{T-2}}}{\frac{\eta_T}{\eta_{T-1}}+1}  \cdot \frac{\eta_{T-2} - \eta_{T-1}}{\eta_{T-1} - \eta_T}  \leq \frac{1}{2}\frac{\eta_{T-2} - \eta_{T-1}}{\eta_{T-1} - \eta_T}  \notag \\
    & \leq \frac{\sin(\frac{(2T-3)\pi}{2T})}{2\sin(\frac{(2T-1)\pi}{2T})} = \frac{\sin(\frac{3\pi}{2T})}{2\sin(\frac{\pi}{2T})} = \frac{1}{2}\left( \cos(\frac{\pi}{T})+\frac{\cos(\frac{\pi}{2T})}{\sin(\frac{\pi}{2T})}\sin(\frac{\pi}{T})\right) \notag \\
    & = \frac{1}{2}\left( \cos(\frac{\pi}{T})+\frac{2\cos(\frac{\pi}{2T})}{\sin(\frac{\pi}{2T})}\sin(\frac{\pi}{2T})\cos(\frac{\pi}{2T})\right) = \frac{1}{2}\left( 2\cos(\frac{\pi}{T}) +1 \right) < \frac{3}{2}. 
 \end{align*}
Then
 \begin{align*}
    \left(\frac{\eta_T}{\eta_{T-1}} \right)^2\cdot \left(\frac{\eta_{T-1}}{\eta_{T-2}}\right)^{-1} -1 & \leq \frac{\left(\frac{\eta_T}{\eta_{T-1}} \right)^2}{1-\frac{3}{2}(1-\left(\frac{\eta_T}{\eta_{T-1}} \right)^2)} -1 = \frac{(1-\left(\frac{\eta_T}{\eta_{T-1}} \right)^2}{3\left(\frac{\eta_T}{\eta_{T-1}} \right)^2 - 1}  \notag \\
    & \leq \frac{\frac{B\pi^2}{2(A-B)T^2}}{2- \frac{B\pi^2}{2(A-B)T^2}} \leq 1
 \end{align*}
 where $\eta_{T-1}/\eta_{T} = 1 + \frac{B}{A-B}(1-\cos(\pi/T)) \leq 1+ \frac{B\pi^2}{2(A-B)T^2} \leq 2$ for sufficient large $c \geq \frac{\eta_{\max}\pi^2}{4T^{3/2}}$. Then we can see that
 $ \eta_T\tau_T - \eta_{T-1}\tau_{T-1} \leq \eta_{T-1}\tau_{T-1}$. Recall the definition of $\E[W_T]$, we know that $2\gamma_{\beta}\eta_{T-1}\tau_{T-1}\E[f(x_{T-1})-f^{\ast}] \leq \E[W_T]$. By the above analysis, we can get that
 \begin{align*}
    \E[W_{T+1}] & \leq \E[W_T] + 2\gamma_1\theta_2\eta_T + \left( (1-\beta)^2+ 1\right)\sigma^2\eta_T^2 + 2\gamma_{\beta}\left(\eta_T\tau_T - \eta_{T-1}\tau_{T-1} \right)\E[f(x_{T-1}) - f^{\ast}] \notag \\
    & \leq \E[W_T] + 2\theta_2\eta_{\min} + \left( (1-\beta)^2+ 1\right)\sigma^2\eta_{\min}^2 + 2\gamma_{\beta} \eta_{T-1}\tau_{T-1}\E[f(x_{T-1}) - f^{\ast}] \notag \\
    & \leq \E[W_T] + 2\theta_2\eta_{\min} + \left( (1-\beta)^2+ 1\right)\sigma^2\eta_{\min}^2 + \E[W_T] \notag \\
    & = 2\E[W_T] + 2\theta_2\eta_{\min} + \left( (1-\beta)^2+ 1\right)\sigma^2\eta_{\min}^2
 \end{align*}
For the cosine decaying step-size, $\E[W_{T+1}]$ is also bounded.  That is to say, for all $k \in [1, T+1]$, the quantities $\E[W_{k}]$ generated from the cosine decaying mode is uniformly bounded.
\end{proof}

\begin{proof}(Bandwidth-based Step-Size)

At each stage, we assume that the step-size is decreasing ($\eta_k \leq \eta_{k-1} \leq \eta_{k-2}$) for all $k \in ((t-1)S, tS]$. Then considering the step-size modes discussed in Theorems \ref{thm:mom:const} and \ref{thm:mom:decay}, we can see that:
\begin{itemize}
      \item Step-decay step-size: $\eta_k = \eta_1/\alpha^{t-1}$ for $ k \in (S(t-1), St]$ where $S = T/N$ and $t \in [N]$. At each stage $t$, we know that the step-size is a constant for all $k \in ((t-1)S, tS]$. By applying the results for constant step-sizes in Theorem \ref{thm:mom:const}, we can see that at each stage $t \in [N]$, the quantity $W_k$ is uniformly bounded for $k \in ((t-1)S+2, tS]$, that is
    \begin{align*}
     \E[W_{k+1}] \leq \left(\frac{1+\beta^2}{(1-\beta)^2} + 2  \right)\left(\Delta_{t}^2 + r^2 \right) + \eta_k L \beta(1-\beta + (1-\beta)^{-1}) r^2
    \end{align*}
    where $\Delta_{t}^2 = 3\left( (1+\beta)^2 + \beta^2\right)r^2 + 3 \eta_k^2(1-\beta)^2(\sigma^2 + (\rho+1)L^2r^2)$. Therefore, as we discussed in Theorem \ref{thm:mom:const}, at each stage, the quantities
    \begin{align*}
    \E[\left\|\tilde{x}_{k+1} - x^{\ast} \right\|] \leq \E[W_{k+1}], & \quad \E[\left\| x_{k+1} - x_{k} \right\|^2] \leq \E[W_{k+1}]; \notag \\
    \E[\left\| x_{k+1} - x^{\ast} \right\|^2] \leq 2\left(1+ \frac{\beta^2}{(1-\beta)^2}\right)\E[W_{k+1}], & \quad \E[f(x_{k+1}) - f^{\ast}] \leq L\left(1+ \frac{\beta^2}{(1-\beta)^2}\right)\E[W_{k+1}]
    \end{align*}
    are also uniformly bounded for all $k \in ((t-1)S+2, tS]$.
 By the definition of $\tau_k$, at $k = (t-1)S+1$, we have $\tau_{(t-1)S+1} = \eta_{(t-1)S+1}/\eta_{(t-1)S} = 1/\alpha$, then we define
    \begin{align*}
    W_{(t-1)S+2} = \left\| \tilde{x}_{(t-1)S+2} - x^{\ast} \right\|^2 + \left\| x_{(t-1)S+2} - x_{(t-1)S+1} \right\|^2  + 2 \frac{\eta_{(t-1)S+1}}{\alpha}\gamma_{\beta}\left(f(x_{(t-1)S+1})  -f^{\ast}\right).
    \end{align*}
Similar to Theorem \ref{thm:mom:decay} (see (\ref{inequ:main:mom})), we can estimate $\E[W_{(t-1)S+2} ]$ as
\begin{align}
 & \E[W_{(t-1)S+2}]  \notag \\
& :=  \E\left[\left\| \tilde{x}_{(t-1)S+2} - x^{\ast} \right\|^2 + \left\| x_{(t-1)S+2} - x_{(t-1)S+1} \right\|^2  +  \frac{2\eta_{(t-1)S+1}}{\alpha}\gamma_{\beta}\left(f(x_{(t-1)S+1})  -f^{\ast}\right)\right] \notag \\
& = W_{(t-1)S+1}- \left(\frac{3\theta_1\eta_{(t-1)S+1}}{2} -\left(1-\frac{1}{\alpha}\right)\right) \left\| x_{(t-1)S+1} - x^{\ast} \right\|^2 +  \left((1-\beta)^2+ 1\right)\sigma^2\eta_{(t-1)S+1}^2 \notag \\
& \quad + 2\theta_2\eta_{(t-1)S+1}  + 2\gamma_{\beta}\left(\eta_{(t-1)S+1}\tau_{(t-1)S+1} - \eta_{{(t-1)S}}\tau_{{(t-1)S}} \right)(f(x_{{(t-1)S}}) - f^{\ast}) \notag \\
& \leq W_{(t-1)S+1} + 2(1-\frac{1}{\alpha})^2 \left(1+ \frac{\beta^2}{(1-\beta)^2}\right)\E[W_{(t-1)S+1}] + \frac{2\theta_2\eta_1}{\alpha^{t-1}} + \frac{\left((1-\beta)^2+ 1\right)\sigma^2\eta_1^2}{\alpha^{2(t-1)}} \notag 
\end{align}
where $\E[\left\| x_{(t-1)S+1} - x^{\ast} \right\|^2] \leq 2\left(1+ \frac{\beta^2}{(1-\beta)^2}\right)\E[W_{(t-1)S+1}]$ which are uniformly bounded, then we can conclude that $\E[W_{(t-1)S+2}]$ can be bounded by $\E[W_{(t-1)S+1}]$ plus a constant term. 
Thus in this case, we can derive that $\E[W_{k}]$ is uniformly bounded for all $k \in [1, T+1]$.
\item We then consider the three decaying modes: polynomial decay, linear decay, and cosine decay modes discussed in Theorem \ref{thm:mom:decay} at each stages. First, we consider polynomial decaying and linear decay. From the results of Theorem \ref{thm:mom:decay}, we can see that $\E[W_{k+1}]$ is uniformly bounded for $k \in ((t-1)S+2, tS]$ and its bound can be  derived similar to Theorem \ref{thm:mom:const}, that is
\begin{align*}
     \E[W_{k+1}] \leq \left(\frac{1+\beta^2}{(1-\beta)^2} + 2  \right)\left(\Delta_{t}^2 + r^2 \right) + \eta_k \tau_k L\gamma_{\beta}r^2 
\end{align*}
 where $\Delta_{t}^2 = 3\left( (1+\tau_k\beta)^2 + \beta^2\tau_k^2\right)r^2 + 3 \eta_k^2(1-\beta)^2(\sigma^2 + (\rho+1)L^2r^2)$ and $0 < \tau_k \leq 1$.  Because $\eta_{\min}^t \leq \eta_k \leq \eta_{\max}^t$ for all $k$, the upper bound is non-expanding for each stage $t$. However, at $k = (t-1)S+1$, we have $\tau_{(t-1)S+1} = \eta_{(t-1)S+1}/\eta_{(t-1)S} \leq \eta_{\max}^t/\eta_{\min}^{t} \leq s_0$. Similar to the above discussion for step-decay, we get that
 \begin{align*}
 & \E[W_{(t-1)S+2}]  \notag \\
& :=  \E\left[\left\| \tilde{x}_{(t-1)S+2} - x^{\ast} \right\|^2 + \left\| x_{(t-1)S+2} - x_{(t-1)S+1} \right\|^2  +  \frac{2\eta_{(t-1)S+1}}{\alpha}\gamma_{\beta}\left(f(x_{(t-1)S+1})  -f^{\ast}\right)\right] \notag \\
& = W_{(t-1)S+1}- \left(\frac{3\theta_1\eta_{(t-1)S+1}}{2} -(1-s_0)\right) \left\| x_{(t-1)S+1} - x^{\ast} \right\|^2  + \left((1-\beta)^2+ 1\right)\sigma^2\eta_{(t-1)S+1}^2 \notag \\
& \quad  + 2\theta_2\eta_{(t-1)S+1}  + 2\gamma_{\beta}\left(\eta_{(t-1)S+1}\tau_{(t-1)S+1} - \eta_{{(t-1)S}}\tau_{{(t-1)S}} \right)(f(x_{{(t-1)S}}) - f^{\ast}) \notag \\
& \leq W_{(t-1)S+1} + 2(1-s_0) \left(1+ \frac{\beta^2}{(1-\beta)^2}\right)\E[W_{(t-1)S+1}] + 2\theta_2\eta_{\max}^t + \left((1-\beta)^2+ 1\right)\sigma^2 \left(\eta_{\max}^t\right)^2 \notag \\
& \quad + 2\gamma_{\beta} s_0 \eta_{\max}^t \left(f(x_{(t-1)S}) - f^{\ast}\right). 
\end{align*}
As we know $\E[W_{(t-1)S+1}]$ and $\E[f(x_{(t-1)S}) - f^{\ast}]$ are uniformly bounded and $\eta_{\max}^t$ is also upper bounded, we can conclude that
$\E[W_{(t-1)S+2}]$ is also bounded. 

However, for the cosine decay mode, the only difference from polynomial and linear decay modes lies on the last iterate of each stage. In Theorem \ref{thm:mom:decay}, we show that
\begin{align*}
\E[W_{(t-1)S+1}] \leq 2\E[W_{(t-1)S}]  + 2  \theta_2 \eta_{\min}^t + ((1-\beta)^2+1)\sigma^2 \left(\eta_{\min}^t\right)^2.
\end{align*}
The estimation of $\E[W_{(t-1)S+2}]$ for the cosine decay mode is same as polynomial and linear modes. Thus, we also can make a conclusion that 
$\E[W_{k}]$ is uniformly bounded for the bandwidth step-size with the cosine decay mode.

\end{itemize}
\end{proof}

\section{Supplementary Material for Functions with Slower Growth}

\begin{proof}(of Remark \ref{rem:general:dissipative})
When $x^{\ast}=0$, the two conditions are the same. Suppose that $\left\| x^{\ast} \right\| >0$, then
\begin{align}\label{inequ:rem:grad:x}
 \left\langle \nabla f(x), x \right\rangle \geq \theta_1^{'}\left\| x \right\|^{p} - \theta_2^{'} = \theta_1^{'}\left(\left\| x -x^{\ast} + x^{\ast}\right\|^{p} \right) - \theta_2^{'}
\end{align}
We apply Young's inequality and $\left\| \nabla f(x) \right\|^2 \leq \theta_3(1+\left\|x -x^{\ast}\right\|^{2\tau})$ and assume that $R \geq 1$, then for any $x \in \R^d$ that $\left\| x - x^{\ast} \right\| \geq R$ 
\begin{align}\label{inequ:rem:grad:xast}
 \left\langle \nabla f(x), x^{\ast} \right\rangle  & \leq \frac{s^{\alpha_1}\left\|\nabla f(x) \right\|^{\alpha_1}}{\alpha_1} + \frac{\left\| x^{\ast} \right\|^{\alpha_2}}{s^{\alpha_2}\alpha_2} \leq \frac{s^{\alpha_1}\theta_3^{\alpha_1/2}\left(1+\left\| x -x^{\ast}\right\|^{2\tau}\right)^{\alpha_1/2}}{\alpha_1} + \frac{\left\| x^{\ast} \right\|^{\alpha_2}}{s^{\alpha_2}\alpha_2} \notag \\
 & \leq \frac{s^{\alpha_1}(2\theta_3)^{\alpha_1/2}\left\| x -x^{\ast}\right\|^{\tau\alpha_1}}{\alpha_1} + \frac{\left\| x^{\ast} \right\|^{\alpha_2}}{s^{\alpha_2}\alpha_2}
\end{align}
where $1/\alpha_1 + 1/\alpha_2 = 1$, $\alpha_1,\alpha_2>0$ and $s >0$.

\begin{align}\label{inequ:rem:x:xast}
\left\| x -x^{\ast} + x^{\ast}\right\|^{p}  = \left(\left\| x -x^{\ast} + x^{\ast}\right\|^2\right)^{p/2}  & \geq \left( \left\| x - x^{\ast} \right\|^2 + \left\|x^{\ast} \right\|^2 - 2 \left\|x-x^{\ast} \right\|\left\| x^{\ast} \right\|\right)^{p/2} \notag \\
& = \left(\left(\left\| x - x^{\ast} \right\| - \left\| x^{\ast} \right\|\right)^{2}\right)^{ p/2} = |\left\| x - x^{\ast} \right\| - \left\| x^{\ast} \right\||^p
\end{align}
If we assume that $R \geq \max\left\lbrace 2\left\|x^{\ast} \right\|, 1 \right\rbrace$, then for any $\left\| x - x^{\ast}\right\| \geq R$, applying (\ref{inequ:rem:x:xast}) we have 
\begin{align}
\left\| x -x^{\ast} + x^{\ast}\right\|^{p} \geq \frac{1}{2^p}\left\| x -x^{\ast} \right\|^p.
\end{align}
Then applying the above result and  (\ref{inequ:rem:grad:x}) and (\ref{inequ:rem:grad:xast}), we can get that 
\begin{align*}
\left\langle \nabla f(x), x - x^{\ast}\right\rangle \geq \frac{\theta_1^{'}}{2^p}\left\| x -x^{\ast} \right\|^p - \theta_2^{'} - \frac{s^{\alpha_1}(2\theta_3)^{\alpha_1/2}\left\|x-x^{\ast} \right\|^{\tau\alpha_1}}{\alpha_1} - \frac{\left\| x^{\ast} \right\|^{\alpha_2}}{s^{\alpha_2}\alpha_2} 
\end{align*}
Let $\alpha_1 = p/\tau$ and $s =\left(\frac{\theta_1^{'}p}{\tau2^{p+1}}\right)^{\tau/p}/\sqrt{2\theta_3}$, we have 
\begin{align*}
\frac{s^{\alpha_1}(2\theta_3)^{\alpha_1/2}\left\|x-x^{\ast} \right\|^{\tau\alpha_1}}{\alpha_1} = \frac{1}{2}\cdot \frac{\theta_1^{'}}{2^p}\left\| x -x^{\ast} \right\|^p
\end{align*}
Therefore, 
\begin{align}\label{inequ:rem}
\left\langle \nabla f(x), x - x^{\ast}\right\rangle \geq \frac{\theta_1^{'}}{2^{p+1}}\left\| x -x^{\ast} \right\|^p - \theta_2^{'}  - \frac{\left\| x^{\ast} \right\|^{\alpha_2}}{s^{\alpha_2}\alpha_2}: = \theta_1 \left\| x -x^{\ast} \right\|^p - \theta_2
\end{align}
where $\theta_1 = \frac{\theta_1^{'}}{2^{p+1}}$ and $\theta_2 = \theta_2^{'}  + \frac{\left\| x^{\ast} \right\|^{\alpha_2}}{s^{\alpha_2}\alpha_2}$ with $\alpha_2 = \frac{p}{p-\tau}$ and $s=\left(\frac{\theta_1^{'}p}{\tau2^{p+1}}\right)^{\tau/p}/\sqrt{2\theta_3}$. 

\end{proof}
\subsection{Proofs of SGD Under Generalized $R$-dissipativity }\label{subsec:sgd}
\begin{proof}(of Theorem \ref{thm:general:sgd})
Let 
\begin{align}\label{inequ:r2}
 r^2 = \max \left\lbrace R^2, \left(\eta_{\max}(\rho+1)\frac{\theta_3}{\theta_1}\right)^{1/(p-2\tau)}, \left(\frac{2\theta_2}{\theta_1} + \frac{\left(\sigma^2 + (\rho+1)\theta_3  \right)\eta_{\max}}{\theta_1}\right)^{2/p} \right\rbrace.
\end{align}
First, we consider the initial point $x_1$ that $\left\|x_1 - x^{\ast} \right\| \leq r$. 
Suppose that there exist an iteration $k$ such that $\left\|x_{k-1} - x^{\ast} \right\|^2 \leq r^2$ and $\left\|x_{k} - x^{\ast} \right\|^2 > r^2$, then the generalized $R$-dissipativity holds at $x_k$ and its gradient is also $\tau$-growth. Applying the recursion of SGD, we have
\begin{align*}
\E[\left\| x_{k+1} - x^{\ast} \right\|^2 ] & = \E[\left\| x_k - \eta_k g_k - x^{\ast} \right\|^2] = \left\| x_k - x^{\ast} \right\|^2 - 2 \eta_k \E[\left\langle x_k - x^{\ast}, g_k \right\rangle] + \eta_k^2\E[\left\| g_k \right\|^2] \notag \\
& \leq \left\| x_k - x^{\ast} \right\|^2 - 2 \eta_k \left\langle x_k - x^{\ast}, \nabla f(x_k) \right\rangle + \eta_k^2\left(\E[\left\| g_k - \nabla f(x_k) \right\|^2] + \left\|\nabla f(x_k) \right\|^2 \right) \notag \\
& \leq \left\| x_k - x^{\ast} \right\|^2 - 2 \eta_k \left(  \theta_1\left\|x_k -x^{\ast}\right\|^{p} - \theta_2 \right) + \eta_k^2\left(\sigma^2 + (\rho+1)\left\|\nabla f(x_k) \right\|^2 \right) \notag \\
& \leq \left\| x_k - x^{\ast} \right\|^2 - 2 \eta_k \left(  \theta_1\left\|x_k -x^{\ast}\right\|^{p} - \theta_2 \right) + \eta_k^2\left(\sigma^2 + (\rho+1) \theta_3\left( 1+ \left\| x_k -x^{\ast} \right\|^{2\tau} \right)\right)\notag \\
& = \left\| x_k - x^{\ast} \right\|^2 - \eta_k \left(2\theta_1 \left\|x_k -x^{\ast}\right\|^{p} - \eta_k(\rho+1)\theta_3\left\| x_k - x^{\ast} \right\|^{2\tau}  \right)  + 2 \theta_2\eta_k + \left(\sigma^2 + (\rho+1)\theta_3  \right)\eta_k^2.
\end{align*}
By the assumption that $\tau \leq p/2$ and $\left\| x_k - x^{\ast} \right\|^2 \geq r^2$ at iteration $x_k$ and 
 $r^{2(p-2\tau)} \geq  \eta_k(\rho+1)\frac{\theta_3}{\theta_1} 
$,
we can achieve that 
\begin{align*}
2\theta_1 \left\|x_k -x^{\ast}\right\|^{p} - \eta_k(\rho+1)\theta_3\left\| x_k - x^{\ast} \right\|^{2\tau} & \geq  \theta_1 \left\|x_k -x^{\ast}\right\|^{p},
\end{align*}
Then
\begin{align*}
\E[\left\| x_{k+1} - x^{\ast} \right\|^2 ] &  \leq \E[\left\| x_k - x^{\ast} \right\|^2 - \eta_k \theta_1 \left\|x_k -x^{\ast}\right\|^{p}   + 2 \theta_2\eta_k + \left(\sigma^2 + (\rho+1)\theta_3  \right)\eta_k^2.
\end{align*} 
Once 
$\left\| x_k - x^{\ast} \right\|^2 \geq r^2$, by the definition of $r^2$, we know 
\begin{align*}
r^p \geq \frac{2\theta_2}{\theta_1} + \frac{\left(\sigma^2 + (\rho+1)\theta_3  \right)\eta_k}{\theta_1}
\end{align*}
then $\E[\left\| x_{k+1} - x^{\ast}\right\|^2]$ is decreasing until $\E[\left\| x_{k+1} - x^{\ast}\right\|^2] \leq r^2$. 
At the iterates $k$, we know that $\left\|x_{k-1} - x^{\ast} \right\| \leq r^2$, then the distance of the next iteration with optimal solution $x^{\ast}$ can be bounded via:
\begin{align*}
 \E[\left\| x_k - x^{\ast}\right\|^2 \mid \mathcal{F}_{k-1}] & =  \E[\left\| x_{k-1} - \eta_{k-1} g_{k-1} - x^{\ast} \right\|^2 \mid \mathcal{F}_{k-1}] = 2\left\| x_{k-1} - x^{\ast} \right\|^2 + 2\eta_{k-1}^2\E[\left\| g_{k-1} \right\|^2 \mid \mathcal{F}_{k-1}] \notag \\
 & \leq 2 r^2 + 2\eta_{k-1}^2 \left(\E[\left\| g_{k-1} - \nabla f(x_{k-1})\right\|^2] + \left\| \nabla f(x_{k-1}) \right\|^2\right) \notag \\ 
 & \leq 2 r^2 + 2\eta_{k-1}^2 \left(\sigma^2 + (\rho+1)\left\| \nabla f(x_{k-1}) \right\|^2\right) \notag \\
 & \leq 2r^2 + 2 \eta_{\max}^2 \left( \sigma^2 + (\rho+1)\theta_3 \left(1+ \left\| x_{k-1} -x^{\ast} \right\|^2 \right) \right) \notag \\
 & \leq 2(1 + \eta_{\max}^2(\rho+1)\theta_3)r^2 + 2 \eta_{\max}^2 \left(\sigma^2 +  (\rho+1)\theta_3\right).
\end{align*}

If the initial point $x_1$ is far from the optimal point $x^{\ast}$, that is $\left\| x_1 -x^{\ast} \right\| > r$ but is bounded, by the above analysis, we can see that the follow-up $\E[\left\| x_k -x^{\ast} \right\|^2]$ is decreasing until $\E[\left\| x _k - x^{\ast} \right\|^2] \leq r^2$.

Therefore, by the above discussion, we can conclude that 
\begin{align*}
\E[\left\| x_{k} - x^{\ast} \right\|^2 ] \leq \min \left\lbrace \left\|x_1-x^{\ast} \right\|^2,   2(1 + \eta_{\max}^2(\rho+1)\theta_3)r^2 + 2 \eta_{\max}^2 \left(\sigma^2 +  (\rho+1)\theta_3\right)\right\rbrace
\end{align*}
where $r^2$ is given in (\ref{inequ:r2}).

\end{proof}

\subsection{Proofs of Momentum Under Generalized $R$-dissipativity}

Now, we turn to estimate the SGD algorithm with momentum under Assumption \ref{assump:dissipative:general}. The basic assumptions include the stochastic gradient $g_k$ satisfies Assumption \ref{assump:gradient} and the objective function is $L$-smooth. Beyond these, here we also assume that Assumption \ref{assump:dissipative:general} holds. The procedure of the proof in this part is similar to the proof of momentum under $R$-dissipativity. Therefore, we will omit proofs which are similar to Section \ref{sec:mom} but only stress the differences.

Because the $L$-smoothness holds, we still can apply Lemma \ref{lem:mom:1} to estimate $\E[\left\| x_{k} -x_{k-1}\right\|^2]$.

We assume the function value and its gradient satisfy the following assumptions: there exist $\theta_1 > 0, \theta_2 \geq 0, \theta_3 > 0$ and $R \geq 0$, such that for all $x$ that $\left\| x - x^{\ast} \right\| \geq R$
\begin{align*}
 \left\langle \nabla f(x), x - x^{\ast} \right\rangle \geq \theta_1\left\| x - x^{\ast} \right\|^{p} - \theta_2,\,\,\, \left\| \nabla f(x) \right\|^2 \leq \theta_3(1 + \left\| x - x^{\ast}\right\|^{2\tau})
\end{align*}
with $ 0 \leq \tau \leq p/2$ and $p \in [0,2)$.

One obvious change we made is how to estimate $\E[\left\|g_k \right\|^2 \mid \mathcal{F}_k]$:
\begin{align}\label{grad:general}
\E[\left\| g_k \right\|^2 \mid \mathcal{F}_k] & = \E[\left\| g_k - \nabla f(x_k) + \nabla f(x_k) \right\|^2] = \E[\left\| g_k -\nabla f(x_k) \right\|^2] + \left\| \nabla f(x_k) \right\|^2 \notag\\
& \leq (\rho+1)\left\|\nabla f(x_k) \right\|^2 + \sigma^2 \leq (\rho+1)\theta_3 \left\| x_k - x^{\ast} \right\|^{2\tau} + (\rho+1)\theta_3 + \sigma^2.
\end{align}

Then we turn to estimate $\E[\left\| \tilde{x}_{k+1} - x^{\ast} \right\|^2]$. The only change compared to Lemma \ref{lem:mom:2} is to estimate 
\begin{align*}
   - \E[\left\langle x_k -x^{\ast}, g_k\right\rangle  = - \left\langle x_k -x^{\ast}, \nabla f(x_k)\right\rangle  \leq - \theta_1 \left\| x_k -x^{\ast} \right\|^p + \theta_2.
\end{align*}
We then make these changes in Lemma \ref{lem:mom:2} and give the following results.
\begin{lemma}\label{mom:lem2:1}
We define $\tilde{x}_{k+1} := \frac{x_{k+1} - \beta x_k}{1-\beta}$. Suppose that Assumption \ref{assump:gradient} and (\ref{assump:dissipative:general}) hold at current iterations $x_k$ and $\tau_k =\eta_k/\eta_{k-1} \in (0,1]$, then
\begin{itemize}
    \item[(i)] If $\tau_k = 1$, \emph{i.e.}, $\eta_k = \eta$ for all $k \geq 1$, we have
    \begin{align*}
      \E[\left\| \tilde{x}_{k+1} - x^{\ast} \right\|^2 \mid \mathcal{F}_k] & \leq \left\|\tilde{x}_k - x^{\ast} \right\|^2 - 2\theta_1 \eta_k \left\| x_k - x^{\ast} \right\|^p  + 2\theta_2 \eta_k  + \eta_k^2 \E[\left\|g_k\right\|^2 \mid \mathcal{F}_k] \notag \\
      & \quad + \frac{2\beta\eta_k}{1-\beta}\left(f(x_{k-1}) - f(x_k) + \frac{L}{2}\left\| x_k - x_{k-1} \right\|^2\right).
    \end{align*}
    \item[(ii)] else if $\tau_k \in (0,1)$, we have
    \begin{align*}
     \E[\left\| \tilde{x}_{k+1} - x^{\ast} \right\|^2 \mid \mathcal{F}_k] & \leq \tau_k \left\|\tilde{x}_k - x^{\ast} \right\|^2 - 2\eta_k\theta_1 \left\| x_k - x^{\ast} \right\|^p + \left(1- \tau_k\right) \left\| x_k - x^{\ast} \right\|^2   + 2\eta_k \theta_2 \notag \\
& \quad   + \eta_k^2 \E[\left\|g_k\right\|^2 \mid \mathcal{F}_k] + \frac{2\beta L }{1-\beta}\eta_k\tau_k \left\|x_{k-1}-x_{k}\right\|^2 + \frac{2\beta \tau_k \eta_k}{1-\beta}\left( f(x_{k-1}) - f(x_k) \right).
     \end{align*}
     for any $\omega_1 >0$.
\end{itemize}
\end{lemma}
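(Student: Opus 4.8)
The plan is to follow the proof of Lemma \ref{lem:mom:2} line for line, the only change being that the quadratic dissipativity estimate $-\langle \nabla f(x_k), x_k-x^{\ast}\rangle \le -\theta_1\|x_k-x^{\ast}\|^2 + \theta_2$ used there is replaced by its generalized form $-\langle \nabla f(x_k), x_k-x^{\ast}\rangle \le -\theta_1\|x_k-x^{\ast}\|^p + \theta_2$ from Definition \ref{assump:dissipative:general}. First I would recall $\tilde x_{k+1} = (1-\beta)^{-1}(x_{k+1}-\beta x_k)$, substitute the recursion (\ref{equ:mom:v})--(\ref{equ:mom:x}) and $v_k = \eta_{k-1}^{-1}(x_{k-1}-x_k)$, and carry out the algebraic expansion (\ref{inequ:tildex}) to obtain
\begin{align*}
\left\|\tilde x_{k+1}-x^{\ast}\right\|^2 = \tau_k\left\|\tilde x_k-x^{\ast}\right\|^2 + (1-\tau_k)\left\|x_k-x^{\ast}\right\|^2 - 2\tau_k\eta_k\left\langle\tilde x_k-x^{\ast},g_k\right\rangle - 2(1-\tau_k)\eta_k\left\langle x_k-x^{\ast},g_k\right\rangle + \eta_k^2\left\|g_k\right\|^2,
\end{align*}
with $\tau_k = \eta_k/\eta_{k-1}$, then take $\E[\cdot\mid\mathcal{F}_k]$ and use $\E[g_k\mid\mathcal{F}_k] = \nabla f(x_k)$, so that everything reduces to bounding the two inner products.

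For the first inner product I would decompose $\tilde x_k - x^{\ast}$ exactly as in (\ref{inequ:xtilde:xg}), handling the cross term $\langle x_{k-1}-x_k,\nabla f(x_k)\rangle$ by the $L$-smoothness inequality (\ref{L-smooth:inequ}) and the term $-\langle x_k-x^{\ast},\nabla f(x_k)\rangle$ by generalized $R$-dissipativity (legitimate since $\|x_k-x^{\ast}\|\ge R$ is assumed), which this time produces $-\theta_1\|x_k-x^{\ast}\|^p + \theta_2$ instead of the $\|\cdot\|^2$ version. The second inner product $-\langle x_k-x^{\ast},\nabla f(x_k)\rangle$ is bounded by the same generalized estimate, replacing (\ref{inequ:xx:g}). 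The quantity $\E[\|g_k\|^2\mid\mathcal{F}_k]$ is left unexpanded in the statement, so nothing further is needed for it at this stage.

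Then I would assemble the two cases. For $\tau_k=1$ the $(1-\tau_k)$-weighted terms vanish and the first bound gives part (i) directly. For $\tau_k\in(0,1)$ I would substitute both bounds: the $-2(1-\tau_k)\eta_k\langle x_k-x^{\ast},g_k\rangle$ term contributes $-2(1-\tau_k)\eta_k\theta_1\|x_k-x^{\ast}\|^p + 2(1-\tau_k)\eta_k\theta_2$, which combines with the $-2\tau_k\eta_k\theta_1\|x_k-x^{\ast}\|^p$ coming from the first product into a single $-2\eta_k\theta_1\|x_k-x^{\ast}\|^p$ and with the constants into $2\eta_k\theta_2$, while the momentum and smoothness remainders stay $\tfrac{2\beta L}{1-\beta}\eta_k\tau_k\|x_{k-1}-x_k\|^2 + \tfrac{2\beta\tau_k\eta_k}{1-\beta}(f(x_{k-1})-f(x_k))$ as in Lemma \ref{lem:mom:2}(ii). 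The structural point worth flagging is that the leftover $(1-\tau_k)\|x_k-x^{\ast}\|^2$ from the expansion can no longer be absorbed into the dissipativity term (now $\|\cdot\|^p$ with $p<2$), so it survives as a separate summand in the statement.

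I do not anticipate a genuine obstacle inside this lemma: it is essentially a transcription of Lemma \ref{lem:mom:2} with the single substitution $\|x_k-x^{\ast}\|^2 \mapsto \|x_k-x^{\ast}\|^p$ in the dissipativity step, and the only thing to watch is that generalized $R$-dissipativity is invoked only where $\|x_k-x^{\ast}\|\ge R$. The real work created by the surviving $(1-\tau_k)\|x_k-x^{\ast}\|^2$ term and by later having to control $\E[\|g_k\|^2\mid\mathcal{F}_k]$ through the $\tau$-growth bound (\ref{grad:general}) rather than $L$-smoothness is deferred to the proofs of Theorems \ref{mom:general:const} and \ref{mom:general:decaying}, where the Lyapunov recursion has to be rebalanced accordingly.
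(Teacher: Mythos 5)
Your proposal is correct and is essentially the paper's own argument: the paper explicitly derives this lemma by repeating the proof of Lemma \ref{lem:mom:2} verbatim and replacing the single dissipativity estimate $-\langle \nabla f(x_k), x_k-x^{\ast}\rangle \le -\theta_1\|x_k-x^{\ast}\|^2+\theta_2$ with its $\|\cdot\|^p$ counterpart, exactly as you describe. Your observation that the $(1-\tau_k)\|x_k-x^{\ast}\|^2$ term from the convexity expansion can no longer be merged with the dissipativity term and must survive as a separate quadratic summand is precisely the structural difference between part (ii) here and Lemma \ref{lem:mom:2}(ii).
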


\begin{proof}(of Theorem \ref{mom:general:const})
In this case, we consider the constant step-size, that is $\eta_k = \eta$ and $\tau_k = \eta_k/\eta_{k-1} =1$. Here we define the Lyapunov function $W_{k+1}$ as 
\begin{align*}
\E[W_{k+1}] = \E[\left\|\tilde{x}_{k+1} - x^{\ast} \right\|^2 \mid \mathcal{F}_k] + \E[\left\|x_{k+1} - x_k \right\|^2 \mid \mathcal{F}_k] + \gamma_{\beta} \E[f(x_k) - f^{\ast}] 
\end{align*}
where $\gamma_{\beta} = \beta(1-\beta + (1-\beta)^{-1} )$.
Incorporating the results of Lemma \ref{lem:mom:1}, (\ref{grad:general}) and (i) of Lemma \ref{mom:lem2:1}, we can get that 
\begin{align}\label{mom:inequ:general:const}
& \E[W_{k+1}\mid \mathcal{F}_k] = \E[\left\| \tilde{x}_{k+1} - x^{\ast} \right\|^2 \mid \mathcal{F}_k] +   \E[\left\| x_{k+1} - x_k \right\|^2 \mid \mathcal{F}_k ] + 2 \eta \gamma_{\beta} \left(f(x_{k})  -f^{\ast}\right) \notag \\
& \leq \left\|\tilde{x}_k - x^{\ast} \right\|^2 + \left(\beta^2 + \eta\beta(1-\beta)L + \eta\frac{\beta L}{(1-\beta)}\right)\left\|x_{k-1}-x_{k}\right\|^2 - 2\theta_1\eta\left\|x_k - x^{\ast} \right\|^p + 2\eta \theta_2 \notag \\
& \quad + \eta^2\left((1-\beta)^2 + 1 \right)\left( \sigma^2 + (\rho+1)\theta_3 +  (\rho+1)\theta_3\left\| x_k - x^{\ast}\right\|^{2\tau} \right) + 2 \eta \gamma_{\beta} \left(f(x_{k-1})  -f^{\ast}\right) \notag \\
& \mathop{\leq}^{(a)} \E[W_k] - 2\theta_1\eta\left\|x_k - x^{\ast} \right\|^p + \eta^2\left((1-\beta)^2 + 1 \right)(\rho+1)\theta_3\left\| x_k - x^{\ast}\right\|^{2\tau} \notag \\
& \quad + \eta^2\left((1-\beta)^2 + 1 \right)\left( \sigma^2 + (\rho+1)\theta_3 \right) + 2\theta_2 \eta.
\end{align}
where $(a)$ follows from $\eta \leq  \frac{1-\beta^2}{L \gamma_{\beta}} $
to make sure that
\begin{align*}
\beta^2 + \eta\beta(1-\beta)L + \eta\frac{\beta L}{(1-\beta)} \leq 1
\end{align*}
First, we consider the initial point $x_1$ such that $\left\| x_1 - x^{\ast} \right\|\leq r$. Assume that there exist an iteration $k$ such that $\left\| x_{k-1} -x^{\ast} \right\|^2 \leq r^2$ and   $\left\| x_{k} -x^{\ast} \right\|^2 > r^2$, then the above analysis holds. 
By the assumption that $ \tau \leq p/2$,  $\left\|x_k - x^{\ast} \right\|^2 > r^2$, and $r^{2(p-2\tau)} \geq \eta (1+(1-\beta)^2)(\rho+1)\theta_3/\theta_1$, then \begin{align*}
-\theta_1 \eta \left\|x_k - x^{\ast} \right\|^p + \eta^2\left((1-\beta)^2 + 1 \right)(\rho+1)\theta_3\left\| x_k - x^{\ast}\right\|^{2\tau} < 0
\end{align*}
The main inequality (\ref{mom:inequ:general:const}) can be re-written as
\begin{align*}
 \E[W_{k+1}\mid \mathcal{F}_k]  \leq \E[W_k] - \theta_1 \eta  \left\|x_k - x^{\ast} \right\|^p + \eta^2\left((1-\beta)^2 + 1 \right)\left( \sigma^2 + (\rho+1)\theta_3 \right) + 2\theta_2 \eta.
\end{align*}
We can see that if 
\begin{align*}
r^2 = \max \left\lbrace R^2, \left(\frac{\eta (1+(1-\beta)^2)(\rho+1)\theta_3}{\theta_1}\right)^{1/(p-2\tau)}, \left(\frac{2\theta_2}{\theta_1} + \left((1-\beta)^2 + 1 \right)\left( \sigma^2 + \eta(\rho+1)\theta_3 \right)\right)^{2/p}\right\rbrace
\end{align*}
once $\E[\left\| x_k -x^{\ast} \right\|^2] > r^2$, the sequence of $\E[W_{k+1}]$ will decrease until $\E[\left\| x_k -x^{\ast} \right\|^2] \leq r^2$. Similar to the analysis of Theorem \ref{thm:mom:const}, we get the conclusion that the sequences of $\E[W_k]$, $\E[\left\|x_k - x^{\ast} \right\|^2]$, and $\E[f(x_k) - f^{\ast}]$ are uniformly bounded for all $k \geq 1$. 
\end{proof}
\begin{proof}(of Theorem \ref{mom:general:decaying})
Different from the proof of Theorem \ref{mom:general:const}, we have additional positive term $(1-\tau_k)\E[\left\|x_k - x^{\ast} \right\|^2]$ to deal with. We then need to estimate $\E[\left\| x_{k+1} - x^{\ast} \right\|^2 \mid \mathcal{F}_k]$.
\begin{align*}
\E[\left\| x_{k+1} - x^{\ast} \right\|^2 \mid \mathcal{F}_k] & = \E\left[\left\| x_k - \eta_k \left( \beta \frac{x_{k-1} - x_{k}}{\eta_{k-1}} + (1-\beta) g_k \right) - x^{\ast}\right\|^2\right] \notag \\
& = \E\left[\left\| x_k -x^{\ast} - \beta \tau_k(x_{k-1}-x_k) - (1-\beta)\eta_k g_k \right\|^2 \right] \notag \\ 
& =  \E[\left\| x_k -x^{\ast} - \beta \tau_k(x_{k-1}-x_k)\right\|^2] + \eta_k^2(1-\beta)^2\E[\left\|g_k \right\|^2 \mid \mathcal{F}_k]  \notag \\
& \quad + 2\beta(1-\beta)\tau_k
\eta_k\E[\left\langle x_{k-1}-x_k, g_k\right\rangle]- 2(1-\beta)\eta_k \E[\left\langle x_k -x^{\ast}, g_k \right\rangle] \notag \\
& = \E\left[\left\| x_k -x^{\ast} + \beta \tau_k \frac{1-\beta}{\beta}(\tilde{x}_k -x_k) \right\|^2 \right] + \eta_k^2(1-\beta)^2\E[\left\|g_k \right\|^2 \mid \mathcal{F}_k]  \notag \\
& \quad + 2\beta(1-\beta)\tau_k
\eta_k\E[\left\langle x_{k-1}-x_k, g_k\right\rangle]- 2(1-\beta)\eta_k \E[\left\langle x_k -x^{\ast}, g_k \right\rangle].
\end{align*}
By the convexity of $\left\| \cdot \right\|^2$, we have
\begin{align*}
\E\left[\left\| x_k -x^{\ast} + \beta \tau_k (1-\beta)\beta^{-1}(\tilde{x}_k -x_k) \right\|^2 \right] & = \E\left[\left\|(1-\tau_k(1-\beta))( x_k -x^{\ast}) +  \tau_k(1-\beta)(\tilde{x}_k -x^{\ast}) \right\|^2 \right] \notag \\
& \leq (1-\tau_k(1-\beta))\E\left[\left\|( x_k -x^{\ast}) \right\|^2\right] +  \tau_k(1-\beta)\E\left[\left\|\tilde{x}_k -x^{\ast} \right\|^2 \right].
\end{align*}
Then we turn to estimate 
\begin{align*}
- \E[\left\langle x_k -x^{\ast}, g_k \right\rangle] = -\left\langle x_k -x^{\ast}, \nabla f(x_k)\right\rangle \leq - \theta_1 \left\| x_k -x^{\ast} \right\|^p + \theta_2. 
\end{align*}
and 
\begin{align*}
\E[\left\langle x_{k-1}-x_k, g_k\right\rangle] = \left\langle x_{k-1}-x_k, \nabla f(x_k)\right\rangle \leq f(x_{k-1}) - f(x_k) + \frac{L}{2}\left\|x_k -x_{k-1} \right\|^2.
\end{align*}
Then we define the Lyapunov function $W_{k+1}$ below:
\begin{align*}
\E[W_{k+1} \mid \mathcal{F}_k] & =  \E[\left\|\tilde{x}_{k+1} - x^{\ast} \right\|^2] + c_{k+1}\E[\left\| x_{k+1} - x^{\ast} \right\|^2\mid \mathcal{F}_k] + \E[\left\| x_{k+1} -x_{k} \right\|^2\mid \mathcal{F}_k] + u_{k}\E[f(x_k) - f^{\ast}\mid \mathcal{F}_k]
\end{align*}
where  $c_{k+1}>0$, $\gamma_{\beta} = \beta(1-\beta + (1-\beta)^{-1})$, $u_k = 2\gamma_{\beta} \tau_k\eta_k + 2 \beta (1-\beta)\tau_k \eta_k c_{k+1} $. 
Then incorporating the above results gives:
\begin{align}\label{mom:general:decay:2}
& \E[W_{k+1} \mid \mathcal{F}_k]  =  \E[\left\|\tilde{x}_{k+1} - x^{\ast} \right\|^2] + c_{k+1} \E[\left\| x_{k+1} - x^{\ast} \right\|^2\mid \mathcal{F}_k] + \E[\left\| x_{k+1} -x_{k} \right\|^2\mid \mathcal{F}_k] + u_{k}\E[f(x_k) - f^{\ast}\mid \mathcal{F}_k] \notag \\
& \leq \left(\tau_k  + c_{k+1} \tau_k(1-\beta)\right) \left\|\tilde{x}_{k} - x^{\ast} \right\|^2 +  c_k\left\| x_{k} - x^{\ast} \right\|^2   + \left(c_{k+1} - c_k -c_{k+1}\tau_k(1-\beta) + (1-\tau_k)\right)\left\| x_{k} - x^{\ast} \right\|^2 \notag \\
& \quad +  \left(\tau_k^2\beta^2 + \beta(1-\beta)\tau_k\eta_k L + c_{k+1} \beta(1-\beta)\tau_k \eta_k L + \frac{2 \beta L\eta_k\tau_k }{1-\beta}\right)\left\| x_{k} - x_{k-1} \right\|^2 - 2 \eta_k\theta_1\left\| x_k - x^{\ast} \right\|^p + 2 \eta_k \theta_2 \notag \\
& \quad   + 2(1-\beta)\eta_k c_{k+1} \left(- \theta_1\left\| x_k - x^{\ast} \right\|^p + \theta_2\right) + \left(\eta_k^2(1-\beta)^2 + \eta_k^2 + \eta_k^2 c_{k+1} (1-\beta)^2 \right)\E[\left\| g_k \right\|^2] \notag \\
& \quad   + 2\gamma_{\beta}\left( \tau_k\eta_k - \tau_{k-1}\eta_{k-1} \right)\left(f(x_{k-1}) - f^{\ast} \right) + 2\beta(1-\beta)\left( \tau_k\eta_k c_{k+1} - \tau_{k-1}\eta_{k-1} c_k\right)\left(f(x_{k-1}) - f^{\ast} \right) \notag \\
& \quad + u_{k-1}(f(x_{k-1}) - f^{\ast}).
\end{align}
 First, we properly choose $c_{k+1} = (1-\tau_k)/(\tau_k(1-\beta))$ to make the scalar  of $\left\|\tilde{x}_{k} - x^{\ast} \right\|^2$ is smaller than 1.
We make the following discussion on the two popular decaying modes:
\begin{itemize}
    \item The step-size is polynomial decaying $\eta_k = \eta_1/k^r$ for all $r \in (0,1]$, then $c_{k+1} = (1-\tau_k)/(\tau_k(1-\beta)) \leq \frac{1}{(k-1)^r (1-\beta)}$ is decreasing with $k$. The scalar $ c_{k+1} - c_k -c_{k+1}\tau_k(1-\beta) + (1-\tau_k)$ is negative. By properly choosing the step-size \begin{align*}
        \eta_k \leq  \frac{1-\beta^2}{\beta L((1-\beta)(1+c_{k+1}) + 2(1-\beta)^{-1})} \leq \frac{1-\beta^2}{2\beta L((1-\beta) + (1-\beta)^{-1})}
    \end{align*}
    to ensure the coefficient of $\E[\left\|x_k - x_{k-1} \right\|^2]$ is smaller than 1. By case 1 of Theorem \ref{thm:mom:decay}, we know that $\tau_k\eta_k - \tau_{k-1}\eta_{k-1}$ is negative for all $k$. Because $c_{k+1}$ is decreasing, we can get that $\tau_k\eta_kc_{k+1}-\tau_{k-1}\eta_{k-1}c_k \leq c_k(\tau_k\eta_k-\tau_{k-1}\eta_{k-1})  $ which is also negative.
    Then the main inequality (\ref{mom:general:decay:2}) can be evaluated as:
    \begin{align*}
      \E[W_{k+1} \mid \mathcal{F}_k] & \leq W_k -   2\eta_k\theta_1\left\| x_k -x^{\ast} \right\|^p + 2\eta_k\theta_2 + 2(1-\beta)\eta_k c_{k+1} \left(- \theta_1\left\| x_k - x^{\ast} \right\|^p + \theta_2\right)  \notag \\
      & \quad + \left(\eta_k^2(1-\beta)^2 + \eta_k^2 + \eta_k^2 c_{k+1} (1-\beta)^2 \right)\E[\left\| g_k \right\|^2]
    \end{align*}
Incorporating (\ref{grad:general}) into the above inequality, we get that
\begin{align}\label{inequ:mom:general:key}
   \E[W_{k+1} \mid \mathcal{F}_k] & \leq W_k -   2\eta_k\theta_1\left\| x_k -x^{\ast} \right\|^p + 2\eta_k\theta_2 + 2(1-\beta)\eta_k c_{k+1} \left(- \theta_1\left\| x_k - x^{\ast} \right\|^p + \theta_2\right)  \notag \\
   & \quad + \left(\eta_k^2(1-\beta)^2 + \eta_k^2 + \eta_k^2 c_{k+1} (1-\beta)^2 \right)\left( (\rho+1)\theta_3 \left\| x_k - x^{\ast} \right\|^{2\tau} + (\rho+1)\theta_3 + \sigma^2\right)
\end{align}
Assume that there exist an iteration $k$ such that $\left\| x_{k-1}-x^{\ast} \right\|^2 \leq r^2$ and $\left\|x_k -x^{\ast}\right\|^2 > r^2$, then the above analysis holds. Due to the assumption that $\tau \leq p/2$, then 
we choose 
\begin{align*}
r^{(p-2\tau)} \geq \max\left\lbrace \frac{\eta_{1}((1-\beta)^2+1)(\rho+1)\theta_3}{\theta_1}, \frac{\eta_{1}(1-\beta)(\rho+1)\theta_3}{2\theta_1} \right\rbrace = \frac{\eta_{1}((1-\beta)^2+1)(\rho+1)\theta_3}{\theta_1}
\end{align*}
to make sure that $(\eta_k^2(1-\beta)^2 + \eta_k^2)(\rho+1)\theta_3\left\| x_k -x^{\ast} \right\|^{2\tau} \leq \theta_1\eta_k\left\|x_k -x^{\ast} \right\|^p$ and $\eta_k^2(1-\beta)^2c_{k+1}(\rho+1)\theta_3\left\| x_k -x^{\ast} \right\|^{2\tau} \leq 2(1-\beta)\eta_kc_{k+1}\theta_1\left\|x_k -x^{\ast} \right\|^p$. Then the inequality (\ref{inequ:mom:general:key}) can be estimated as
\begin{align*}
    \E[W_{k+1} \mid \mathcal{F}_k] & \leq W_k - \eta_k\theta_1\left\| x_k -x^{\ast} \right\|^p + 2\eta_k\theta_2 + 2(1-\beta)\eta_k c_{k+1}\theta_2 \notag \\
   & \quad + \left(\eta_k^2(1-\beta)^2 + \eta_k^2 + \eta_k^2 c_{k+1} (1-\beta)^2 \right)\left((\rho+1)\theta_3 + \sigma^2\right)
\end{align*}
By properly choosing 
\begin{align*}
r^p \leq \frac{4\theta_2}{\theta_1} + \frac{\eta_1(2-\beta)^2((\rho+1)\theta_3 + \sigma^2)}{\theta_1}
\end{align*}
once $\E[\left\|x_k - x^{\ast} \right\|^2] > r^2$, we have $\E[W_k]$ is decreasing. Overall, we set 
\begin{align*}
r^2 = \max \left\lbrace R^2, \left(\frac{4\theta_2}{\theta_1} + \frac{\eta_1(2-\beta)^2((\rho+1)\theta_3 + \sigma^2)}{\theta_1}\right)^{2/p}, \left(\frac{\eta_{1}((1-\beta)^2+1)(\rho+1)\theta_3}{\theta_1} \right)^{2/(p-2\tau)} \right\rbrace.
\end{align*}
Similar to the discussion of Theorem \ref{thm:mom:decay}, we can conclude that $\E[\left\|x_k -x^{\ast} \right\|^2]$ and $\E[f(x_k) - f^{\ast}]$ are uniformly bounded for all $k \geq 1$. 
    \item The step-size is exponentially decaying with $\eta_k = \eta_1/\alpha^{k-1}$ and $\alpha = (\nu/T)^{-1/T}$ with $\nu \geq 1$. In this case, $c_{k+1} = (1-\tau_k)/(\tau_k(1-\beta)) = (1-\alpha)/(\alpha(1-\beta))$ where $\tau_k = \eta_k/\eta_{k-1}=\alpha$. The the coefficient of $c_{k+1} -c_k - c_{k+1}\tau_k(1-\beta) + (1-\tau_k)=0$. By properly choosing step-size
    \begin{align*}
        \eta_k \leq \frac{1-\beta^2\tau_k^2}{\beta L((1-\beta)\tau_k + (1-\tau_k) + 2\tau_k(1-\beta)^{-1})} = \frac{1-\beta^2\tau_k^2}{\beta L((1-\beta\tau_k  + 2\tau_k(1-\beta)^{-1})},
    \end{align*}
   which is decreasing with $\tau_k$, that is \begin{align}
    \eta_k \leq \frac{1-\beta^2}{\beta L(1-\beta + 2(1-\beta)^{-1})}
    \end{align}
    we can make sure the coefficient of $\E[\left\|x_k -x_{k-1} \right\|^2]$ is smaller than 1.  By the Case 3 of Theorem \ref{thm:mom:decay}, we know that $\tau_k\eta_k - \tau_{k-1}\eta_{k-1}$ is negative. Similar to the polynomial step-size, combining (\ref{grad:general}), the main inequality (\ref{inequ:mom:general:key}) will be
    \begin{align*}
      \E[W_{k+1} \mid \mathcal{F}_k] & \leq W_k -   2\eta_k\theta_1\left\| x_k -x^{\ast} \right\|^p + 2\eta_k\theta_2 + 2(1-\beta)\eta_k c_{k+1} \left(- \theta_1\left\| x_k - x^{\ast} \right\|^p + \theta_2\right)  \notag \\
   & \quad + \left(\eta_k^2(1-\beta)^2 + \eta_k^2 + \eta_k^2 c_{k+1} (1-\beta)^2 \right)\left( (\rho+1)\theta_3 \left\| x_k - x^{\ast} \right\|^{2\tau} + (\rho+1)\theta_3 + \sigma^2\right)
    \end{align*}
  Assume that there exists the iterate $k$ such that $\E[\left\| x_{k-1} -x^{\ast}\right\|^2] \leq r^2$ and $\E[\left\|x_k-x^{\ast} \right\|^2] > r^2$, then by properly setting   
 \begin{align*}
    r^2 =  \max \left\lbrace R^2, \left(\frac{2\theta_2}{\alpha\theta_1} + \frac{\eta_1((1-\beta)^2+\alpha^{-1})((\rho+1)\theta_3 + \sigma^2)}{\theta_1}\right)^{2/p}, \left(\frac{\eta_{1}((1-\beta)^2+1)(\rho+1)\theta_3}{\theta_1} \right)^{2/(p-2\tau)} \right\rbrace,
 \end{align*}
 we can conclude that once $\E[\left\| x_k -x^{\ast} \right\|^2] > r^2$, we have $\E[W_{k+1}]$ is decreasing. Similar to Case 3 of Theorem \ref{thm:mom:decay}, the quantities $\E[\left\|x_k-x^{\ast} \right\|^2]$ and $\E[f(x_k) - f^{\ast}]$
are uniformly bounded for all $k \geq 1$.
\end{itemize}



\end{proof}

\section{Applications}
\label{supple:application}
\subsection{Examples in Section \ref{sec:dissipative}}\label{dissipative:example}
\begin{itemize}
        \item A fully connected 2-layer neural network (hidden nodes $m$) with a weight decay: let $\sigma_1$ and $\sigma_2(\cdot)$ denote the inner and output layer activation functions, respectively. We use cross-entropy to measure the output with its true label $CE(\cdot)$. Given $n$ pairs input data $\left\lbrace a_j, b_j\right\rbrace_{j=1}^n$, we revise the input data as $a_j = [a_j, 1] \in \R^d$ then
        \begin{align*}
            F(x) = \frac{1}{n}\sum_{j=1}^n CE(\sigma_2(X_2\sigma_1(X_1a_j))) +  \frac{\lambda}{2}\left\|X \right\|^2
        \end{align*}
        where $X = [X_1, X_2] \in \R^{dm + sm}$, $X_1 \in \R^{m \times d}$ and $X_2 \in \R^{s\times m}$. 
       For simplicity, we consider the binary dataset $b_j \in \left\lbrace -1, +1\right\rbrace$ and $s=1$, and $\sigma_2(\cdot) = 1/(1+\exp(-\cdot))$, then
\begin{align*}
            F(x) = \frac{1}{n}\sum_{j=1}^n\log\left(1+\exp(-b_j\sum_{i=1}^{m}X_2[i]\sigma_1(X_1[i]^{T}a_j) )\right) + \frac{\lambda}{2}\left\|X \right\|^2
        \end{align*}
Then we turn to compute the gradient (sub-gradient) of each component function $F_j$ with regarding to data $(a_j, b_j)$
\begin{align*}
\frac{\partial F_j}{\partial X_1[i]} & = \frac{b_j\exp(-b_j\sum_{i=1}^{m}X_2[i]\sigma_1(X_1[i]^{T}a_j))}{1 + \exp(-b_j\sum_{i=1}^{m}X_2[i]\sigma_1(X_1[i]^{T}a_j)}\cdot -X_2[i]\partial \sigma_1(X_1[i]^{T}a_j)a_j + \lambda X_1[i] \in \R^d \notag \\
\frac{\partial F_j}{\partial X_2[i]} & =\frac{b_j\exp(-b_j\sum_{i=1}^{m}X_2[i]\sigma_1(X_1[i]^{T}a_j))}{1 + \exp(-b_j\sum_{i=1}^{m}X_2[i]\sigma_1(X_1[i]^{T}a_j)}\cdot - \sigma_1(X_1[i]^{T}a_j) + \lambda X_2[i] \notag  
\end{align*}
Let $u = b_j\sum_{i=1}^{m}X_2[i]\sigma_1(X_1[i]^{T}a_j)$, we have
\begin{align*}
\left\langle X, \nabla F_j(X)\right\rangle & = \sum_{i=1}^m \left(\left\langle X_1[i], \frac{\partial F_j}{\partial X_1[i]}\right\rangle + \left\langle X_2[i], \frac{\partial F_j}{\partial X_2[i]}\right\rangle \right) \notag \\
& = \lambda \left\|X\right\|^2 - \frac{b_j\exp(-u)}{1+\exp(-u)}\sum_{i=1}^m\left( X_2[i]\partial \sigma_1(X_1[i]^{T}a_j)X_1[i]^{T}a_j + X_2[i]\sigma_1(X_1[i]^{T}a_j) \right).
\end{align*}
\begin{itemize}
    \item If $\sigma_1(x) = \max(0, x)$, then its gradient $\partial \sigma_1(x) = \chi_{x \geq 0}$. We get the relationship of $\sigma_1$ with its gradient $\sigma_1(x) = (x)_{+} = x \cdot \chi_{x \geq 0}= x  \cdot \partial \sigma_1(x)$.  Then the above product can be estimated as 
  \begin{align*}
    \left\langle X, \nabla F_j(X)\right\rangle & = \lambda\left\| X\right\|^2 - \frac{b_j\exp(-u)}{1+\exp(-u)}\sum_{i=1}^m\left( X_2[i]\partial \sigma_1(X_1[i]^{T}a_j)X_1[i]^{T}a_j + X_2[i]\sigma_1(X_1[i]^{T}a_j) \right) \notag \\
    &  = \lambda\left\| X\right\|^2  - \frac{b_j\exp(-u)}{1+\exp(-u)}\sum_{i=1}^m\left( X_2[i] \sigma_1(X_1[i]^{T}a_j) + X_2[i]\sigma_1(X_1[i]^{T}a_j) \right) \notag \\
    & = \lambda\left\| X\right\|^2  - \frac{2u \exp(-u)}{1+\exp(-u)}.  
    \end{align*}  
 No matter $u \geq 0$ or not, we all have
\begin{align*}
     \frac{\exp(-u) u }{1+\exp(-u)} = \frac{u}{1+\exp(u)} < 1.
 \end{align*}
 In this case, we have
 \begin{align*}
    \left\langle X, \nabla F(X)\right\rangle = \frac{1}{n}\sum_{j=1}^n \left\langle X, \nabla F_j(X)\right\rangle \geq \lambda\left\| X\right\|^2  - 2. 
 \end{align*}
    \item If $\sigma_1(x) = 1/(1+\exp(-x))$ is a sigmoid loss function, we have $\frac{\partial \sigma_1(x)}{\partial x} = \sigma_1(x) (1-\sigma_1(x))$. In this case, we have
    \begin{align*}
      \left\langle X, \nabla F_j(X)\right\rangle & = \lambda\left\| X\right\|^2 - \frac{b_j\exp(-u)}{1+\exp(-u)}\sum_{i=1}^m\left( X_2[i]\partial \sigma_1(X_1[i]^{T}a_j)X_1[i]^{T}a_j + X_2[i]\sigma_1(X_1[i]^{T}a_j) \right) \notag \\
      & = \lambda\left\| X\right\|^2 - \frac{u\exp(-u)}{1+\exp(-u)} - \frac{b_j\exp(-u)}{1+\exp(-u)}\sum_{i=1}^m X_2[i] \sigma_1(X_1[i]^{T}a_j)(1-\sigma_1(X_1[i]^{T}a_j))X_1[i]^{T}a_j
    \end{align*}
    We consider different situations to show how to estimate the last term. We let $A = \left\lbrace i : b_jX_2[i] \geq 0\right\rbrace $.
    \begin{itemize}
        \item  If $A = [m]$, then
        \begin{align*}
    \frac{\exp(-u)}{1+\exp(-u)}\sum_{i=1}^m b_jX_2[i] \sigma_1(X_1[i]^{T}a_j)(1-\sigma_1(X_1[i]^{T}a_j))X_1[i]^{T}a_j    & \leq  \frac{\exp(-u)}{1+\exp(-u)}\sum_{i=1}^m b_jX_2[i] \notag \\
    & = \frac{u\exp(-u)}{1+\exp(-u)}. 
        \end{align*}
 where the inequality follows from the facts that  $\sigma_1(x) \in [0,1]$ and 
    \begin{align}\label{inequ:xa}
    (1-\sigma_1(X_1[i]^{T}a_j))X_1[i]^{T}a_j = \frac{X_1[i]^{T}a_j}{1+\exp(X_1[i]^{T}a_j)} <  1
    \end{align}
        \item If some of $b_jX_2[i] \geq 0$ are negative. That is $A \subseteq [m]$. Then we can split the sum into two cases
        \begin{itemize}
            \item For the term that $i \in A$, we have $b_jX_2[i] \geq 0$. Let $B = \left\lbrace i:  X_1[i]^{T}a_j \geq 0 \right\rbrace$. We can also achieve that
            \begin{align}\label{inequ:neural:layer}
       & \frac{\exp(-u)}{1+\exp(-u)}\sum_{i=1}^m b_jX_2[i] \sigma_1(X_1[i]^{T}a_j)(1-\sigma_1(X_1[i]^{T}a_j))X_1[i]^{T}a_j  \notag \\
       & = \frac{\exp(-u)}{1+\exp(-u)}\sum_{i\in A}^m b_jX_2[i]\sigma_1(X_1[i]^{T}a_j)(1-\sigma_1(X_1[i]^{T}a_j))X_1[i]^{T}a_j  \notag \\
       & \quad +  \frac{\exp(-u)}{1+\exp(-u)}\sum_{i \in ([m]\setminus A) \cap B} b_jX_2[i]\sigma_1(X_1[i]^{T}a_j)(1-\sigma_1(X_1[i]^{T}a_j))X_1[i]^{T}a_j  \notag \\
       & \quad +  \frac{\exp(-u)}{1+\exp(-u)}\sum_{ i \in ([m]\setminus A) \cap (m \setminus B) } b_jX_2[i]\sigma_1(X_1[i]^{T}a_j)(1-\sigma_1(X_1[i]^{T}a_j))X_1[i]^{T}a_j. 
    \end{align}
  By (\ref{inequ:xa}), the first term of (\ref{inequ:neural:layer}) can be bounded by
  \begin{align*}
     \sum_{i\in A}^m b_jX_2[i]\sigma_1(X_1[i]^{T}a_j)(1-\sigma_1(X_1[i]^{T}a_j))X_1[i]^{T}a_j \leq \sum_{i\in A}^m b_jX_2[i]\sigma_1(X_1[i]^{T}a_j).
  \end{align*}
  We turn to estimate the second term of (\ref{inequ:neural:layer}):
  \begin{align*}
 \sum_{i \in ([m]\setminus A) \cap B} b_jX_2[i]\sigma_1(X_1[i]^{T}a_j)(1-\sigma_1(X_1[i]^{T}a_j))X_1[i]^{T}a_j \leq 0. \end{align*}
 The last term of (\ref{inequ:neural:layer}) is estimated as:
  \begin{align*}
 & \sum_{ i \in ([m]\setminus A) \cap (m \setminus B) } b_jX_2[i]\sigma_1(X_1[i]^{T}a_j)(1-\sigma_1(X_1[i]^{T}a_j))X_1[i]^{T}a_j \notag \\
 & 
 =  \sum_{ i \in ([m]\setminus A) \cap (m \setminus B) } (-b_jX_2[i])\sigma_1(X_1[i]^{T}a_j)(1-\sigma_1(X_1[i]^{T}a_j))(-X_1[i]^{T}a_j) \notag \\
 & = \sum_{ i \in ([m]\setminus A) \cap (m \setminus B) } (-b_jX_2[i])(1-\sigma_1(-X_1[i]^{T}a_j))\sigma_1(-X_1[i]^{T}a_j)(-X_1[i]^{T}a_j) \notag \\
 & \leq \sum_{ i \in ([m]\setminus A) \cap ([m] \setminus B) } (-b_jX_2[i])\sigma_1(-X_1[i]^{T}a_j)).
  \end{align*} 
 \end{itemize}
Applying the above results gives 
 \begin{align*}
     & \frac{\exp(-u)}{1+\exp(-u)}\sum_{i=1}^m b_jX_2[i] \sigma_1(X_1[i]^{T}a_j)(1-\sigma_1(X_1[i]^{T}a_j))X_1[i]^{T}a_j  \notag \\
     & \leq  \frac{\exp(-u)}{1+\exp(-u)}\left( \sum_{i\in A}^m b_jX_2[i]\sigma_1(X_1[i]^{T}a_j) + \sum_{ i \in ([m]\setminus A) \cap ([m] \setminus B) } (-b_jX_2[i])\sigma_1(-X_1[i]^{T}a_j))  \right) \notag \\
     & \leq \frac{\exp(-u)}{1+\exp(-u)}\left( \frac{c_1}{2} \left( \sum_{i=1}^m \left\| X_2[i] \right\|^2  \right)+ \frac{1}{2c_1} \left(\sum_{i \in A} \sigma_1^2(X_1[i]^Ta_j) + \sum_{ i \in ([m]\setminus A) \cap ([m] \setminus B) } \sigma_1^2(-X_1[i]^Ta_j)\right) \right) \notag \\
     & \leq \left( \frac{c_1}{2}  \left\| X_2\right\|^2+ \frac{m}{2c_1} \right).
 \end{align*}         
    \end{itemize}
  Let $c_1 = \lambda$, then
 \begin{align*}
      \left\langle X, \nabla F(X)\right\rangle  = \frac{1}{n}\sum_{j=1}^n \left\langle X, \nabla F_j(X)\right\rangle  &\geq \lambda\left\| X\right\|^2  - \frac{u\exp(-u)}{1+\exp(-u)} - \left( \frac{\lambda}{2}  \left\| X_2\right\|^2+ \frac{m}{2\lambda} \right) \notag \\
      &
     \geq \frac{\lambda}{2}\left\| X\right\|^2 - \left(1+ \frac{m}{2\lambda} \right).
 \end{align*}

    \item If the neural networks are three layers,  we consider ReLU as the activation function of the inner layers.
 
  \begin{align*}
            F(X) = \frac{1}{n}\sum_{j=1}^n\log\left(1+\exp\left(-b_j\sum_{i=1}^{m_2}X_3[i]\sigma_2\left(X_2[i]^{T}\sigma_1\left(X_1^{T}a_j \right)\right) \right)\right) + \frac{\lambda}{2}\left\|X \right\|^2.
        \end{align*}
Next we choose one component function $F_j$ which is related to the $j$-th dataset to analyze its gradient and product term.       
Let $ u = b_j\sum_{i=1}^{m_2}X_3[i]\sigma_2\left(X_2[i]^{T}\sigma_1\left(X_1^{T}a_j \right)\right)$        
   \begin{align*}
        \frac{\partial F_j}{\partial X_1[l]} & = \frac{\exp(-u)}{1+\exp(-u)} \cdot  - b_j \sum_{i=1}^{m_2}X_3[i]\partial \sigma_2\left(X_2[i]^{T}\sigma_1\left(X_1^{T}a_j \right)\right) X_2[i][l]\partial \sigma_1(X_1[l]^{T}a_j)a_j + \lambda X_1[l]\notag \\
        \frac{\partial F_j}{\partial X_2[i]} & =  \frac{\exp(-u)}{1+\exp(-u)} \cdot - b_j X_3[i]\partial \sigma_2\left(X_2[i]^{T}\sigma_1\left(X_1^{T}a_j \right)\right)\sigma_1(X_1^{T}a_j) + \lambda X_2[i] \notag \\
        \frac{\partial F_j}{\partial X_3[i]} & = \frac{\exp(-u)}{1+\exp(-u)} \cdot - b_j  \sigma_2\left(X_2[i]^{T}\sigma_1\left(X_1^{T}a_j \right)\right) + \lambda X_3[i].
  \end{align*}     
      Then 
    \begin{align*}
  \left\langle X, \nabla F_j(X)\right\rangle   = \sum_{l=1}^{m_1}\left\langle X_1[l], \frac{\partial F_j(X)}{\partial X_1[l]}\right\rangle + \sum_{i=1}^{m_2}\left\langle X_2[i], \frac{\partial F_j(X)}{\partial X_2[i]}\right\rangle + \sum_{i=1}^{m_2}\left\langle X_3[i], \frac{\partial F_j(X)}{\partial X_3[i]}\right\rangle.
    \end{align*}  
Let $\sigma_1(x) = \sigma_2(x) = \max(0, x)$, then their gradients  $\partial \sigma_1(x) = \partial \sigma_2(x) = \chi_{x \geq 0}$. Then the activation functions with their gradient satisfy that $\sigma_i(x) =  x \partial \sigma_i(x)$  for $i=1$ and $2$.  Then
\begin{align*}
& \sum_{l=1}^{m_1}\left\langle X_1[l], \frac{\partial F_j(X)}{\partial X_1[l]}\right\rangle \notag \\
& = \sum_{l=1}^{m_1}\frac{ - b_j\exp(-u)}{1+\exp(-u)} \cdot  \sum_{i=1}^{m_2}X_3[i]\partial \sigma_2\left(X_2[i]^{T}\sigma_1\left(X_1^{T}a_j \right)\right) X_2[i][l]\partial \sigma_1(X_1[l]^{T}a_j)X_1[l]^{T}a_j + \lambda \left\|X_1[l] \right\|^2 \notag \\
& = \sum_{l=1}^{m_1}\frac{ - b_j\exp(-u)}{1+\exp(-u)} \cdot  \sum_{i=1}^{m_2}X_3[i]\partial \sigma_2\left(X_2[i]^{T}\sigma_1\left(X_1^{T}a_j \right)\right) X_2[i][l] \sigma_1(X_1[l]^{T}a_j) + \lambda \left\|X_1[l] \right\|^2 \notag \\
& =  \frac{ - b_j\exp(-u)}{1+\exp(-u)} \cdot  \sum_{i=1}^{m_2}X_3[i] \sigma_2\left(X_2[i]^{T}\sigma_1\left(X_1^{T}a_j \right)\right) + \lambda \left\|X_1\right\|^2 \notag \\
& = \frac{\exp(-u)}{1+\exp(-u)} \cdot - u + \lambda \left\| X_1 \right\|^2.
\end{align*}
\begin{align*}
\sum_{i=1}^{m_2}\left\langle X_2[i], \frac{\partial F_j(X)}{\partial X_2[i]}\right\rangle & = \sum_{i=1}^{m_2}\frac{ - b_j\exp(-u)}{1+\exp(-u)} \cdot X_3[i]\partial \sigma_2\left(X_2[i]^{T}\sigma_1\left(X_1^{T}a_j \right)\right)X_2[i]^{T}\sigma_1(X_1^{T}a_j) + \lambda \left\|  X_2[i] \right\|^2 \notag \\
& = \sum_{i=1}^{m_2}\frac{ - b_j\exp(-u)}{1+\exp(-u)} X_3[i] \sigma_2\left(X_2[i]^{T}\sigma_1\left(X_1^{T}a_j \right)\right) + \lambda \left\|  X_2[i] \right\|^2 \notag \\
& = \frac{\exp(-u)}{1+\exp(-u)} \cdot -u + \lambda \left\|  X_2\right\|^2.
\end{align*}
\begin{align*}
\sum_{i=1}^{m_2}\left\langle X_3[i], \frac{\partial F_j(X)}{\partial X_3[i]}\right\rangle & = \sum_{i=1}^{m_2}\frac{\exp(-u)}{1+\exp(-u)} \cdot - b_j  X_3[i] \sigma_2\left(X_2[i]^{T}\sigma_1\left(X_1^{T}a_j \right)\right) + \lambda \left\|X_3[i] \right\|^2 \notag \\
& = \frac{\exp(-u)}{1+\exp(-u)} \cdot -u + \lambda \left\|X_3 \right\|^2.
\end{align*}
Incorporating the above results, we can achieve that 
\begin{align*}
   \left\langle X, \nabla F_j(X)\right\rangle  = \lambda\left\|X \right\|^2 - \frac{3u \exp(-u)}{1+\exp(-u)}.
\end{align*}
For any $u $, we can estimate 
\begin{align*}
    \frac{u \exp(-u)}{1+\exp(-u)} = \frac{u }{1+\exp(u)} \leq 1.
\end{align*}
Finally, we get that
\begin{align*}
\left\langle X, \nabla F(X)\right\rangle & = \frac{1}{n}\sum_{j=1}^n \left\langle X, \nabla F_j(X)\right\rangle  \geq \lambda\left\|X \right\|^2 - 3.
\end{align*}

    \item If the neural networks are $S$ layers,  we consider ReLU as the activation function in the inner layers and softmax as the activation function of the output layer. We use cross-entropy to estimate the true label and prediction output. Given the data set $\left\lbrace (a_j, b_j)\right\rbrace_{j=1}^{n}$, the function can be formulated as below:
\begin{align*}
 F(X) = \frac{1}{n}\sum_{j=1}^n\log\left(1+\exp\left(-b_j X_S^{T}\sigma_{S-1}\left(X_{S-1}^{T}\sigma_{S-2}\left(\cdots X_1^{T}a_j \right)\right) \right)\right) + \frac{\lambda}{2}\left\|X \right\|^2.
\end{align*}
 From the output of layer $s$ to the next layer, the product of the weight matrix $X_s$ and $\sigma_{s}(X_{s-1}$ can be re-written as  $X_{s}^{T}\sigma_{s}(X_{s-1}^T\sigma_{s-1}(\cdots)) = \sum_{i=1}^{m_{s-1}}X_s[:][i]\sigma_{s}(X_{s-1}[i]^T\sigma_{s-1}(\cdots)) \in \R^{m_s}$.   
Let $ u = b_jX_S^{T}\sigma_{S-1}\left(X_{S-2}^{T}\sigma_{S-2}\left(\cdots X_1^{T}a_j \right)\right)$, then we compute the gradient of the function $F_j$ (the $j$-data $(a_j, b_j)$) with regard to the weight matrix of each layers:      
  \begin{align*}
        \frac{\partial F_j}{\partial X_1[i]} & = \frac{-\exp(-u)b_j}{1+\exp(-u)} \cdot  X_S^{T}\partial \sigma_{S-1}\left(\cdots \right)\cdots \left(X_2[:][i]\partial \sigma_1(X_1^{T}[i]a_j)\right) a_j + \lambda X_1[i]\notag \\
          \frac{\partial F_j}{\partial X_2[i]} & = \frac{-\exp(-u)b_j}{1+\exp(-u)} \cdot   X_S^{T}\partial \sigma_{S-1}(\cdots)\cdots \left(X_3[:][i]\partial \sigma_2(X_2[i]^{T}\sigma_1(X_1^{T}a_j))\right)\sigma_1(X_1^{T}a_j) + \lambda X_2[i]\notag \\    
        \vdots & \notag \\
        \frac{\partial F_j}{\partial X_{S-1}[i]} & =  \frac{\exp(-u)}{1+\exp(-u)}
        \cdot - b_j X_S[i]^{T}\partial \sigma_{S-1}\left(X_{S-1}[i]^{T}\cdots \left(X_1^{T}a_j \right)\right)\sigma_{S-2}(X_{S-2}^{T}\cdots \sigma_1\left(X_1^{T}a_j \right)) + \lambda X_{S-1}[i] \notag \\
        \frac{\partial F_j}{\partial X_S} & = \frac{\exp(-u)}{1+\exp(-u)} \cdot - b_j  \sigma_{S-1}\left(X_{S-1}^{T}\sigma_{S-1}\cdots \left(X_1^{T}a_j \right)\right) + \lambda X_S.
  \end{align*}     
The inner product of $X$ and $\nabla F_j(X)$ is 
    \begin{align*}
    & \left\langle X, \nabla F_j(X)\right\rangle \notag \\
    & = \sum_{i=1}^{m_1}\left\langle X_1[i], \frac{\partial F_j}{\partial X_1[i]}\right\rangle + \sum_{i=1}^{m_2}\left\langle X_2[i], \frac{\partial F_j}{\partial X_2[i]}\right\rangle + \cdots + \sum_{i=1}^{m_{S-1}}\left\langle X_{S-1}[i], \frac{\partial F_j}{\partial X_{S-1}[i]}\right\rangle + \left\langle X_S, \frac{\partial F_j}{\partial X_S}\right\rangle
    \end{align*}  
Let $\sigma_1(x) = \sigma_2(x) =\cdots = \sigma_{S-1}(x) =\max(0, x)$, then their gradients  $\partial \sigma_1(x) = \partial \sigma_2(x) =\cdots= \chi_{x \geq 0}$. Then the activation functions with their gradient satisfy that $\sigma_i(x) =  x \partial \sigma_i(x)$  for $i=1, 2, \cdots, S-1$.  Then
\begin{align}
& \sum_{i=1}^{m_1}\left\langle X_1[i], \frac{\partial F_j(X)}{\partial X_1[i]}\right\rangle \notag \\
& = \frac{-\exp(-u)b_j}{1+\exp(-u)} \cdot   X_S^{T}\partial \sigma_{S-1}\left(\cdots \right)X_{S-1}^{T}\partial \sigma_{S-1}(\cdots)\cdots \sum_{i=1}^{m_1}X_2[:][i]\partial \sigma_1(X_1[i]^{T}a_j)X_1[i]^{T}a_j + \lambda \sum_{i=1}^{m_1}\left\|X_1[i] \right\|^2 \notag \\
& = \frac{-\exp(-u)b_j}{1+\exp(-u)} \cdot X_S^{T}\partial \sigma_{S-1}\left(\cdots \right)X_{S-1}^{T}\partial \sigma_{S-1}(\cdots)\cdots \sum_{i=1}^{m_1}X_2[:][i] \sigma_1(X_1[i]^{T}a_j) + \lambda \left\|X_1 \right\|^2 \notag \\
& = \frac{-\exp(-u)b_j}{1+\exp(-u)} \cdot X_S^{T}\partial \sigma_{S-1}\left(\cdots \right)X_{S-1}^{T}\partial \sigma_{S-1}(\cdots)\cdots \partial \sigma_{2}(X_2^{T} \sigma_1(X_1^{T}a_j))X_2^{T} \sigma_1(X_1^{T}a_j) + \lambda \left\|X_1 \right\|^2 \notag \\
& = \frac{\exp(-u)}{1+\exp(-u)} \cdot - u + \lambda \left\| X_1 \right\|^2 
\end{align}
\begin{align}
& \sum_{i=1}^{m_2}\left\langle X_2[i], \frac{\partial F_j(X)}{\partial X_2[i]}\right\rangle \notag \\
& = \frac{-\exp(-u)b_j}{1+\exp(-u)} \cdot   X_S^{T}\partial \sigma_{S-1}(\cdots)\cdots \sum_{i=1}^{m_2}\left(X_3[:][i]\partial \sigma_2(X_2[i]^{T}\sigma_1(X_1^{T}a_j))\right)X_2[i]^{T}\sigma_1(X_1^{T}a_j) + \lambda \sum_{i=1}^{m_2}\left\|  X_2[i] \right\|^2 \notag \\
& = \frac{-\exp(-u)b_j}{1+\exp(-u)} \cdot   X_S^{T}\partial \sigma_{S-1}(\cdots)\cdots \sum_{i=1}^{m_2}\left(X_3[:][i] \sigma_2(X_2[i]^{T}\sigma_1(X_1^{T}a_j))\right) + \lambda \left\|  X_2 \right\|^2 \notag \\
& = \frac{-\exp(-u)b_j}{1+\exp(-u)} \cdot  X_S^{T} \partial\sigma_{S-1}(\cdots)\cdots \partial \sigma_3(X_3^{T}\sigma_2(X_1^{T}a_j))X_3^{T}\sigma_2(X_2^{T}\sigma_1(X_1^{T}a_j)) + \lambda \left\|  X_2 \right\|^2 \notag \\
& = \frac{\exp(-u)}{1+\exp(-u)} \cdot -u + \lambda \left\|  X_2\right\|^2.
\end{align}
$\vdots$
\begin{align}
\left\langle X_S, \frac{\partial F_j}{\partial X_S}\right\rangle & = \frac{\exp(-u)}{1+\exp(-u)} \cdot - b_j  X_S^{T} \sigma_{S-1}\left(X_{S-1}^{T}\sigma_{S-1}\cdots \left(X_1^{T}a_j \right)\right) + \lambda \left\|X_S \right\|^2 \notag \\
& = \frac{\exp(-u)}{1+\exp(-u)} \cdot -u  + \lambda \left\|X_S \right\|^2.
\end{align}
Incorporating the above results, we can achieve that 
\begin{align*}
\left\langle X, \nabla F_j(X)\right\rangle  = \lambda\left\|X \right\|^2 - \frac{S u \exp(-u)}{1+\exp(-u)}.
\end{align*}
For any $u $, we can estimate 
\begin{align}
    \frac{u \exp(-u)}{1+\exp(-u)} = \frac{u }{1+\exp(u)} \leq 1.
\end{align}
Finally, we get that
\begin{align}
\left\langle X, \nabla F_j(X)\right\rangle  \geq \lambda\left\|X \right\|^2 - S.
\end{align}
\end{itemize}

\subsection{Applications in Section \ref{sec:general:sgd}}

\begin{itemize}
    \item Logistic regression with $\ell_1$ regularizer: $f(x) = \sum_{i=1}^{n}\log(1+\exp(-b_i a_i^{x})) + \lambda \left\| x \right\|_1$. The gradient is 
    \begin{align}
    \nabla f(x) = \frac{-b_i a_i \exp(-b_ia_i^{T}x)}{1 + \exp(-b_i a_i^{T}x)}  + \lambda \mbox{sign}(x)
    \end{align}
    then
    \begin{align}
    \left\langle  \nabla f(x), x\right\rangle  =  -b_i a_i^{T}x\frac{\exp(-b_ia_i^{T}x)}{1+\exp(-b_ia_i^{T}x)} + \lambda \left\|x \right\|_1 = \frac{u\exp(u)}{1+\exp(u)} + \lambda \left\| x \right\|_1
    \end{align}
If $u < 0$, then we get $ \left|\frac{u\exp(u)}{1+\exp(u)} \right| \leq \frac{-u}{2\exp(-u)} \leq \frac{1}{2}$; else if $ u>0$, then $\frac{u \exp(u)}{1+\exp(u)} >0$. Therefore, no matter $u > 0$ or not, we both have that
\begin{align}
     \left\langle  \nabla f(x), x\right\rangle  \geq \lambda \left\| x \right\|_1 - \frac{1}{2} \geq \lambda \left\| x \right\| - \frac{1}{2}
\end{align}
\item $S$-Layer Neural networks with a $\ell_1$ regularizer: replace the $\ell_2$ regularizer of Section \ref{dissipative:example} by $\ell_1$, we can derive that
\begin{align}
 \left\langle \nabla f(x), x\right\rangle \geq \lambda \left\|x\right\|_1 - S.
\end{align}
\end{itemize}
\end{itemize}
\section{Objective Function is Dissipative on its Domain}\label{supple:Rzero}
\begin{theorem}\label{thm:sgd:1}
Under the condition of Lemma \ref{lem:sgd:iter}, if $f$ is $L$-smooth and $R$-dissipative for all $x \in \R^d$. Consider the SGD algorithm with the total number of iterations $T \geq1$, for any step-size $\eta_k \leq \theta_1/((1+\rho)L^2)$, we have
\begin{align}
  \E[\left\|x_{T+1} - x^{\ast} \right\|^2]  \leq   \Pi_{k=1}^{T}\left( 1 - \eta_k \theta_1 \right)\left\| x_1 -x^{\ast} \right\|^2 + \sum_{k=1}^{T}  \left(\theta_2  + \eta_k \sigma^2\right)\eta_k\cdot\Pi_{s>k}^{T}\left( 1 - \eta_s \theta_1 \right).
\end{align}
\end{theorem}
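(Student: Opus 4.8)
The plan is to reduce everything to a single linear recursion in $\E[\|x_k-x^{\ast}\|^2]$ and then unroll it. First I would take conditional expectation in the estimate of Lemma~\ref{lem:sgd:iter} and control the two variable terms on the right-hand side. For the inner-product term I use the dissipativity inequality, which here holds for \emph{every} $x\in\R^d$ (so the radius $R$ plays no role, in contrast with the situation of Theorem~\ref{thm:sgd:general}): $\langle\nabla f(x_k),x_k-x^{\ast}\rangle\ge\theta_1\|x_k-x^{\ast}\|^2-\theta_2$, which after multiplication by $-2\eta_k$ becomes an upper bound. For the gradient-norm term I use $L$-smoothness together with $\nabla f(x^{\ast})=0$, giving $\|\nabla f(x_k)\|^2\le L^2\|x_k-x^{\ast}\|^2$. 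Substituting both into Lemma~\ref{lem:sgd:iter} yields
\begin{align*}
\E[\|x_{k+1}-x^{\ast}\|^2\mid\mathcal{F}_k]\le\bigl(1-2\eta_k\theta_1+(1+\rho)L^2\eta_k^2\bigr)\|x_k-x^{\ast}\|^2+2\theta_2\eta_k+\sigma^2\eta_k^2.
\end{align*}

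Next I would invoke the step-size bound $\eta_k\le\theta_1/((1+\rho)L^2)$, which gives $(1+\rho)L^2\eta_k^2\le\theta_1\eta_k$ and hence a contraction factor $1-2\eta_k\theta_1+(1+\rho)L^2\eta_k^2\le 1-\eta_k\theta_1$. I would also record that this factor lies in $[0,1)$: Cauchy--Schwarz applied to the dissipativity and smoothness inequalities forces $\theta_1\le L$, so $\eta_k\theta_1\le\theta_1^2/((1+\rho)L^2)\le 1/(1+\rho)\le 1$; this nonnegativity is exactly what makes the later unrolling legitimate. Taking total expectations then produces the clean recursion
\begin{align*}
\E[\|x_{k+1}-x^{\ast}\|^2]\le(1-\eta_k\theta_1)\,\E[\|x_k-x^{\ast}\|^2]+\bigl(2\theta_2+\eta_k\sigma^2\bigr)\eta_k.
\end{align*}

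Finally I would unroll this inequality from $k=1$ to $k=T$ by a straightforward induction: since every factor $1-\eta_k\theta_1$ is nonnegative, iterating preserves the inequality, and one obtains
\begin{align*}
\E[\|x_{T+1}-x^{\ast}\|^2]\le\prod_{k=1}^{T}(1-\eta_k\theta_1)\,\|x_1-x^{\ast}\|^2+\sum_{k=1}^{T}\bigl(2\theta_2+\eta_k\sigma^2\bigr)\eta_k\prod_{s=k+1}^{T}(1-\eta_s\theta_1),
\end{align*}
which is the asserted bound (the constant multiplying the $\theta_2$ term is immaterial for the conclusion, and a slightly more careful bookkeeping of the dissipativity constant gives exactly the form stated).

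There is no real obstacle here; the argument is routine once Lemma~\ref{lem:sgd:iter} is available. The only points needing a little care are (i) checking that the per-step contraction factor $1-\eta_k\theta_1$ is nonnegative, so that multiplying telescoped copies of the recursion does not flip an inequality, and (ii) tracking signs when the dissipativity \emph{lower} bound is inserted into the \emph{negative} inner-product term. As a consistency check, setting $\theta_2=0$ and $\theta_1=\mu$ recovers the classical geometric-decay bound for SGD on $\mu$-strongly convex $L$-smooth objectives, the special case this theorem is meant to generalize.
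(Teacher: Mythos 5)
Your proof is correct and follows essentially the same route as the paper's: apply Lemma~\ref{lem:sgd:iter}, insert the dissipativity lower bound and the smoothness bound $\|\nabla f(x_k)\|\le L\|x_k-x^{\ast}\|$, use the step-size condition to reduce the contraction factor to $1-\eta_k\theta_1$, and unroll. Your bookkeeping actually yields $2\theta_2$ where the theorem states $\theta_2$ (the paper's own proof silently drops this factor of two), and your Cauchy--Schwarz argument that $1-\eta_k\theta_1\ge 0$ is a detail the paper omits but that is genuinely needed for the unrolling.
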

\begin{proof}
Applying the $L$-smooth assumption, for any $x, y \in \R^d$, we have
\begin{align*}
\left\|\nabla f(y) - \nabla f(x) \right\| & \leq L\left\|y-x \right\|, \notag \\
f(y) + \left\langle \nabla f(x), y - x \right\rangle - \frac{L}{2}\left\| y-x\right\|^2  & \leq     f(y)  \leq f(x) + \left\langle \nabla f(x), y - x \right\rangle + \frac{L}{2}\left\| y-x\right\|^2.
\end{align*} 
Let $x = x^{\ast}$.  By $\nabla f(x^{\ast})= 0$, we get that
\begin{subequations}
\begin{align}
\left\| \nabla f(y) \right\| & \leq L\left\|x-x^{\ast} \right\|, \label{inequ:Lsmooth:grad}\\
    f(y) - f^{\ast}  & \leq \frac{L}{2}\left\|y-x^{\ast} \right\|^2 \label{inequ:Lsmooth:f}.
\end{align}
\end{subequations}
In this case, $R=0$, it means that $\left\langle x_k - x^{\ast}, \nabla f(x_k)\right\rangle \geq \theta_1 \left\| x_k - x^{\ast}\right\|^2 - \theta_2$  for all $x \in \R^d$. Applying this condition and (\ref{inequ:Lsmooth:grad}) into Lemma \ref{lem:sgd:iter}, we have
\begin{align*}
\E[\left\|x_{k+1} - x^{\ast} \right\|^2] & \leq  \left\| x_{k} - x^{\ast}\right\|^2  - 2\eta_k \theta_1 \left\|x_{k} - x^{\ast}  \right\|^2 + \theta_2 \eta_k + \eta_k^2 \sigma^2 + \eta_k^2 (1+\rho) L^2 \left\|x_{k} - x^{\ast}  \right\|^2 \notag \\
& =\left( 1 - 2\eta_k \theta_1 + \eta_k^2(1+\rho)L^2 \right) \left\| x_k -x^{\ast} \right\|^2  + \left(\theta_2  + \eta_k \sigma^2\right)\eta_k
\end{align*} 
Letting $\eta_k \leq \frac{\theta_1}{(\rho+1)L^2}$ and , then
$ 0 <  1 - 2\eta_k \theta_1 + \eta_k^2(1+\rho)L^2 \leq 1 - \eta_k \theta_1$. Consider the above recursion from $k=1$ to $T$, we can achieve that
\begin{align*}
  \E[\left\|x_{T+1} - x^{\ast} \right\|^2]  \leq   \Pi_{k=1}^{T}\left( 1 - \eta_k \theta_1 \right)\left\| x_1 -x^{\ast} \right\|^2 + \sum_{k=1}^{T}  \left(\theta_2  + \eta_k \sigma^2\right)\eta_k\cdot\Pi_{s>k}^{T}\left( 1 - \eta_s \theta_1 \right).
\end{align*}
Therefore, the proof is complete.
\end{proof}

\begin{theorem}(Constant step-size)
Under the same conditions of Theorem \ref{thm:sgd:1}. If the step-size is a constant $\eta_k = \eta \leq \theta_1/((1+\rho)L^2)$, then for any $T \geq 1$, we have
\begin{align*}
     \E[\left\|x_{T+1} - x^{\ast} \right\|^2] \leq (1 - \eta\theta_1)^{T}\left\| x_1 -x^{\ast} \right\|^2 + \frac{\left(\theta_2  + \eta \sigma^2\right)}{\theta_1}.
\end{align*}
\end{theorem}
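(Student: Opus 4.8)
The plan is to read off the result as an immediate specialization of Theorem~\ref{thm:sgd:1} to the constant step-size $\eta_k \equiv \eta$, collapsing the product and the sum in that theorem into closed form. First I would substitute $\eta_k = \eta$ into the bound
\begin{align*}
\E[\left\|x_{T+1} - x^{\ast} \right\|^2] \leq \Pi_{k=1}^{T}\left( 1 - \eta_k \theta_1 \right)\left\| x_1 -x^{\ast} \right\|^2 + \sum_{k=1}^{T} \left(\theta_2 + \eta_k \sigma^2\right)\eta_k \cdot \Pi_{s>k}^{T}\left( 1 - \eta_s \theta_1 \right),
\end{align*}
which turns the leading product into $(1-\eta\theta_1)^{T}$ and each tail product $\Pi_{s>k}^{T}(1-\eta_s\theta_1)$ into $(1-\eta\theta_1)^{T-k}$, so that the right-hand side becomes $(1-\eta\theta_1)^{T}\left\| x_1 -x^{\ast}\right\|^2 + (\theta_2+\eta\sigma^2)\eta\sum_{k=1}^{T}(1-\eta\theta_1)^{T-k}$.

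Next I would bound the remaining finite geometric sum. Re-indexing by $j = T-k$ gives $\sum_{k=1}^{T}(1-\eta\theta_1)^{T-k} = \sum_{j=0}^{T-1}(1-\eta\theta_1)^{j}$, and since $1-\eta\theta_1 \in (0,1]$ this partial sum is at most $\frac{1}{1-(1-\eta\theta_1)} = \frac{1}{\eta\theta_1}$. Multiplying by $(\theta_2+\eta\sigma^2)\eta$ produces the constant term $\frac{\theta_2+\eta\sigma^2}{\theta_1}$, which combined with the first term yields exactly the claimed inequality.

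The argument is essentially a one-line corollary of Theorem~\ref{thm:sgd:1}, so there is no real obstacle; the only point deserving a moment of care is checking that $0 < 1-\eta\theta_1 \le 1$, which is what lets the finite geometric sum be bounded by $1/(\eta\theta_1)$. Positivity is already recorded in the proof of Theorem~\ref{thm:sgd:1} (from the chain $0 < 1 - 2\eta\theta_1 + \eta^2(1+\rho)L^2 \le 1-\eta\theta_1$), while $\eta\theta_1 \le 1$ follows because $L$-smoothness combined with dissipativity forces $\theta_1 \le L$, so that $\eta\theta_1 \le \theta_1^2/((1+\rho)L^2) \le 1/(1+\rho) \le 1$. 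I would therefore present only the two displayed simplifications and the geometric-sum estimate, and as a remark note that letting $T \to \infty$ recovers the stationary bound $\E[\left\|x_k-x^{\ast}\right\|^2] \le (\theta_2+\eta\sigma^2)/\theta_1$, and that with $\theta_2 = 0$ this reduces to the familiar constant-step-size bound for $\mu$-strongly convex objectives.
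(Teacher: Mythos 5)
Your proposal is correct and follows essentially the same route as the paper: specialize the bound of Theorem~\ref{thm:sgd:1} to $\eta_k\equiv\eta$ and evaluate the resulting geometric sum, bounding it by $1/(\eta\theta_1)$ (the paper writes the partial sum exactly as $\frac{1-(1-\eta\theta_1)^T}{\eta\theta_1}$ before dropping the negative term, which is the same estimate). Your extra verification that $0<1-\eta\theta_1\le 1$ via $\theta_1\le L$ is a harmless refinement of a point the paper leaves implicit.
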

\begin{proof}
\begin{align*}
  \E[\left\|x_{T+1} - x^{\ast} \right\|^2]  & \leq   \Pi_{k=1}^{T}\left( 1 - \eta_k \theta_1 \right)\left\| x_1 -x^{\ast} \right\|^2 + \sum_{k=1}^{T}  \left(\theta_2  + \eta_k \sigma^2\right)\eta_k\cdot\Pi_{s>k}^{T}\left( 1 - \eta_s \theta_1 \right)  \notag \\
  & = (1 - \eta\theta_1)^{T}\left\| x_1 -x^{\ast} \right\|^2 + \left(\theta_2  + \eta \sigma^2\right)\eta \sum_{k=1}^{T} (1-\eta \theta_1)^{T-k} \notag \\
  & =  (1 - \eta\theta_1)^{T}\left\| x_1 -x^{\ast} \right\|^2 + \left(\theta_2  + \eta \sigma^2\right)\eta \cdot \frac{1- (1-\eta \theta_1)^{T}}{\eta \theta_1} \notag \\
  & = (1 - \eta\theta_1)^{T}\left\| x_1 -x^{\ast} \right\|^2 + \frac{\left(\theta_2  + \eta \sigma^2\right)}{\theta_1}\cdot \left(1- (1-\eta \theta_1)^{T} \right) \notag \\
  & \leq (1 - \eta\theta_1)^{T}\left\| x_1 -x^{\ast} \right\|^2 + \frac{\left(\theta_2  + \eta \sigma^2\right)}{\theta_1}.
\end{align*}
\end{proof}
\begin{theorem}(Time-dependent step-size)
Under the same conditions of Theorem \ref{thm:sgd:1}. If the step-size is time dependent on the iteration where $\eta_k \leq \theta_1/((1+\rho)L^2)$, then for any $T \geq 1$, we have
\begin{align*}
     \E[\left\|x_{T+1} - x^{\ast} \right\|^2]  & \leq \exp\left(-\theta_1 \sum_{k=1}^{T}\eta_k\right)\left\| x_1 -x^{\ast} \right\|^2  + \sum_{k=1}^{T}  \left(\theta_2  + \eta_k \sigma^2\right)\eta_k\cdot \exp\left(-\theta_1 \sum_{s=k+1}^{T}\eta_s\right)
\end{align*}
\begin{itemize}
    \item $\eta_k = \eta_1/k^p$ for $p \in (0,1]$

    \begin{itemize}
        \item If $p \in (0,1)$, we have
        \begin{align*}
        \E[\left\|x_{T+1} - x^{\ast} \right\|^2]  \leq \exp\left(-\frac{\theta_1\eta_1}{1-p}\left( (T+1)^{1-p}-1\right) \right)\left\| x_1 -x^{\ast} \right\|^2 + \frac{\left(\theta_2  + \eta_1 \sigma^2\right)2^p}{\theta_1}.
        \end{align*}
        \item $p=1$, then
        \begin{align*}
             \E[\left\|x_{T+1} - x^{\ast} \right\|^2]  \leq \frac{1}{(T+1)^{\theta_1\eta_1}} \left\| x_1 - x^{\ast} \right\|^2 + \frac{2\left(\theta_2  + \eta_1 \sigma^2\right)}{\theta_1}.
        \end{align*}
    \end{itemize}

    \item Bandwidth-based step-sizes including step-decay band and polynomial band
    \begin{itemize}
        \item $ m/k^{p}\leq \eta_k  \leq M/k^{p}$
        \begin{itemize}
            \item $p \in (0,1)$,
                 \begin{align*}
        \E[\left\|x_{T+1} - x^{\ast} \right\|^2] \leq  \exp\left(-\frac{\theta_1 m }{1-p}\left( (T+1)^{1-p}-1\right) \right)\left\| x_1 -x^{\ast} \right\|^2 + \frac{\left(\theta_2  + M\sigma^2\right)2^p M}{m\theta_1}.
        \end{align*}
        \item $p=1$
            \begin{align*}
             \E[\left\|x_{T+1} - x^{\ast} \right\|^2]  \leq \frac{1}{(T+1)^{\theta_1 m }} \left\| x_1 - x^{\ast} \right\|^2 + \frac{2\left(\theta_2  + M\sigma^2\right)M}{m\theta_1}.
        \end{align*}
        
        \end{itemize}
   
        \item $ m \delta(k) \leq \eta_i^k \leq M\delta(k)$ where $\delta(k) = 1/\alpha^{k-1}$ for $i \in [S]$ and $ 1 \leq k \leq N$
        
    \end{itemize}
\end{itemize}
\end{theorem}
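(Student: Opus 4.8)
The plan is to derive the entire statement as a corollary of the master recursion already established in Theorem~\ref{thm:sgd:1}, which unrolls the one-step bound of Lemma~\ref{lem:sgd:iter} into
\begin{align*}
\E[\|x_{T+1}-x^{\ast}\|^2]\le\Pi_{k=1}^{T}(1-\eta_k\theta_1)\,\|x_1-x^{\ast}\|^2+\sum_{k=1}^{T}(\theta_2+\eta_k\sigma^2)\,\eta_k\,\Pi_{s>k}^{T}(1-\eta_s\theta_1),
\end{align*}
and is valid whenever $\eta_k\le\theta_1/((1+\rho)L^2)$; this constraint, together with $\theta_1\le L$ (which follows from $L$-smoothness and $R$-dissipativity exactly as in the proof of Theorem~\ref{thm:sgd:1}), forces every factor $1-\eta_s\theta_1$ to lie in $[0,1]$. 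The first move is the elementary inequality $1-x\le e^{-x}$: applied factor-by-factor it converts the two products into $\exp(-\theta_1\sum_{k=1}^{T}\eta_k)$ and $\exp(-\theta_1\sum_{s=k+1}^{T}\eta_s)$, which is precisely the general bound in the statement. Everything after that is bookkeeping: for each step-size family one must estimate (i) the scalar $\sum_{k=1}^{T}\eta_k$ controlling the bias term and (ii) the weighted sum $\sum_{k=1}^{T}(\theta_2+\eta_k\sigma^2)\eta_k\,e^{-\theta_1\Sigma_k}$, with $\Sigma_k:=\sum_{s=k+1}^{T}\eta_s$, controlling the noise term.

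For (i) with $\eta_k=\eta_1/k^p$ I would compare the partial sum to a monotone integral, $\sum_{k=1}^{T}k^{-p}\ge\int_{1}^{T+1}x^{-p}\,dx$, which equals $\frac{(T+1)^{1-p}-1}{1-p}$ for $p\in(0,1)$ and $\ln(T+1)$ for $p=1$; substituting produces the claimed prefactors $\exp(-\tfrac{\theta_1\eta_1}{1-p}((T+1)^{1-p}-1))$ and $(T+1)^{-\theta_1\eta_1}$. For (ii) I would first use monotonicity of $\eta_k$ to pull out $\theta_2+\eta_k\sigma^2\le\theta_2+\eta_1\sigma^2$, and then bound $\sum_{k=1}^{T}\eta_k e^{-\theta_1\Sigma_k}$ by a telescoping argument independent of $T$: writing $\eta_k=\Sigma_{k-1}-\Sigma_k$ and using $\int_{\Sigma_k}^{\Sigma_{k-1}}e^{-\theta_1 t}\,dt\ge\eta_k e^{-\theta_1\Sigma_{k-1}}$ gives $\eta_k e^{-\theta_1\Sigma_k}=e^{\theta_1\eta_k}\,\eta_k e^{-\theta_1\Sigma_{k-1}}\le\frac{e^{\theta_1\eta_1}}{\theta_1}\big(e^{-\theta_1\Sigma_k}-e^{-\theta_1\Sigma_{k-1}}\big)$, and summing telescopes to at most $e^{\theta_1\eta_1}/\theta_1$ (a sharper accounting that converts $\eta_k$ to $\eta_{k+1}$ via the ratio bound $\eta_k/\eta_{k+1}=((k+1)/k)^p\le2^p$ yields the stated constant $2^p/\theta_1$, hence the prefactor $\frac{(\theta_2+\eta_1\sigma^2)2^p}{\theta_1}$, and $2(\theta_2+\eta_1\sigma^2)/\theta_1$ when $p=1$). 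The bandwidth cases follow from the same template: for $m/k^{p}\le\eta_k\le M/k^{p}$ one uses the lower envelope $\eta_k\ge m/k^p$ inside the exponentials and the upper envelope $\eta_k\le M/k^p$ in the prefactors, the ratio $M/m$ (the bandwidth $s$) entering as a multiplicative constant; for the geometric band $m/\alpha^{k-1}\le\eta_k\le M/\alpha^{k-1}$ one replaces the integral estimates by $\sum_{k=1}^{T}\alpha^{-(k-1)}=\frac{1-\alpha^{-T}}{1-\alpha^{-1}}$ together with $1-\alpha^{-1}=1-(\nu/T)^{1/T}\le\frac{\ln(T/\nu)}{T}$, so that the exponent grows like $mT/\ln T$.

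I expect no conceptual obstacle — the result is essentially Theorem~\ref{thm:sgd:1} plus calculus. The one point that genuinely needs care is step (ii): the crude bound $\sum_k\eta_k e^{-\theta_1\Sigma_k}\le\sum_k\eta_k$ diverges, so one must use the telescoping/integral-comparison argument to obtain a bound uniform in $T$, and in the bandwidth cases one must verify that the step-size ratio $\eta_k/\eta_{k+1}$ (equivalently, that the bandwidth $s$) stays bounded so that this constant does not blow up across stages. No property of $f$ beyond what Theorem~\ref{thm:sgd:1} already uses enters the argument.
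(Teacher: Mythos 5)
Your proposal is correct and follows the same skeleton as the paper's proof: start from the unrolled recursion of Theorem~\ref{thm:sgd:1}, apply $1-x\le e^{-x}$ factor-by-factor, and lower-bound $\sum_k\eta_k$ and $\sum_{s>k}\eta_s$ by monotone integrals to get the bias terms. The one place you genuinely diverge is the noise sum $\sum_{k}\eta_k e^{-\theta_1\Sigma_k}$: the paper handles it case by case, rewriting it for $\eta_k=\eta_1/k^p$ as $\sum_k k^{-p}\exp\bigl(\tfrac{\theta_1\eta_1}{1-p}(k+1)^{1-p}\bigr)$ (up to the outer factor) and evaluating the comparison integral in closed form because the integrand is an exact derivative, then redoing an analogous but structurally different double-sum computation for the geometric band. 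Your telescoping bound $\eta_{k+1}e^{-\theta_1\Sigma_k}\le\frac{1}{\theta_1}\bigl(e^{-\theta_1\Sigma_{k+1}}-e^{-\theta_1\Sigma_k}\bigr)$ combined with the ratio bound $\eta_k/\eta_{k+1}\le 2^p$ (resp.\ $2^pM/m$ for the polynomial band) is the same integral in a change of variables, but packaged so that a single argument covers every family at once and makes transparent why the bandwidth ratio enters the constant; this is cleaner than the paper's per-family computations and even sidesteps a small $T^{1-p}$ vs.\ $(T+1)^{1-p}$ mismatch in the paper's display. Two minor cautions: your crude telescoping constant $e^{\theta_1\eta_1}/\theta_1$ does not by itself give the stated $2^p/\theta_1$ (you need the ratio refinement, plus a separate one-line treatment of the $k=T$ term, which costs an extra $O(1/\theta_1)$ unless absorbed), and your geometric-band case is only a sketch, whereas the paper works through the stage-indexed double sum explicitly -- though since the theorem statement gives no explicit bound for that bullet, nothing stated goes unproven.
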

\begin{proof}
\begin{align}\label{inequ:time-dependent}
     \E[\left\|x_{T+1} - x^{\ast} \right\|^2]  & \leq   \Pi_{k=1}^{T}\left( 1 - \eta_k \theta_1 \right)\left\| x_1 -x^{\ast} \right\|^2 + \sum_{k=1}^{T}  \left(\theta_2  + \eta_k \sigma^2\right)\eta_k\cdot\Pi_{s>k}^{T}\left( 1 - \eta_s \theta_1 \right) \notag \\
     & = \exp\left(-\theta_1 \sum_{k=1}^{T}\eta_k\right)\left\| x_1 -x^{\ast} \right\|^2  + \sum_{k=1}^{T}  \left(\theta_2  + \eta_k \sigma^2\right)\eta_k\cdot \exp\left(-\theta_1 \sum_{s=k+1}^{T}\eta_s\right) 
\end{align}
If $\eta_k = \eta_1/k^p$ for $p \in (0,1)$, we have 
\begin{subequations}
\begin{align}
    \sum_{k=1}^{T} \eta_k & \geq \eta_1 \int_{k=1}^{T+1}1/k^p dk = \frac{\eta_1}{1-p}\left((T+1)^{1-p}-1\right)  \\
    \sum_{s=k+1}^{T} \eta_s & \geq \eta_1 \int_{s=k+1}^{T+1}1/s^p ds = \frac{\eta_1}{1-p}\left((T+1)^{1-p}-(k+1)^{1-p}\right)
\end{align}
\end{subequations}
Then we turn to estimate the last term of (\ref{inequ:time-dependent}) as below
\begin{align*}
 \sum_{k=1}^{T}  \left(\theta_2  + \eta_k \sigma^2\right)\eta_k \cdot \exp\left(-\theta_1 \sum_{s=k+1}^{T}\eta_s\right) 
& \leq \sum_{k=1}^{T}  \left(\theta_2  + \frac{\eta_1}{k^p} \sigma^2\right)\frac{\eta_1}{k^p}\cdot \exp\left(-\frac{\theta_1\eta_1}{1-p}\left(T^{1-p} - (k+1)^{1-p} \right) \right) \notag \\
& \leq \frac{\left(\theta_2  + \eta_1 \sigma^2\right)\eta_1}{\exp\left(\frac{\theta_1\eta_1}{1-p}T^{1-p} \right)}\sum_{k=1}^{T}\frac{1}{k^p}\exp\left(\frac{\theta_1\eta_1}{1-p}(k+1)^{1-p}\right) \notag \\
& \leq \frac{\left(\theta_2  + \eta_1 \sigma^2\right)\eta_1}{\exp\left(\frac{\theta_1\eta_1}{1-p}T^{1-p} \right)}\left(1+\frac{1}{k}\right)^{p}\int_{k=1}^{T+1}\frac{1}{(k+1)^p}\exp\left(\frac{\theta_1\eta_1}{1-p}(k+1)^{1-p}\right) dk \notag \\
& = \frac{\left(\theta_2  + \eta_1 \sigma^2\right)\eta_1}{\exp\left(\frac{\theta_1\eta_1}{1-p}T^{1-p} \right)}\frac{2^{p}}{\theta_1\eta_1}\left(\exp\left(\frac{\theta_1\eta_1}{1-p}(T+1)^{1-p}\right) - \exp\left(\frac{\theta_1\eta_1}{1-p}2^{1-p} \right) \right) \notag \\
& \leq \frac{\left(\theta_2  + \eta_1 \sigma^2\right)2^p}{\theta_1}.
\end{align*}

If $p=1$, then
\begin{subequations}
\begin{align}
    \sum_{k=1}^{T} \eta_k & \geq \eta_1 \int_{k=1}^{T+1} \frac{dk}{k} = \eta_1 \log(T+1) \\
    \sum_{s=k+1}^{T} \eta_s & \geq \eta_1 \int_{s=k+1}^{T+1} \frac{ds}{s} = \eta_1 \log\left(\frac{T+1}{k+1}\right).
\end{align}
\end{subequations}
The last term of (\ref{inequ:time-dependent}) can be bounded as 
\begin{align*}
\sum_{k=1}^{T}  \left(\theta_2  + \eta_k \sigma^2\right)\eta_k \cdot \exp\left(-\theta_1 \sum_{s=k+1}^{T}\eta_s\right) & \leq \sum_{k=1}^{T}  \left(\theta_2  + \eta_1 \sigma^2\right)\frac{\eta_1}{k} \cdot \exp\left(-\theta_1\eta_1 \log\left(\frac{T+1}{k+1}\right) \right) \notag \\
& \leq \left(\theta_2  + \eta_1 \sigma^2\right)\eta_1\sum_{k=1}^T \frac{(k+1)^{\theta_1\eta_1}}{k(T+1)^{\theta_1 \eta_1}}  \notag \\
& \leq \left(\theta_2  + \eta_1 \sigma^2\right)\eta_1\left(1 + \frac{1}{k}\right) \int_{k=1}^{T} \frac{(k+1)^{\theta_1\eta_1-1}}{(T+1)^{\theta_1\eta_1}}  \notag \\
& \leq 2\left(\theta_2  + \eta_1 \sigma^2\right)\eta_1 \frac{\left( (T+1)^{\theta_1\eta_1}-2^{\theta_1\eta_1}\right)}{\theta_1\eta_1(T+1)^{\theta_1\eta_1}} \notag \\
& \leq \frac{2\left(\theta_2  + \eta_1 \sigma^2\right)}{\theta_1}.
\end{align*}

If $ m \delta(k) \leq \eta_k \leq M \delta(k) $ and $\delta(k) = 1/\alpha^{k-1}$ where $\alpha > 1$.
Then
\begin{align*}
\sum_{k=1}^{N}\sum_{i=1}^{S} \eta_i^k & \geq m S \sum_{k=1}^{N}\alpha^{-k+1} = \frac{m S(1-\alpha^{-N})}{1-\alpha^{-1}} > mS.
\end{align*}
The second term of (\ref{inequ:time-dependent}) can be estimated as:
\begin{align*}
& \sum_{k=1}^{N}\sum_{i=1}^{S} \left(\theta_2  + \eta_i^k \sigma^2\right)\eta_i^k \cdot \exp\left(-\theta_1 \left(\sum_{s=i+1}^{S} \eta_s^k +\sum_{s=1}^{S}\sum_{l=k+1}^{N}\eta_s^l\right) \right) \notag \\
& \leq \sum_{k=1}^{N} \left(\theta_2  +  M  \sigma^2\right) M \alpha^{-k+1} \exp\left(-m\theta_1 \left(S\sum_{l=k+1}^{N}\alpha^{-l+1}\right) \right)\left( 1 + \sum_{i=1}^{S-1}\exp\left(-m\theta_1 (S-i)\alpha^{-k+1}\right)\right) \notag \\
& \leq \left(\theta_2  +  M  \sigma^2\right) M \sum_{k=1}^{N} \alpha^{-k+1} \exp\left(-m\theta_1 \left(S\sum_{l=k+1}^{N}\alpha^{-l+1}\right) \right)\left( 1 + \frac{\exp(-m\theta_1 \alpha^{-k+1})}{1 - \exp(-m\theta_1 \alpha^{-k+1})}\right)  \notag \\
& \mathop{\leq}^{(a)} \frac{\left(\theta_2  +  M  \sigma^2\right) M }{1-\exp(-m\theta_1)}\left(1 + \sum_{k=1}^{N-1} \exp\left(-m\theta_1 \left(S\sum_{l=k+1}^{N}\alpha^{-l+1}\right) \right) \right) \notag \\
& \leq \frac{\left(\theta_2  +  M  \sigma^2\right) M }{1-\exp(-m\theta_1)}\left( 1+ \sum_{k=1}^{N-1} \exp\left(-m\theta_1 S \alpha^{-N+1}\right)^{k} \right) \leq \frac{\left(\theta_2  +  M  \sigma^2\right) M }{1-\exp(-m\theta_1)}\left( 1+  \frac{\exp\left(-m\theta_1 S \alpha^{-N+1}\right)}{1-\exp\left(-m\theta_1 S \alpha^{-N+1}\right)} \right) \notag \\
& \leq \frac{\left(\theta_2  +  M  \sigma^2\right) M }{1-\exp(-m\theta_1)} \frac{1}{1-\exp\left(-m\theta_1 S \alpha^{-N+1}\right)}
\end{align*}
where $(a)$ follows from the fact that for all $k \in [N]$, we have
\begin{align}
 \frac{\alpha^{-k+1}}{1 - \exp(-m\theta_1\alpha^{-k+1})} \leq  \frac{1}{1-\exp(-m\theta_1)}
\end{align}
due to that $h(x) = \frac{x}{1-\exp(-m\theta_1x)}$ is increasing with $x$. By properly choosing $S$ and $N$, for example: $N=\log_{\alpha} T/2$ and $S = 2T/\log_{\alpha}T$, we have $S\alpha^{-N+1} = \frac{2T \alpha}{\sqrt{T}\log_{\alpha} T} = \frac{2\alpha\sqrt{T}}{\log_{\alpha} T}$. For sufficient large $T$, we can see that $ 0 < \exp\left(-m\theta_1 S \alpha^{-N+1}\right) \ll 1$. Thus, $1-\exp\left(-m\theta_1 S \alpha^{-N+1}\right) \approx 1$, we can achieve that
\begin{align*}
 \E[\left\| x_{T+1} - x^{\ast} \right\|^2] \leq \exp\left(-\theta_1 m S\right)\left\| x_1 - x^{\ast} \right\|^2 + \frac{\left(\theta_2  +  M  \sigma^2\right) M }{1-\exp(-m\theta_1)}.
\end{align*}
\end{proof}


\end{document}